\documentclass{article}

\usepackage{fullpage}
\usepackage[utf8]{inputenc} % allow utf-8 input
\usepackage[T1]{fontenc}    % use 8-bit T1 fonts
\usepackage{hyperref}       % hyperlinks
\usepackage{url}            % simple URL typesetting
\usepackage{booktabs}       % professional-quality tables
\usepackage{amsfonts}       % blackboard math symbols
\usepackage{nicefrac}       % compact symbols for 1/2, etc.
\usepackage{microtype}      % microtypography
\usepackage{xcolor}         % colors

\usepackage{natbib}

\usepackage{amsmath}
\usepackage{amssymb}
\usepackage{amsthm}
\usepackage{mathtools}
\usepackage{graphicx}
\usepackage{upgreek}
\usepackage{enumitem}

\usepackage{nicefrac}

\usepackage{algorithm}
\usepackage{algpseudocode}
\usepackage{placeins}
\usepackage{wrapfig}

\newcommand{\iid}{\stackrel{\mathrm{iid}}{\sim}}

\newcommand{\ind}{\perp\!\!\!\perp}

\newtheorem{theorem}{Theorem}
\newtheorem{corollary}{Corollary}
\newtheorem{lemma}{Lemma}
\newtheorem{proposition}{Proposition}

\theoremstyle{definition}

\theoremstyle{remark}
\newtheorem{remark}{Remark}

\DeclareMathOperator*{\argmin}{arg\,min}

\definecolor{ggplot1}{HTML}{E41A1C}
\definecolor{ggplot2}{HTML}{377EB8}

\newcommand{\safeincludegraphics}[2][]{%
  \IfFileExists{#2}{\includegraphics[#1]{#2}}{%
    \fbox{\begin{minipage}[c][0.28\textheight][c]{0.92\linewidth}
      \centering Missing figure file:\\[0.5ex]{\scriptsize\nolinkurl{#2}}
    \end{minipage}}%
  }%
}

\renewcommand{\theHfigure}{main.\arabic{figure}}
\renewcommand{\theHtable}{main.\arabic{table}}

\title{When Does Trimming Help Conformal Prediction?\\
  A Retained-Law Diagnostic under Calibration Contamination
}

\author{%
  Congye Wang \\
  \small Newcastle University \\
  \small \texttt{C.Wang2@ncl.ac.uk}
}

\begin{document}

\maketitle

\begin{abstract}
  Trimming suspicious calibration points is a common response to contamination in conformal prediction. Its effect on clean-target coverage, however, is governed by the retained law induced by trimming, not by the contamination level alone. We analyse fixed-threshold trimming as conditioning rather than purification. It replaces the contaminated calibration law with a retained law, reducing clean-target coverage to a one-dimensional score-CDF transfer problem with an exact finite-sample identity. A componentwise bound on the transfer gap gives a population-level diagnostic. This separates a clean-side covariance cost from a retained-contamination cost, governed by the dirty-to-clean retention ratio. Trimming helps when the anomaly score separates retention probabilities while remaining score-neutral on the clean population. Otherwise, it cannot substantially reduce contamination through the retained mixture coefficient. We also give finite-sample certificate templates that provide numerical guarantees under independent audit.
\end{abstract}

\section{Introduction}
Trimming suspicious calibration points is a common response to contamination. A practitioner fits an anomaly score, such as an isolation, density, Stein, or geometric outlier score. Points with large anomaly values are then discarded, and ordinary split conformal prediction is applied to the retained sample. This procedure is intuitive and often believed to restore validity under calibration contamination. However, its effect on coverage depends on the retained law induced by trimming, as well as on properties of the anomaly score that cannot be inferred from the contamination level alone.

The difficulty is calibration--test exchangeability. Ordinary split conformal prediction is valid because, conditional on the fitted predictor and the nonconformity score, the calibration scores and the test score are exchangeable \citep{vovk2005algorithmic, shafer2008tutorial, lei2018distributionfree, angelopoulos2021gentle}. Under calibration contamination, this exchangeability fails. The calibration sample is drawn from $P^\star = (1-\varepsilon)P + \varepsilon Q$, while the test point is drawn from the clean law $P$. Thus, calibration observations remain exchangeable among themselves, but not jointly exchangeable with the test point.

Trimming does not repair this mismatch. With a fixed threshold $t^\star$, it replaces $P^\star$ by the retained law
\begin{equation}
R = P^\star(\cdot \mid S \le t^\star),
\label{eq:retained-law}
\end{equation}
which is generally neither $P^\star$ nor $P$. Conditional on the retained count, the retained observations are i.i.d. from $R$. Hence, conformal rank validity is exact only for a test point drawn from $R$, not from $P$. Clean-target validity is therefore a transfer problem from $R$ back to $P$.

We turn this reframing into a population-level diagnostic for when trimming pays off. The transfer loss reduces to the supremum CDF gap between $R$ and $P$ under the nonconformity score. It also admits a finite-sample lower bound that is sharp within the scalar-discrepancy information class. Substituting the retained-law mixture decomposition into this scalar loss gives a componentwise upper bound with two interpretable terms. The first is the clean-side cost of trimming. When $S$ depends on the nonconformity score $A$ under the clean law, conditioning on $\{S \le t^\star\}$ distorts the calibration distribution even without contamination. We show that this cost is exactly a one-sided covariance between the retention event and the score CDF under $P$. This formalises the design principle that a good anomaly score should be score-neutral on the clean population. The second term is the retained-contamination cost. Its coefficient is governed by the dirty-to-clean retention ratio. When the score separates clean and contaminated points, this coefficient can be made much smaller than $\varepsilon$. When the retention probabilities coincide, trimming cannot reduce contamination through the mixture coefficient. A favourable retained dirty score law is then only a separate, secondary mechanism. These two mechanisms organise both the theory and the experiments, through a score-visible regime and a no-separation regime.

This population diagnostic becomes a numerical guarantee only when its ingredients are bounded by independent information. We therefore complement it with finite-sample certificate templates. These templates provide coverage guarantees under independent audit, either through componentwise auditing of the mixture decomposition or through direct binomial inversion on an independent clean audit sample. We treat same-sample threshold selection separately, using a finite-grid empirical-CDF bound.

\section{Background}
\label{sec:background}
\emph{Split conformal prediction.} We work in the standard split conformal setting. Let $Z = (X, Y) \in \mathcal{Z}$, and let $A : \mathcal{Z} \to \mathbb{R}$ be a nonconformity score. The score is constructed using data independent of the final calibration set. The split-conformal prediction set at level $\alpha$ is
\begin{equation*}
    C_\alpha(x) = \{y : A(x, y) \le \hat\tau_\alpha\},
\end{equation*}
where $\hat\tau_\alpha$ is the inclusive empirical $(1-\alpha)$-quantile of the calibration scores. Once the model-fitting stage is fixed, validity reduces to a one-dimensional rank argument. The calibration scores and the test score are exchangeable. Hence, the resulting set covers $Y$ with probability at least $1-\alpha$ \citep{vovk2005algorithmic, shafer2008tutorial, lei2018distributionfree, angelopoulos2021gentle}. Regression-oriented variants, such as conformalised quantile regression, modify the score construction. They nevertheless preserve the same calibration-quantile logic \citep{romano2019conformalized}. Other work seeks localised, weighted, or conditional guarantees, rather than the marginal clean-test guarantee considered here \citep{guan2023localized, gibbs2025conditional, hore2025localweights}. A growing literature studies failures of exchangeability caused by exogenous test-time distribution shift. Examples include covariate shift, label shift, time-series and online shift, non-exchangeability, and distributional robustness \citep{tibshirani2019conformal, podkopaev2021labelshift, gibbs2021adaptive, gibbs2024online, barber2023beyond, oliveira2024split, qiu2023prediction, yang2024doubly, zaffran2022adaptive, bhatnagar2023improved, aolaritei2025levy}. We study a different breakdown. It occurs on the calibration side and is induced endogenously by the practitioner’s preprocessing.

\emph{Contamination model.} The calibration observations are i.i.d. from
$P^\star = (1-\varepsilon) P + \varepsilon Q$,
where $Q$ is an unknown contaminating law and $\varepsilon\in[0,1)$ is the contamination level. The test point retains the clean law $P$. This is the classical Huber $\varepsilon$-contamination model \citep{huber1964robust}, applied at the calibration stage. The contamination is non-adaptive. The law $Q$ may be arbitrary, but it is not chosen with knowledge of the realised calibration sample or the trimming rule. The closest existing analysis is \citet{clarkson2024contamination}. They derive worst-case robust quantile corrections, which inflate the conformal threshold to absorb arbitrary $Q$. Their guarantee is uniform over $Q$. In contrast, our diagnostic conditions on the practitioner’s chosen anomaly score and threshold. Their question is how much to inflate to be safe against any $Q$; ours is when trimming helps, given that the practitioner is already trimming. Adjacent work studies robust conformal validation under bounded corruption \citep{bashari2025robust, cauchois2024robustvalidation, cauchois2024weak}, local or cell-wise outlier models \citep{peng2025cellwise}, and label-noise corrections for classification \citep{romano2020classification, einbinder2024labelnoise, sesia2025adaptive, penso2024score, bortolotti2025noise, xi2025exploring}. These target distinct corruption structures. Label-noise work, in particular, corrupts $Y$ given $X$ but preserves $X$. Our model imposes neither restriction. It isolates how trimming interacts with the conformal quantile itself.

\emph{Trimming.} To trim, we introduce a second score $S : \mathcal{Z} \to \mathbb{R}$ for filtering. We also choose a threshold $t^\star \in \overline{\mathbb{R}} := \mathbb{R} \cup \{+\infty\}$. Calibration points are retained when $S(Z) \le t^\star$. Trimming has a long tradition in robust statistics, from Tukey’s trimmed means to Huber’s influence-function analysis \citep{huber1964robust}. In that setting, the target is an estimator, and trimming limits the influence of contaminating mass. Here, we study trimming as a transformation of the calibration law. The target is the conformal quantile. Trimming acts by conditioning the calibration law on a retention event, not by purifying it. The threshold is fixed after conditioning on auxiliary information. This may be a clean reference split, an independent tuning split, or external domain knowledge. The calibration sample therefore remains i.i.d. and independent of $t^\star$. Same-sample threshold selection is treated separately in Section~\ref{sec:method}.

\emph{Choice of anomaly score.} We keep the score $S$ general. It may be covariate-only, response-dependent, or externally supplied. The retained-law theorem treats it as an arbitrary fixed measurable function. In practice, the choice of $S$ matters. Response-dependent filtering often couples retention with the nonconformity score under the clean law. This can inflate the clean-side distortion identified in Section~\ref{sec:method}. Our operational examples therefore favour covariate-geometric scores. For these scores, the clean distortion and the contamination-removal effect can be diagnosed separately. Section~\ref{sec:method} formalises the central object of this paper, the retained law
\begin{equation*}
    R = P^{\star}(\cdot \mid S \le t^{\star}).
\end{equation*}
It then develops the diagnostic and certificate routes introduced above.

\section{Methodology}
\label{sec:method}
We work conditionally on an auxiliary sigma-field $\mathcal{G}$. This captures the realised model-fitting and design stages. The fitted predictor, the nonconformity score $A$, the anomaly score $S$, and the trimming threshold $t^\star \in \overline{\mathbb{R}}$ are all $\mathcal{G}$-measurable.

The final calibration sample is i.i.d. from the contaminated law

\begin{equation*}
    P^\star = (1-\varepsilon)P + \varepsilon Q,
\end{equation*}
with $\varepsilon\in[0,1)$. The clean test point is independent of the calibration sample and has law $P$.

All statements below are marginal in the clean test point and conditional on $\mathcal{G}$. We do not claim conditional, subgroup, or adversarial-contamination validity. Throughout, we distinguish population-level diagnostics from certificate templates. The latter yield numerical guarantees only when their ingredients are bounded by independent audit information.

\subsection{Setup and the retained law}
\label{subsec:trimmed-setup}

Given a fixed threshold $t^\star$, the trimmed procedure keeps calibration points with $S(Z_i) \le t^\star$. It then computes the inclusive $(1-\alpha)$-quantile of their nonconformity scores. Let $N_{\rm keep}$ denote the retained count. We set $\hat\tau_\alpha = +\infty$ if $N_{\rm keep} = 0$ or if
\begin{equation*}
    \lceil(N_{\rm keep}+1)(1-\alpha)\rceil = N_{\rm keep}+1.
\end{equation*}
Otherwise, $\hat\tau_\alpha$ is the $\lceil(N_{\rm keep}+1)(1-\alpha)\rceil$-th retained order statistic. The prediction set is
\begin{equation*}
    C_\alpha(x) = \{y : A(x,y) \le \hat\tau_\alpha\},
\end{equation*}
with $F_M^A(+\infty)=1$. The full algorithm is given in Appendix~\ref{app:algorithm}.

The central object of our analysis is the \emph{retained law}
\begin{equation}
  R = P^\star(\,\cdot \mid S \le t^\star\,).
  \label{eq:retained-law-formal}
\end{equation}
With clean and dirty retention probabilities $p_c = P(S \le t^\star)$ and $p_d = Q(S \le t^\star)$, and $P_{\rm keep}, Q_{\rm keep}$ the corresponding conditional laws when defined, Bayes' rule gives
\begin{equation}
  R \;=\; (1 - \tilde\varepsilon_\star)\,P_{\rm keep} \;+\; \tilde\varepsilon_\star\, Q_{\rm keep},
  \qquad
  \tilde\varepsilon_\star \;=\; \frac{\varepsilon p_d}{(1-\varepsilon)p_c + \varepsilon p_d}.
  \label{eq:retained-law-decomposition}
\end{equation}
We assume $\mu_{\rm keep} := (1-\varepsilon)p_c + \varepsilon p_d > 0$. If $p_d=0$, $Q_{\rm keep}$ may be defined arbitrarily, and $\tilde\varepsilon_\star D_{Q,+}$ is interpreted as zero. If $p_c=0$, the direct gap $d_{R\to P,+}^A$ remains meaningful, but componentwise quantities involving $P_{\rm keep}$ are not used. The \emph{retained mixture coefficient} $\tilde\varepsilon_\star$ is not itself a measure of harmful contamination. Its effect on undercoverage is mediated by the product $\tilde\varepsilon_\star D_{Q,+}$.

Three score-law distances control the analysis. The first is the retained-to-clean discrepancy,
\begin{equation*}
    d_{R \to P,+}^A = \sup_t(F_R^A(t) - F_P^A(t))_{+},
\end{equation*}
which governs clean-coverage transfer. The second is the clean trimming distortion,
\begin{equation*}
    \Delta_{\rm trim,+}^{A} = \sup_t(F_{P_{\rm keep}}^A(t) - F_P^A(t))_{+},
\end{equation*}
which measures how filtering perturbs the clean score CDF. The third is the retained dirty discrepancy,
\begin{equation*}
    D_{Q,+} = \sup_t(F_{Q_{\rm keep}}^A(t) - F_P^A(t))_+,
\end{equation*}
which measures how the retained contaminating component differs from the clean target.

Two-sided and downward variants are defined analogously. They are used for upper-coverage diagnostics in Appendix~\ref{app:upper-bound}.

\subsection{Retained-law transfer and finite-sample sharpness}
\label{subsec:exact-scalar-mixture}

Conditional on $N_{\rm keep}=n$, the retained calibration scores are i.i.d.\ from $R$. Hence the trimmed cutoff is exchangeable with an independent test score from the retained law. This gives the standard rank guarantee under $R$. It then transfers to the clean target through the score-CDF gap.

\begin{proposition}[Retained-law rank validity]
  \label{prop:retained-law-transfer}
  Under the setup of Section~\ref{subsec:trimmed-setup} with $\mu_{\rm keep} > 0$, $\mathbb{E}\{F_R^A(\hat\tau_\alpha) \mid \mathcal{G}\} \ge 1 - \alpha$, and for any independent target law $P_0$,
  \[
    \mathbb{P}_{Z_{\rm new} \sim P_0}\{A(Z_{\rm new}) \le \hat\tau_\alpha \mid \mathcal{G}\}
    \;\ge\; 1 - \alpha - \sup_t\bigl(F_R^A(t) - F_{P_0}^A(t)\bigr)_+.
  \]
\end{proposition}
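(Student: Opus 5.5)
We prove the two claims in order: first rank validity under $R$ conditional on the retained count, then transfer to $P_0$ pointwise in the cutoff.

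\emph{Step 1 (conditioning on the retained count).} Work conditionally on $\mathcal{G}$, so $A$, $S$ and $t^\star$ are fixed. Each calibration point is retained independently with probability $\mu_{\rm keep}>0$. By a standard thinning argument, conditional on $N_{\rm keep}=n$ the retained points are i.i.d.\ from $R=P^\star(\cdot\mid S\le t^\star)$. To verify this, factor the joint law of the indicators $\mathbf{1}\{S(Z_i)\le t^\star\}$ and the retained values over the i.i.d.\ draws.

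\emph{Step 2 (rank guarantee under $R$).} Fix $n\ge 1$. Draw an auxiliary $Z'\sim R$ independent of the retained sample; this is only an analysis device. The $n+1$ scores $A(Z'),A(Z_{(1)}),\dots,A(Z_{(n)})$ are i.i.d., hence exchangeable. The standard split-conformal argument, which allows ties because the quantile is inclusive, gives $\mathbb{P}\{A(Z')\le\hat\tau_\alpha\mid N_{\rm keep}=n,\mathcal{G}\}\ge 1-\alpha$. This covers the case $\lceil (n+1)(1-\alpha)\rceil = n+1$, where $\hat\tau_\alpha=+\infty$ and coverage is trivial. When $n=0$, $\hat\tau_\alpha=+\infty$ and the bound holds since $F^A_R(+\infty)=1$.

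Since $Z'$ is independent of the sample, conditioning on the sample and using Fubini gives $\mathbb{P}\{A(Z')\le\hat\tau_\alpha\mid\text{sample}\}=F_R^A(\hat\tau_\alpha)$. Averaging over $N_{\rm keep}$ then yields $\mathbb{E}\{F_R^A(\hat\tau_\alpha)\mid\mathcal{G}\}\ge 1-\alpha$.

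\emph{Step 3 (transfer).} For $Z_{\rm new}\sim P_0$ independent of the calibration sample, Fubini again gives $\mathbb{P}\{A(Z_{\rm new})\le\hat\tau_\alpha\mid\mathcal{G}\}=\mathbb{E}\{F_{P_0}^A(\hat\tau_\alpha)\mid\mathcal{G}\}$. Pointwise in the realized cutoff,
\[
F_{P_0}^A(\hat\tau_\alpha)\ge F_R^A(\hat\tau_\alpha)-\sup_t\bigl(F_R^A(t)-F_{P_0}^A(t)\bigr)_+ .
\]
At $\hat\tau_\alpha=+\infty$ both CDFs equal $1$, so the inequality holds trivially there. Taking conditional expectations and applying Step 2 gives the claimed bound.

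\emph{Main obstacle.} The only delicate point is Step 1. We must check rigorously that conditioning on $N_{\rm keep}=n$ leaves the retained points i.i.d.\ $R$ without any dependence on their positions. We also need measurability of $\hat\tau_\alpha$ conditional on $\mathcal{G}$, with $t^\star$ being $\mathcal{G}$-measurable and independent of the sample. Both follow from the product structure of the i.i.d.\ calibration law together with the symmetry of order statistics.
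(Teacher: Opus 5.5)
Your proposal is correct and follows essentially the same route as the paper. Both use thinning to obtain $R^{\otimes n}$ conditional on $N_{\rm keep}=n$, then the exchangeability rank bound under $R$ averaged over $N_{\rm keep}$, then a pointwise CDF comparison at the cutoff (including $\hat\tau_\alpha=+\infty$). The one difference is tie handling: you cite the standard inclusive-quantile argument, whereas the paper attaches uniform tie-breakers and proves $\{\widetilde R_0\le r_n\}\subseteq\{A(W_0)\le\hat\tau_\alpha\}$ explicitly.
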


The marginal clean coverage has an exact finite-sample identity. Let
\begin{equation*}
    Q_M^A(u) := \inf\{a\in\mathbb R : F_M^A(a)\ge u\}
\end{equation*}
denote the lower generalised quantile of the score law $M$.

\begin{theorem}[Exact finite-sample marginal coverage identity]
  \label{thm:fixed-threshold-trimmed-coverage}
  Under the setup of Section~\ref{subsec:trimmed-setup} with $\mu_{\rm keep} > 0$, let $r_n = \lceil(n+1)(1-\alpha)\rceil$. Almost surely in $\mathcal{G}$,
  \begin{equation}
    \label{eq:exact-clean-coverage-identity}
    \mathbb{P}\{Y_{\rm new} \in C_\alpha(X_{\rm new}) \mid \mathcal{G}\}
    \;=\;
    \sum_{n=0}^{m} \binom{m}{n}\mu_{\rm keep}^n(1-\mu_{\rm keep})^{m-n}\Psi_n(P, R),
  \end{equation}
  where $\Psi_n(P, R) = 1$ if $r_n = n+1$, and $\Psi_n(P, R) = \mathbb{E}\bigl[F_P^A\{Q_R^A(B_{r_n, n+1-r_n})\}\bigr]$ otherwise, with $B_{r_n, n+1-r_n} \sim \mathrm{Beta}(r_n, n+1-r_n)$. The same identity holds for any independent target $P_0$ in place of $P$. The lower generalised inverse convention does not require continuity of $F_R^A$; see Appendix~\ref{app:technical-remarks}, Lemma~\ref{lem:generalized-quantile-order},
for the atom-handling details.
\end{theorem}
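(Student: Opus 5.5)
The plan is to condition successively on $\mathcal{G}$, on the retained count $N_{\rm keep}$, and then on the retained scores, and to evaluate the clean coverage exactly at each stage. Write $m$ for the calibration size. Conditional on $\mathcal{G}$, the retention indicators $\mathbf{1}\{S(Z_i)\le t^\star\}$ are i.i.d.\ Bernoulli with success probability $P^\star(S\le t^\star)=(1-\varepsilon)p_c+\varepsilon p_d=\mu_{\rm keep}$. Hence $N_{\rm keep}\sim\mathrm{Binomial}(m,\mu_{\rm keep})$, which produces the binomial weights in \eqref{eq:exact-clean-coverage-identity}. Because the calibration sample is i.i.d.\ and $t^\star$ is $\mathcal{G}$-measurable, conditioning on the set of retained indices with $|\,\cdot\,|=n$ makes the retained observations i.i.d.\ from $R$ in \eqref{eq:retained-law-formal}. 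Their nonconformity scores $A_1,\dots,A_n$ are therefore i.i.d.\ with CDF $F_R^A$. The test point is independent of the calibration sample, so by the law of total probability
\[
\mathbb{P}\{Y_{\rm new}\in C_\alpha(X_{\rm new})\mid\mathcal{G}\}=\sum_{n=0}^m\binom{m}{n}\mu_{\rm keep}^n(1-\mu_{\rm keep})^{m-n}\,\mathbb{E}\bigl[F_P^A(\hat\tau_\alpha)\mid N_{\rm keep}=n,\mathcal{G}\bigr].
\]
This uses the convention $F_P^A(+\infty)=1$.

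Next I identify each conditional term with $\Psi_n(P,R)$. If $n=0$ or $r_n=n+1$, the definition sets $\hat\tau_\alpha=+\infty$, so the term equals $1$. For $n=0$ we have $r_0=\lceil 1-\alpha\rceil=1=n+1$ whenever $\alpha<1$, so this case is consistent with $\Psi_0=1$. Otherwise $1\le r_n\le n$, and $\hat\tau_\alpha=A_{(r_n)}$, the $r_n$-th order statistic of $n$ i.i.d.\ draws from $F_R^A$. I use the quantile-transform representation: let $U_1,\dots,U_n$ be i.i.d.\ $\mathrm{Unif}(0,1)$. Then $(Q_R^A(U_i))_i$ has the same joint law as $(A_i)_i$, since $Q_R^A(u)\le a\iff u\le F_R^A(a)$ for the lower generalised inverse. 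This is exactly where Lemma~\ref{lem:generalized-quantile-order} handles atoms. Because $Q_R^A$ is nondecreasing, the order statistics map monotonically, so $A_{(r_n)}\overset{d}{=}Q_R^A(U_{(r_n)})$. Ties are harmless because a nondecreasing map preserves the $r_n$-th smallest value even when it is not injective. Finally $U_{(r_n)}\sim\mathrm{Beta}(r_n,n+1-r_n)$, which gives $\mathbb{E}[F_P^A(\hat\tau_\alpha)\mid N_{\rm keep}=n,\mathcal{G}]=\mathbb{E}[F_P^A\{Q_R^A(B_{r_n,n+1-r_n})\}]=\Psi_n(P,R)$.

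Nothing in this argument uses properties of $P$ beyond the independence of the test point from the calibration sample. So replacing $P$ by any independent $P_0$ changes only the outer $F_{P_0}^A$, and the identity holds for $P_0$ as well. The ``almost surely in $\mathcal{G}$'' qualifier comes from working with regular conditional distributions given $\mathcal{G}$.

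The main obstacle is making the conditioning rigorous. First, conditional on the set of retained indices, the retained observations must be i.i.d.\ $R$ and independent of everything else. This is a product-measure factorisation over indices, which I would verify by computing the joint law on cylinder sets. Second, the monotone-map argument for order statistics must hold when $F_R^A$ has atoms. I would use the pointwise identity $\{Q_R^A(U)\le a\}=\{U\le F_R^A(a)\}$, which gives equality of the law of each order statistic via $\mathbb{P}(A_{(r)}\le a)=\mathbb{P}(\mathrm{Bin}(n,F_R^A(a))\ge r)=\mathbb{P}(U_{(r)}\le F_R^A(a))$. This avoids any continuity assumption.
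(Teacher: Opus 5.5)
Your proposal is correct and follows the paper's proof essentially step for step. Both arguments use the same chain:
\begin{itemize}
\item the retained count satisfies $N_{\rm keep}\sim\mathrm{Binomial}(m,\mu_{\rm keep})$;
\item independent thinning makes the retained points i.i.d.\ from $R$ given $N_{\rm keep}=n$;
\item the quantile-transform representation with monotone commutation $A_{(r_n)}\overset{d}{=}Q_R^A(U_{(r_n)})$ (Lemma~\ref{lem:generalized-quantile-order}) and $U_{(r_n)}\sim\mathrm{Beta}(r_n,n+1-r_n)$ give the law of the cutoff;
\item independence of the test point turns coverage into $\mathbb{E}[F_{P_0}^A(\hat\tau_\alpha)]$.
\end{itemize}
Your extra binomial-CDF check $\mathbb{P}(A_{(r)}\le a)=\mathbb{P}(U_{(r)}\le F_R^A(a))$ and your remark that $r_0=1$ makes the $n=0$ branch agree with $\Psi_0=1$ are correct refinements of points the paper treats only briefly.
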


The exact identity is unwieldy. Retaining only $d := d_{R \to P,+}^A$ gives the scalar lower bound
\begin{equation}
  L_{\rm fs}(d)
  \;:=\;
  \sum_{n=0}^{m} \binom{m}{n}\mu_{\rm keep}^n(1-\mu_{\rm keep})^{m-n}\,\psi_n(d),
  \qquad
  \psi_n(d) = \mathbb{E}\bigl[(B_{r_n, n+1-r_n} - d)_+\bigr]
  \label{eq:Lfs}
\end{equation}
($\psi_n(d) = 1$ when $r_n = n+1$), obtained by replacing $F_P^A \circ Q_R^A$ in \eqref{eq:exact-clean-coverage-identity} by $u \mapsto (u - d)_+$.

\begin{proposition}[Finite-sample scalar transfer]
  \label{prop:finite-sample-scalar-bound}
  Under the setup of Theorem~\ref{thm:fixed-threshold-trimmed-coverage}, almost surely in $\mathcal{G}$,
  \begin{equation}
    \label{eq:finite-sample-scalar-bound}
    \mathbb{P}\{Y_{\rm new} \in C_\alpha(X_{\rm new}) \mid \mathcal{G}\}
    \;\ge\;
    L_{\rm fs}(d_{R \to P,+}^A)
    \;\ge\;
    \bigl[\,1 - \alpha - d_{R \to P,+}^A\,\bigr]_+.
  \end{equation}
  In closed form, $\psi_n(d) = \tfrac{r_n}{n+1}\{1 - I_d(r_n+1, n+1-r_n)\} - d\{1 - I_d(r_n, n+1-r_n)\}$ when $r_n \le n$, with $I_x(a,b)$ the regularised incomplete beta function.
\end{proposition}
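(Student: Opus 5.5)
We start from the exact identity of Theorem~\ref{thm:fixed-threshold-trimmed-coverage} and bound each summand separately. For each $n$ with $r_n = n+1$, the term satisfies $\Psi_n(P,R) = 1 = \psi_n(d)$, so there is nothing to prove. For $r_n \le n$, we need to show that $F_P^A\{Q_R^A(u)\} \ge (u-d)_+$ for every $u \in (0,1)$, where $d = d^A_{R\to P,+}$. By definition of $d$, we have $F_P^A(t) \ge F_R^A(t) - d$ for all real $t$. Taking $t = Q_R^A(u)$, which is finite for $u\in(0,1)$, we get $F_P^A(Q_R^A(u)) \ge F_R^A(Q_R^A(u)) - d$. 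The lower generalised quantile satisfies $F_R^A(Q_R^A(u)) \ge u$ by right-continuity of $F_R^A$; this is the atom-handling fact recorded in Lemma~\ref{lem:generalized-quantile-order}. Combining this with $F_P^A \ge 0$ gives $F_P^A\{Q_R^A(u)\} \ge (u-d)_+$. Taking expectation over $B_{r_n,n+1-r_n}$, which lies in $(0,1)$ almost surely, yields $\Psi_n(P,R) \ge \psi_n(d)$. Summing against the nonnegative binomial weights then gives the first inequality in \eqref{eq:finite-sample-scalar-bound}.

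For the second inequality, we use Jensen's inequality, since $x\mapsto (x-d)_+$ is convex. This gives $\psi_n(d) \ge (\mathbb{E}B - d)_+ = \bigl(\tfrac{r_n}{n+1} - d\bigr)_+$. Since $r_n \ge (n+1)(1-\alpha)$, this is at least $(1-\alpha-d)_+$. When $r_n = n+1$ we have $\psi_n = 1 \ge (1-\alpha-d)_+$ trivially. The $n=0$ case falls under $r_0 = 1 = n+1$, because $\lceil 1-\alpha\rceil = 1$ for $\alpha\in[0,1)$. Because the binomial weights sum to one, $L_{\rm fs}(d) \ge [1-\alpha-d]_+$.

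For the closed form, write $a=r_n$ and $b=n+1-r_n$, and note $\psi_n(d) = \mathbb{E}[B\,\mathbf 1\{B>d\}] - d\,\mathbb{P}(B>d)$. The second term is $d\{1-I_d(a,b)\}$. For the first, we use the identity $x\,\frac{x^{a-1}(1-x)^{b-1}}{\mathrm{B}(a,b)} = \frac{\mathrm{B}(a+1,b)}{\mathrm{B}(a,b)}\cdot\frac{x^{a}(1-x)^{b-1}}{\mathrm{B}(a+1,b)}$, together with $\mathrm{B}(a+1,b)/\mathrm{B}(a,b) = a/(a+b) = r_n/(n+1)$. This gives $\mathbb{E}[B\mathbf 1\{B>d\}] = \tfrac{r_n}{n+1}\{1-I_d(r_n+1,n+1-r_n)\}$. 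For $d\ge 1$ both expressions vanish, which is consistent with $I_d \equiv 1$ there, and $d\ge 0$ always holds.

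The only delicate step is the pointwise inequality $F_P^A\circ Q_R^A(u)\ge u-d$ when atoms are present. We handle it through right-continuity and the defining property $F(Q(u))\ge u$ of the lower inverse, rather than through any continuity assumption. We also need to confirm that the Beta variable avoids the endpoints where $Q_R^A$ could be $\pm\infty$; it does, since it lies in $(0,1)$ almost surely.
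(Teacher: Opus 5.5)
Your proposal is correct and follows the paper's proof essentially step for step. It bounds $F_P^A\{Q_R^A(u)\}\ge (u-d)_+$ pointwise via $F_R^A(Q_R^A(u))\ge u$ inside the exact identity, splits $\mathbb{E}[B\mathbf 1\{B>d\}]-d\,\mathbb{P}(B>d)$ for the closed form, and uses $\mathbb{E}B=r_n/(n+1)\ge 1-\alpha$ for the second inequality (via Jensen, where the paper uses the equivalent $(x-d)_+\ge x-d$ plus $\psi_n\ge 0$); note only that $F(Q(u))\ge u$ is the standard switching property of the lower inverse rather than something Lemma~\ref{lem:generalized-quantile-order} states explicitly.
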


The scalar layer deliberately discards the joint structure of $F_R^A$ and $F_P^A$. The next theorem makes the sharpness claim precise. Here, ``best possible'' means minimax optimal among uniform lower bounds that use only the scalar discrepancy constraint.

\begin{theorem}[Finite-sample sharpness within the scalar information class]
  \label{thm:one-sided-transfer-sharpness}
  Fix $\alpha \in (0,1)$, $m \ge 0$, $\mu \in (0,1]$, and $d \in [0,1]$. For score laws $R^A,P^A$ on $\mathbb R$, define $\mathcal C_{m,\mu}(R^A,P^A)$ as the clean-target coverage of the abstract retained-law conformal experiment: $N\sim\mathrm{Binomial}(m,\mu)$; given $N=n$, $X_1,\ldots,X_n\overset{\mathrm{iid}}{\sim}R^A$ and independent $X_0\sim P^A$; cutoff $+\infty$ if $r_n=n+1$, otherwise $X_{(r_n)}$. Let $\mathfrak C_d := \{(R^A,P^A): \sup_t(F_R^A(t)-F_P^A(t))_+\le d\}$. Then
  \begin{equation}
    \label{eq:sharpness-minimax-form}
    \inf_{(R^A,P^A)\in\mathfrak C_d}
    \mathcal C_{m,\mu}(R^A,P^A)
    \;=\;
    L_{\rm fs}^{m,\mu}(d)
    \;:=\;
    \sum_{n=0}^{m}\binom{m}{n}\mu^n(1-\mu)^{m-n}\,\psi_n(d),
  \end{equation}
  attained by continuous $R_d^A,P_d^A$ with $\sup_t(F_{R_d}^A(t)-F_{P_d}^A(t))_+=d$. No uniform lower bound using only $m,\mu,\alpha$ and $d_{R\to P,+}^A\le d$ can exceed $L_{\rm fs}^{m,\mu}(d)$. As $m\to\infty$ with $\mu$ fixed, $L_{\rm fs}^{m,\mu}(d)\to[(1-\alpha)-d]_+$.
\end{theorem}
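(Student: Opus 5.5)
The argument has two directions, a lower bound and an attaining construction, followed by the large-$m$ limit.

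\emph{Lower bound.} Fix any pair $(R^A,P^A)\in\mathfrak C_d$ and condition on $N=n$. If $r_n=n+1$ the cutoff is $+\infty$ and coverage is $1=\psi_n(d)$. Otherwise we reuse the argument behind Theorem~\ref{thm:fixed-threshold-trimmed-coverage}: by the quantile transform, $X_{(r_n)}\overset{d}{=}Q_R^A(U_{(r_n)})$ with $U_{(r_n)}\sim\mathrm{Beta}(r_n,n+1-r_n)$. The conditional coverage is then $\mathbb E[F_P^A\{Q_R^A(B)\}]$. By Lemma~\ref{lem:generalized-quantile-order}, $F_R^A(Q_R^A(u))\ge u$, and the constraint $F_P^A\ge F_R^A-d$ pointwise gives $F_P^A(Q_R^A(u))\ge (u-d)_+$. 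Taking expectations yields $\psi_n(d)$, and averaging over $N\sim\mathrm{Binomial}(m,\mu)$ gives $\mathcal C_{m,\mu}\ge L^{m,\mu}_{\rm fs}(d)$. This step is essentially the proof of Proposition~\ref{prop:finite-sample-scalar-bound}, carried out for abstract laws.

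\emph{Attainment.} The infimum is attained only if a single pair makes $u\mapsto F_P^A(Q_R^A(u))$ equal to $(u-d)_+$ for all $u\in(0,1)$ simultaneously for every $n$. Take $R_d^A=\mathrm{Unif}(0,1)$, so that $Q_R^A(u)=u$ and $F_R^A(t)=t$ on $[0,1]$, and choose $P_d^A$ with $F_{P_d}^A(t)=(t-d)_+$ on $[0,1]$. This is $\mathrm{Unif}(d,1)$ after renormalization, and $(t-d)_+$ has total mass only $1-d$. To keep the laws continuous, I would place the missing mass $d$ above $1$, for instance uniformly on $(1,2)$. Then $F_{P_d}^A(t)=t-d$ on $[d,1]$, and beyond $1$ it increases to $1$. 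On $[1,2]$ we have $F_R=1$ and $F_P\ge 1-d$, so the gap is at most $d$. On $[0,1]$ the gap $F_R-F_P=\min(t,d)\le d$, with equality at $t=d$. Hence the pair lies in $\mathfrak C_d$ with discrepancy exactly $d$. Since $Q_R^A(B)\in(0,1)$ almost surely, $F_P(Q_R(B))=(B-d)_+$ exactly, so every $\Psi_n$ equals $\psi_n(d)$. The cases $d=1$ ($P$ entirely above $1$) and $d=0$ ($P=R$) are degenerate but fit the same recipe.

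The claim that no uniform bound can exceed $L^{m,\mu}_{\rm fs}(d)$ follows immediately, because any such bound must lie below the coverage of the attaining pair.

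\emph{Limit.} Since $\psi_n(d)=\mathbb E(B-d)_+$ with $B$ concentrating at $r_n/(n+1)\to 1-\alpha$ (its variance is $O(1/n)$), $\psi_n(d)\to(1-\alpha-d)_+$ by bounded continuity. The weights concentrate on $n\approx m\mu\to\infty$, and $\mathbb P(N\le n_0)\to0$ for each fixed $n_0$. Dominated convergence ($0\le\psi_n\le1$) finishes the argument.

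I expect the main obstacle to be the construction step. The attaining pair must be genuinely continuous, its sup-gap must equal $d$ exactly rather than merely be bounded by it, and it must work for all $n$ at once; the placement of $P$'s leftover mass is the delicate point, and the choice above appears to handle it.
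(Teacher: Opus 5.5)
Your proposal is correct and follows essentially the same route as the paper: the lower bound is the score-law argument of Proposition~\ref{prop:finite-sample-scalar-bound}, the extremal pair is exactly the paper's $\mathrm{Unif}[0,1]$ against the law with density one on $[d,1]$ and density $d$ on $[1,2]$, and the limit uses the same bounded-convergence argument. Two minor differences: you apply $F_P^A\ge F_R^A-d$ directly, which lets you skip the paper's monotonicity-of-$L_{\rm fs}$ step. Also, $F_R^A(Q_R^A(u))\ge u$ comes from right-continuity of $F_R^A$ at the lower generalised quantile, not from Lemma~\ref{lem:generalized-quantile-order}.
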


The extremal laws are constructed for the abstract scalar problem. Sharper bounds may be obtained from the mixture decomposition \eqref{eq:retained-law-decomposition}, smoothness of $F_R^A$, or audit information.

Substituting \eqref{eq:retained-law-decomposition} into the definition of $d_{R\to P,+}^A$ and applying the triangle inequality gives
\begin{equation*}
    d_{R \to P,+}^A \le (1 - \tilde\varepsilon_\star) \Delta_{\rm trim,+}^A + \tilde\varepsilon_\star D_{Q,+}.
\end{equation*}
This is an upper bound, not an exact decomposition. Combined with Proposition~\ref{prop:finite-sample-scalar-bound}, it yields
\begin{equation*}
    \mathbb{P}\{Y_{\rm new} \in C_\alpha(X_{\rm new}) \mid \mathcal{G}\}
    \ge
    \bigl[1 - \alpha - (1-\tilde\varepsilon_\star)\Delta_{\rm trim,+}^A - \tilde\varepsilon_\star D_{Q+}\bigr]_+.
\end{equation*}
The first term is the clean-side conditioning cost. The second is the retained-contamination cost. The next two subsections analyse these two terms in turn.

\subsection{Clean-side cost: a covariance identity}
\label{subsec:covariance-identity}

The clean trimming distortion captures how filtering perturbs the calibration distribution, even without contamination. Rewriting it as a covariance under $P$ makes the mechanism clear.

\begin{proposition}[Covariance identity for clean trimming distortion]
  \label{prop:trimming-bias-orthogonality}
  Let $K = \mathbf{1}\{S(Z) \le t^\star\}$ and assume $p_c > 0$. For every $t \in \mathbb{R}$,
  \begin{equation}
    F_{P_{\rm keep}}^A(t) - F_P^A(t) \;=\; \frac{\mathrm{Cov}_P\bigl(\mathbf{1}\{A(Z) \le t\},\,K\bigr)}{p_c},
    \label{eq:orthogonality-cov-pointwise}
  \end{equation}
  so $\Delta_{\rm trim,+}^A = p_c^{-1}\sup_t\{\mathrm{Cov}_P(\mathbf{1}\{A(Z) \le t\}, K)\}_+$, and $\Delta_{\rm trim}^A = 0$ whenever $A(Z) \perp\!\!\!\perp K$ under $P$.
\end{proposition}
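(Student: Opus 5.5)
The identity follows directly from the definition of conditional probability under $P$, so the plan is a short computation rather than an appeal to earlier results. Write $G_t := \mathbf{1}\{A(Z)\le t\}$ and $K=\mathbf{1}\{S(Z)\le t^\star\}$, both measurable functions of $Z$ with values in $\{0,1\}$. Since $p_c = P(K=1) = \mathbb{E}_P K > 0$, the conditional law $P_{\rm keep} = P(\cdot\mid K=1)$ is well defined. Its score CDF is
\[
F_{P_{\rm keep}}^A(t) = \frac{P(A(Z)\le t,\,K=1)}{p_c} = \frac{\mathbb{E}_P[G_t K]}{p_c}.
\]

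Next I subtract $F_P^A(t) = \mathbb{E}_P G_t$, writing it over the common denominator as $\mathbb{E}_P[G_t]\,\mathbb{E}_P[K]/p_c$. This gives
\[
F_{P_{\rm keep}}^A(t) - F_P^A(t) = \frac{\mathbb{E}_P[G_tK]-\mathbb{E}_P[G_t]\mathbb{E}_P[K]}{p_c} = \frac{\mathrm{Cov}_P(G_t,K)}{p_c},
\]
which is \eqref{eq:orthogonality-cov-pointwise} for every real $t$.

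For the supremum statement, $p_c>0$ is a constant not depending on $t$, so $(x/p_c)_+ = x_+/p_c$. Taking $\sup_t$ of the positive part of the pointwise identity therefore yields $\Delta_{\rm trim,+}^A = p_c^{-1}\sup_t\{\mathrm{Cov}_P(G_t,K)\}_+$.

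For the final claim, I interpret $\Delta_{\rm trim}^A$ as the two-sided variant $\sup_t|F_{P_{\rm keep}}^A(t)-F_P^A(t)|$, as indicated in Section~\ref{subsec:trimmed-setup}. If $A(Z)\perp\!\!\!\perp K$ under $P$, then $G_t$ and $K$ are independent for every $t$, so every covariance vanishes and the two-sided distortion is zero; the one-sided $\Delta_{\rm trim,+}^A$ is then zero a fortiori. The one convention to check is $t=\pm\infty$: there $G_t$ is constant, so its covariance with $K$ is zero and both sides of the identity agree.

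There is no real obstacle here. The only points needing care are that the pointwise identity holds for every $t$ rather than almost every $t$, which it does because no density is involved, and that dividing by $p_c$ commutes with both the positive part and the supremum.
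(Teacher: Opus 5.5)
Your proposal is correct and follows essentially the same route as the paper: write $F_{P_{\rm keep}}^A(t)=\mathbb{E}_P[\mathbf{1}\{A(Z)\le t\}K]/p_c$, subtract $F_P^A(t)$ over the common denominator to obtain the covariance, then take positive parts and suprema. Independence kills every covariance. The paper's proof additionally treats the homoscedastic and misspecified-regression cases discussed after the statement, but these are not part of the proposition as stated.
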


Read this as a design rule: a useful anomaly score should be \emph{contamination-selective across $P$ and $Q$}, but \emph{score-neutral within $P$}. A simple sufficient condition is covariate-only retention with a residual-based score in a homoscedastic regression model. In this case, $A(Z) = \sigma|\xi|$ is independent of any function of $X$, so $\Delta_{\rm trim}^A = 0$.

Under misspecification $||f^\star - \hat f||_{\infty} \le \eta$, the same condition gives
\begin{equation*}
    \Delta_{\rm trim}^A \le \omega_{A_0}(\eta)
    := \sup_t P\{A_0 \in [t-\eta, t+\eta]\},
\end{equation*}
where $A_0 = \sigma|\xi|$. In particular, $\Delta_{\rm trim}^A \le 2M\eta$ if $A_0$ has density bounded by $M$.

For regularised least-squares backbones, standard learning-theory controls can provide such prediction-error inputs under additional assumptions \citep{caponnetto2007optimal}. Our diagnostic only needs the realised bound. Response-dependent anomaly scores can make $\Delta_{\rm trim,+}^A$ arbitrarily large, even without contamination. This motivates our use of covariate-geometric scores in the operational examples.

\subsection{Retained mixture coefficient and retention-probability separation}
\label{subsec:separation-dichotomy}

The retained-contamination cost $\tilde\varepsilon_\star D_{Q,+}$ is governed by $\tilde\varepsilon_\star$, which depends only on the retention probabilities. If the retained dirty score law is harmless, so that $D_{Q,+}=0$, the dirty contribution vanishes even when $\tilde\varepsilon_\star$ is large.

\begin{proposition}[Coefficient-level separation dichotomy]
  \label{thm:separation-dichotomy-main}
  Under the setup of Section~\ref{subsec:trimmed-setup} with $p_c > 0$:
  \begin{itemize}[leftmargin=2em, itemsep=0.3ex]
    \item If $p_d \le \lambda p_c$, then $\tilde\varepsilon_\star \le \varepsilon\lambda / (1 - \varepsilon + \varepsilon\lambda)$; in particular, $p_d/p_c \to 0$ forces $\tilde\varepsilon_\star \to 0$.
    \item If $p_d = p_c$, then $\tilde\varepsilon_\star = \varepsilon$.
  \end{itemize}
\end{proposition}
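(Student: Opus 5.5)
The plan is to work directly from the closed form of the retained mixture coefficient in \eqref{eq:retained-law-decomposition}, namely $\tilde\varepsilon_\star = \varepsilon p_d/\{(1-\varepsilon)p_c + \varepsilon p_d\}$. Since $p_c>0$ and $\varepsilon<1$, the denominator satisfies $\mu_{\rm keep}\ge(1-\varepsilon)p_c>0$, so the expression is well defined. Dividing numerator and denominator by $p_c$ writes it as $\tilde\varepsilon_\star = g(\rho)$, where $\rho := p_d/p_c \ge 0$ and
\[
  g(\rho) = \frac{\varepsilon\rho}{1-\varepsilon+\varepsilon\rho}.
\]
Both bullets then become statements about this one-variable function.

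For the first bullet, I will show that $g$ is nondecreasing on $[0,\infty)$ for $\varepsilon\in[0,1)$. One route is the derivative:
\[
  g'(\rho) = \frac{\varepsilon(1-\varepsilon)}{(1-\varepsilon+\varepsilon\rho)^2} \ge 0.
\]
An alternative is the identity $g(\rho) = 1 - (1-\varepsilon)/(1-\varepsilon+\varepsilon\rho)$, which is monotone by inspection. Then $p_d \le \lambda p_c$ means $\rho\le\lambda$, so $\tilde\varepsilon_\star = g(\rho) \le g(\lambda) = \varepsilon\lambda/(1-\varepsilon+\varepsilon\lambda)$. For the limiting statement, $g$ is continuous at $0$ with $g(0)=0$, so $\rho\to0$ forces $\tilde\varepsilon_\star\to0$. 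A direct bound also works: $\tilde\varepsilon_\star \le \varepsilon\rho/(1-\varepsilon)$.

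For the second bullet, I substitute $\rho=1$. This gives $g(1)=\varepsilon/(1-\varepsilon+\varepsilon)=\varepsilon$. Equivalently, the common factor $p_c=p_d$ cancels in \eqref{eq:retained-law-decomposition}.

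There is no real obstacle in this argument. The only care points are the degenerate cases. When $\varepsilon=0$, both sides of each bullet are $0$ and the monotonicity argument still holds. When $p_d=0$, we have $\tilde\varepsilon_\star=0$, consistent with the convention stated after \eqref{eq:retained-law-decomposition}. Positivity of $p_c$ and $\varepsilon<1$ guarantees that no denominator vanishes.
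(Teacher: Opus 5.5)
Your proposal is correct and follows essentially the same route as the paper. The paper keeps $p_c$ fixed and shows $x \mapsto \varepsilon x/\{(1-\varepsilon)p_c+\varepsilon x\}$ is nondecreasing via its derivative, then evaluates at $x=\lambda p_c$ and at $x=p_c$. Normalising by $\rho=p_d/p_c$ first, as you do, is only a cosmetic reparametrisation of the same monotonicity-and-substitution argument.
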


When the score induces \emph{retention-probability separation}, dirty points are retained much less often than clean ones. Then $\tilde\varepsilon_\star$ can be far below $\varepsilon$, and trimming can help. Without separation, $\tilde\varepsilon_\star = \varepsilon$ for any threshold. Trimming then cannot reduce contamination through the mixture coefficient. Improvement can only come from a favourable $Q_{\rm keep}$. Yet trimming still incurs the clean covariance cost in Proposition~\ref{prop:trimming-bias-orthogonality}, so it can strictly worsen the diagnostic lower bound.

We call this the \emph{coefficient-level separation dichotomy}. Section~\ref{sec:experiments} organises the experiments around it. The proposition rules out improvement through the retained-contamination \emph{coefficient}, not through all possible differences between $Q_{\rm keep}$ and $Q$. A stronger statement appears in Theorem~\ref{thm:no-separation-fixed}.

\subsection{Selection-aware bounds and certificate templates}
\label{subsec:operational}

Two extensions are needed in practice: same-sample threshold selection and numerical coverage certificates.

\paragraph{Same-sample threshold selection.}
Same-sample selection breaks the retained-law rank identity. What remains is a selection-uniform empirical-CDF transfer bound. For a finite grid $\mathcal{T} = \{t_1, \ldots, t_K\}$, define $N_t$, $R_t$, $\hat\tau_t$, and $d_t = d_{R_t \to P,+}^A$ analogously. For $N_t\ge1$, let
\begin{equation*}
    \eta_t = \sqrt{\log(2K/\beta) / (2N_t)}.
\end{equation*}
Then, with conditional probability at least $1-\beta$, the following holds simultaneously for all $t \in \mathcal{T}$ with $N_t \ge 1$ and $r_t \le N_t$:
\begin{equation*}
    \mathbb{P}_P\{A(Z_{\rm new}) \le \hat\tau_t \mid Z_1, \ldots, Z_m, \mathcal{G}\}
    \ge
    \frac{r_t}{N_t} - d_t - \eta_t.
\end{equation*}

Proposition~\ref{prop:same-sample-threshold-selection} in Appendix~\ref{app:proofs} gives the precise statement. The bound holds uniformly over the grid, allowing any data-dependent $\widehat{t} \in \mathcal{T}$ with a $\sqrt{\log K / N_t}$ penalty. It is data-driven only through the empirical-CDF term. It becomes a numerical certificate only after the population losses $d_t$ are independently bounded. This view is related to conformal selection and adaptive novelty detection \citep{jin2023selection, marandon2024adaptive}, but here the selected object is the calibration subset used to form the conformal quantile.

\paragraph{Certificate templates.}
The diagnostics in Sections~\ref{subsec:exact-scalar-mixture}--\ref{subsec:separation-dichotomy} become numerical certificates only when their ingredients are bounded by independent information. There are two routes.

The first audits the mixture components. Given $L_c \le p_c$, $p_d \le U_d$, $\Delta_{\rm trim,+}^A \le B_{\Delta,+}$, $D_{Q,+} \le B_{Q,+}$, and $\varepsilon \le \varepsilon_{\max}$,
\begin{equation*}
    \mathbb{P}\{Y_{\rm new} \in C_\alpha(X_{\rm new}) \mid \mathcal{G}\}
    \ge
    \bigl[1 - \alpha - B_{\Delta,+} - \bar\varepsilon\max\{B_{Q,+} - B_{\Delta,+}, 0\} \bigr]_+,
\end{equation*}
where
\begin{equation*}
    \bar\varepsilon =
    \frac{\varepsilon_{\max} U_d}
    {(1-\varepsilon_{\max})L_c + \varepsilon_{\max}U_d}.
\end{equation*}
The worst-case bound $B_{Q,+}=1$ is always valid, but conservative.

The second route audits the final selected set directly, using an independent clean audit sample. Exact one-sided binomial inversion certifies the realised coverage. A one-sided Kolmogorov audit certifies any $\mathcal{A}$-measurable cutoff, and applies to any selected-subset procedure. Formal statements are given in Propositions~\ref{prop:operational-certificate} and~\ref{prop:exact-audit-certificate} in Appendix~\ref{app:proofs}.

The diagnostics should be read as \emph{simulation-evaluable diagnostics} when $P$ and $Q$ are known. Otherwise, they should be read as \emph{certificate templates}. Numerical certification requires either independent component bounds or an independent audit sample. We maintain this distinction throughout Section~\ref{sec:experiments}.

\section{Experimental Results}
\label{sec:experiments}
\subsection{Setup and reporting conventions}
\label{subsec:experiments-setup}

We evaluate the retained-law analysis on a heteroscedastic regression task with score-visible covariate contamination. We also include stress regimes that probe the failure modes predicted in Section~\ref{sec:method}. The clean design is
\begin{equation*}
    X \sim N(0,1), \qquad
    Y = X + 0.6 (1 + 0.6 |X|) \xi,
    \qquad \xi \sim N(0,1),
\end{equation*}
and the contaminating distribution has
\begin{equation*}
    X_{\rm dirty} \sim N(6,1),
    \qquad
    Y_{\rm dirty} = X_{\rm dirty} + 0.05 \xi.
\end{equation*}
The contamination level is $\varepsilon = 0.2$. The contaminating responses are close to the regression surface. However, the contaminating covariates are clearly separated from the clean support. Thus, a covariate-geometric anomaly score should be able to detect them. Importantly, the clean design is heteroscedastic. Therefore, the homoscedastic-residual orthogonality in Proposition~\ref{prop:trimming-bias-orthogonality} does \emph{not} apply, and nonzero clean trimming distortion is expected.

The nominal miscoverage level is $\alpha = 0.1$ throughout, so the clean-coverage target is $0.9$. Unless stated otherwise, the calibration size is $m = 320$, and all summaries use $100$ Monte Carlo repetitions. The prediction rule and the Stein score-norm anomaly score are both fitted on independent clean auxiliary data. Thresholds are either fixed population thresholds or chosen on an independent reference split. No main-table row selects the threshold from the calibration sample used for the conformal quantile.

We report three types of quantities. \emph{Empirical} clean coverage and width describe finite-sample performance, with Monte Carlo intervals across repetitions. \emph{Population diagnostics}—$p_c$, $p_d$, $\tilde\varepsilon_\star$, $\Delta_{\rm trim,+}^A$, and $D_{Q,+}$—are evaluated from the known simulation laws. They are mechanistic explanations, not certificates. \emph{Conservative lower bounds} (“Cons.\ LB”) are componentwise audit-style certificates. They use independent diagnostic samples and concentration corrections, so they are operationally safe but deliberately loose.

When $D_{Q,+}$ is available from the simulation, we also report the population diagnostic
\begin{equation*}
    L_{\rm mix,+}
    =
    1 - \alpha
    - (1 - \tilde\varepsilon_\star)\Delta_{\rm trim,+}^A
    - \tilde\varepsilon_\star D_{Q,+}.
\end{equation*}
  This is the lower bound induced by the mixture upper bound on the scalar transfer loss. When the retained contaminating component is too rare to estimate reliably, we use the worst-case fallback $D_{Q,+} \le 1$.

\subsection{The covariance identity is visible in the data}
\label{subsec:experiment-covariance}

Proposition~\ref{prop:trimming-bias-orthogonality} predicts that the clean trimming distortion equals a one-sided covariance under $P$
\begin{equation*}
  \Delta_{\rm trim,+}^A
  \;=\;
  p_c^{-1}\,\sup_t\bigl\{\mathrm{Cov}_P(\mathbf{1}\{A(Z) \le t\},\,K)\bigr\}_+ .
\end{equation*}
Reducing $\Delta_{\rm trim,+}^A$ is therefore equivalent to reducing the implied covariance envelope $p_c,\Delta_{\rm trim,+}^A$, after adjusting for the clean retention probability. Table~\ref{tab:covariance-identity-summary} interprets three score-visible thresholds through this identity. As $q$ increases from $0.950$ to $0.990$, the threshold becomes less aggressive. It retains more clean mass and trims fewer points. Dirty retention is negligible in all three rows. The implied covariance envelope decreases by about a factor of four, from $1.30 \times 10^{-2}$ to $3.6 \times 10^{-3}$. The clean-side trimming cost decreases accordingly. Improvement in this regime is therefore mainly a clean score-neutrality effect, not stronger contamination removal.

\begin{table}[t]
  \centering
  \small
  \setlength{\tabcolsep}{6pt}
  \begin{tabular}{lcccc}
    \toprule
    Threshold & $p_c$ & $p_d$ & $\Delta_{\rm trim,+}^A$ & $p_c\,\Delta_{\rm trim,+}^A$ \\
    \midrule
    Stein $q = 0.950$ & $0.9494$ & $2.8 \times 10^{-5}$ & $0.0137$ & $0.0130$ \\
    Stein $q = 0.975$ & $0.9752$ & $6.1 \times 10^{-5}$ & $0.0075$ & $0.0073$ \\
    Stein $q = 0.990$ & $0.9897$ & $2.6 \times 10^{-4}$ & $0.0036$ & $0.0036$ \\
    \bottomrule
  \end{tabular}
  \caption{Covariance-identity interpretation of three score-visible thresholds. The last column reports the implied one-sided covariance envelope $\sup_t\{\mathrm{Cov}_P(\mathbf{1}\{A \le t\}, K)\}_+$ up to rounding. Less aggressive trimming reduces this envelope by roughly a factor of four.}
  \label{tab:covariance-identity-summary}
\end{table}

\subsection{The scalar bound is finite-sample sharp}
\label{subsec:experiment-sharpness}

Theorem~\ref{thm:one-sided-transfer-sharpness} gives the precise minimax meaning of sharpness. Once the score-law comparison is reduced to the scalar constraint $d_{R \to P,+}^A\le d$, the bound $L_{\rm fs}^{m,\mu}(d)$ is the best possible uniform lower bound.

Table~\ref{tab:sharpness-values} reports this curve at the main calibration scale $m=320$, retention probability $\mu=0.95$, and $\alpha=0.1$. The finite-sample bound is slightly above the asymptotic line $[1-\alpha-d]_+$. This is because the inclusive conformal quantile is finite-sample conservative. The sharpness construction in the proof attains this curve exactly.

\begin{table}[t]
  \centering
  \small
  \setlength{\tabcolsep}{8pt}
  \begin{tabular}{ccc}
    \toprule
    $d$ & $L_{\rm fs}^{320,\,0.95}(d)$ & $[1 - \alpha - d]_+$ \\
    \midrule
    $0.000$ & $0.9015$ & $0.9000$ \\
    $0.002$ & $0.8995$ & $0.8980$ \\
    $0.005$ & $0.8965$ & $0.8950$ \\
    $0.010$ & $0.8915$ & $0.8900$ \\
    $0.020$ & $0.8815$ & $0.8800$ \\
    $0.050$ & $0.8515$ & $0.8500$ \\
    \bottomrule
  \end{tabular}
  \caption{Finite-sample scalar lower bound $L_{\rm fs}^{m,\mu}(d)$ and its asymptotic counterpart for $m=320$, $\mu=0.95$, and $\alpha=0.1$. Theorem~\ref{thm:one-sided-transfer-sharpness} shows that the $L_{\rm fs}$ column is attainable within the scalar-discrepancy class.}
  \label{tab:sharpness-values}
\end{table}

\subsection{Score-visible separation and the main trade-off}
\label{subsec:experiment-tradeoff}

Combining the two mechanisms, clean-side covariance and retained contamination, Table~\ref{tab:main-tradeoff} reports the main score-visible trade-off. Ordinary split conformal under-covers the clean target, with mean coverage $0.8709$. All three Stein thresholds make dirty retention negligible, so $\tilde\varepsilon_\star$ is essentially zero in the trimmed rows. Coverage improves as $q$ increases. At $q=0.990$, mean coverage is $0.8950$ (unrounded $0.89499$), still below the nominal target $0.9$, but close to the clean-calibration benchmark $0.8984$. By Table~\ref{tab:covariance-identity-summary}, the improvement comes from reduced clean covariance distortion, not stronger contamination removal. The retained-contamination axis was already saturated at $q=0.950$.

\begin{table}[t]
  \centering
  \scriptsize
  \setlength{\tabcolsep}{2.1pt}
  \resizebox{\linewidth}{!}{%
  \begin{tabular}{lcccccccc}
    \toprule
    Method & Clean coverage & Width & $p_c$ & $p_d$ & $\tilde\varepsilon_\star$ & $\Delta_{\rm trim,+}^A$ & $L_{\rm mix,+}$ & Cons.\ LB \\
    \midrule
    Ordinary split & $0.8709\,[0.8667, 0.8750]$ & $2.7308$ & $1.0000$ & $1.0000$ & $0.2000$ & $0.0000$ & $\ge 0.7000$ & $0.5499$ \\
    Stein $q = 0.950$ & $0.8857\,[0.8819, 0.8896]$ & $2.8584$ & $0.9494$ & $2.8 \times 10^{-5}$ & $7.0 \times 10^{-6}$ & $0.0137$ & $\ge 0.8863$ & $0.7140$ \\
    Stein $q = 0.975$ & $0.8914\,[0.8877, 0.8951]$ & $2.9107$ & $0.9752$ & $6.1 \times 10^{-5}$ & $1.6 \times 10^{-5}$ & $0.0075$ & $\ge 0.8925$ & $0.7225$ \\
    Stein $q = 0.990$ & $0.8950\,[0.8913, 0.8986]$ & $2.9448$ & $0.9897$ & $2.6 \times 10^{-4}$ & $6.4 \times 10^{-5}$ & $0.0036$ & $0.8964$ & $0.7276$ \\
    Clean oracle & $0.8984\,[0.8948, 0.9021]$ & $2.9791$ & $1.0000$ & $0.0000$ & $0.0000$ & $0.0000$ & $0.9000$ & $0.7322$ \\
    \bottomrule
  \end{tabular}}
  \caption{Score-visible regime with $\varepsilon = 0.2$, $m = 320$, and $100$ repetitions. $L_{\rm mix,+}$ is the population diagnostic lower bound. Entries marked $\ge$ use the worst-case fallback $D_{Q,+} \le 1$. “Cons.\ LB” is a separate componentwise audit-style certificate. Trimming improves coverage from $0.8709$ to $0.8950$. The unrounded mean for $q=0.990$ is $0.89499$, so this row should be read as near-nominal and within Monte Carlo error of the clean oracle $0.8984$, not as a formal $0.9$ certificate. Width intervals are omitted for compactness; full intervals are given in Appendix~\ref{app:detailed-experiments}.}
  \label{tab:main-tradeoff}
\end{table}

The $q = 0.990$ row also shows how the componentwise diagnostic separates the two mechanisms. The retained dirty discrepancy is $D_{Q,+} = 0.31$, but the retained mixture coefficient is only $\tilde\varepsilon_\star = 6.4 \times 10^{-5}$. Thus the dirty contribution is $\tilde\varepsilon_\star D_{Q,+} \approx 2.0 \times 10^{-5}$, which is dominated by the clean contribution $(1 - \tilde\varepsilon_\star)\Delta_{\rm trim,+}^A \approx 0.0036.$ The resulting population diagnostic is $L_{\rm mix,+} = 0.8964$, close to the empirical mean $0.8950$ within Monte Carlo error. The retained-law mixture upper bound therefore explains where the residual under-coverage comes from.

\subsection{The dichotomy in failure modes}
\label{subsec:experiment-failures}

Proposition~\ref{thm:separation-dichotomy-main} predicts that, without retention-probability separation, trimming cannot reduce $\tilde\varepsilon_\star$ through the mixture coefficient. Table~\ref{tab:regime-diagnostics} examines four regimes that stress this dichotomy from different sides.

\begin{table}[t]
  \centering
  \scriptsize
  \setlength{\tabcolsep}{2.4pt}
  \resizebox{\linewidth}{!}{%
  \begin{tabular}{lccccccccc}
    \toprule
    Regime & Method & Clean coverage & $p_c$ & $p_d$ & $\tilde\varepsilon_\star$ & $\Delta_{+}$ & $D_{Q,+}$ & $L_{\rm mix,+}$ & Cons.\ LB$_+$ \\
    \midrule
    Score-visible $(\varepsilon = 0.2)$ & Stein $q = 0.990$ & $0.8950$ & $0.9897$ & $2.6 \times 10^{-4}$ & $6.4 \times 10^{-5}$ & $0.0036$ & $0.31$ & $0.8964$ & $0.7277$ \\
    Perfect dirty rejection & Stein clean-ref $q = 0.950$ & $0.9015$ & $0.9444$ & $0.0000$ & $0.0000$ & $0.0040$ & N/A & $0.8960$ & $0.7136$ \\
    No separation, $m = 800$ & Stein clean-ref $q = 0.950$ & $0.8754$ & $0.9420$ & $0.9429$ & $0.2002$ & $0.0037$ & $0.94$ & $0.7081$ & $0.6028$ \\
    Label-only, $\varepsilon = 0.20$ & Ordinary split & $0.9998$ & $1.0000$ & $1.0000$ & $0.2000$ & $0.0000$ & $8.7 \times 10^{-5}$ & $0.9000$ & $0.7631$ \\
    Large $m = 10{,}000$ & Stein $q = 0.950$ & $0.8888$ & $0.9495$ & $0.0057$ & $0.0015$ & $0.0134$ & $0.86$ & $0.8853$ & $0.8464$ \\
    \bottomrule
  \end{tabular}}
  \caption{Diagnostic regimes. The score-visible row is the main result of Section~\ref{subsec:experiment-tradeoff}. The no-separation row tests Proposition~\ref{thm:separation-dichotomy-main}. The label-only and large-$m$ rows show distinct failure modes.}
  \label{tab:regime-diagnostics}
\end{table}

\paragraph{Perfect dirty rejection.}
Placing the contamination cluster far from the clean support drives $p_d$ to zero. The remaining loss is pure clean trimming distortion, with $\Delta_{\rm trim,+}^A = 0.0040$. Coverage reaches $0.9015$, slightly above nominal. This is because the nonzero clean retention probability, $p_c = 0.94$, does not perturb the conformal quantile in the under-coverage direction.

\paragraph{No separation.}
When clean and dirty points are retained with similar probabilities ($p_c = 0.942$, $p_d = 0.943$), Proposition~\ref{thm:separation-dichotomy-main} predicts $\tilde\varepsilon_\star \approx \varepsilon$. Empirically, $\tilde\varepsilon_\star = 0.200$, and the dirty contribution dominates, giving coverage $0.8754$. Trimming does not help. Although the componentwise diagnostic is conservative because $D_{Q,+}$ is large, $\tilde\varepsilon_\star\approx\varepsilon$ confirms the no-separation mechanism. A separation sweep with $\varepsilon = 0.3$ and $m = 800$ shows the transition. As separation increases, $p_d$ drops from $1.00$ to $0.00$, $\tilde\varepsilon_\star$ drops from $0.300$ to $0.000$, and coverage rises from $0.859$ to $0.902$. The improvement comes from retention separation, not trimming \emph{per se}.

\paragraph{Label-only contamination.}
When only the response is corrupted ($Q_X = P_X$), a covariate-geometric score has no leverage. Ordinary split conformal over-covers ($0.9998$) by producing inflated intervals. Trimming with a covariate score cannot improve this, because the corrupted points are not detectable from $X$ alone. This is a genuine limitation of covariate trimming, not a failure of the retained-law analysis.

\paragraph{Large calibration size.}
Increasing $m$ to $10{,}000$ does not raise coverage above $L_{\rm mix,+}=0.8853$, even though the conservative audit lower bound increases substantially, from $0.7132$ to $0.8464$. The limiting coverage is governed by retained-to-clean score-law transfer, not sample-size slack. Finite-sample concentration shrinks, but the population-level mixture loss remains.

\paragraph{Threshold selection and audit.}
When the threshold is selected from the same calibration sample over a finite grid, Proposition~\ref{prop:same-sample-threshold-selection} applies with a $\sqrt{\log K / N}$ penalty. When independent clean labelled audit data are available, Proposition~\ref{prop:exact-audit-certificate} certifies the final selected set directly by binomial inversion, regardless of how the set was constructed. Across the tested policies, fixed and clean-reference $q=0.950$ rules give coverage near $0.892$. A more conservative KSD-geometric policy gives coverage $0.898$ with larger width. Thus, the policy choice changes the coverage-efficiency frontier, but not the mechanism. Non-Gaussian geometries are reported in Appendix~\ref{app:detailed-experiments}. The qualitative conclusion is that the anomaly score should match the clean support geometry while remaining score-neutral within $P$, rather than being a universal choice.

\section{Discussion}
\label{sec:discussion}
The retained-law viewpoint treats trimming as a transformation of the calibration law. Fixed-threshold trimming does not recover $P$, but induces a retained law $R$, under which conformal validity is exact. The cost is described by a one-dimensional score-CDF transfer. Trimming helps when the anomaly score separates retention under $P$ and $Q$, while remaining nearly score-neutral within $P$. Otherwise, contamination is not reduced through the retained mixture coefficient, and any gain must come from the retained dirty score law. The diagnostic $L_{\rm mix,+}$ captures this trade-off. Numerical certification requires componentwise bounds (Proposition~\ref{prop:operational-certificate}) or direct audit of the selected set (Proposition~\ref{prop:exact-audit-certificate}). The threshold is fixed before the final contaminated calibration sample is used. Same-sample selection is handled only through a finite-grid empirical-CDF bound. The guarantees are marginal, not subgroup-conditional. Sharpness holds only within the scalar-discrepancy information class. Sharper bounds may exploit structure in $P_{\rm keep}$ or $Q_{\rm keep}$, smoothness, or audit data.

\paragraph{Acknowledgements} 
The author was supported by the China Scholarship Council under Grant Number No. 202208890004. The author is grateful to Chris Oates for helpful discussions and feedback on this work.

\bibliographystyle{abbrvnat}
\bibliography{reference}

\appendix
\renewcommand{\theHfigure}{app.\arabic{figure}}
\renewcommand{\theHtable}{app.\arabic{table}}

\section{Appendix roadmap}
Appendix~\ref{app:algorithm} gives the fixed-threshold trimmed split-conformal algorithm.
Appendix~\ref{app:supplementary-generic-theory} collects supplementary generic theory, including independent threshold tuning, upper-coverage diagnostics, efficiency bounds, and marginalisation of high-probability conditional guarantees.
Appendix~\ref{app:technical-remarks} records quantile and edge-case conventions used in the finite-sample identity.
Appendix~\ref{app:auxiliary-diagnostic-lemmas}  contains auxiliary diagnostic lemmas used in the retained-law and certificate analyses.
Appendix~\ref{app:proofs} gives the proofs of the main results, including retained-law transfer, the exact finite-sample coverage identity, finite-sample scalar sharpness, same-sample threshold selection, and audit certificates.
Appendix~\ref{app:benchmark-regimes} presents benchmark regimes illustrating ideal separation and no-separation behaviour.
Appendix~\ref{app:detailed-experiments} reports detailed experimental settings and additional simulations.
Appendix~\ref{app:experimental-diagnostics} collects additional reporting conventions, diagnostic table variants, and guidance on interpreting empirical performance, population diagnostics, and conservative certificate bounds.

\section{Algorithm}
\label{app:algorithm}
For completeness, Algorithm~\ref{alg:fixed-threshold-trimmed-cp} gives the fixed-threshold trimmed split-conformal procedure analysed in Section~\ref{subsec:trimmed-setup}. The construction is standard split conformal prediction \citep{vovk2005algorithmic, lei2018distributionfree} applied to the retained calibration subset
\begin{equation*}
    \{Z_i : S(Z_i) \le t^\star\}.
\end{equation*}
We use the conservative convention $\hat\tau_\alpha = +\infty$ when fewer than $\lceil 1/\alpha \rceil - 1$ points are retained, or when the conformal rank exceeds the retained sample size.

\begin{algorithm}[H]
  \caption{Fixed-threshold trimmed split conformal prediction}
  \label{alg:fixed-threshold-trimmed-cp}
  \begin{algorithmic}[1]
    \Require Nonconformity score $A$, anomaly score $S$, calibration-independent threshold $t^\star$, contaminated calibration sample $\{Z_i=(X_i,Y_i)\}_{i=1}^m$, miscoverage level $\alpha\in(0,1)$.
    \Ensure Prediction set $C_\alpha(x)$.
    \For{$i=1,\dots,m$}
      \State Compute $A_i:=A(Z_i)$ and $S_i:=S(Z_i)$.
      \State Set $K_i:=\mathbf 1\{S_i\le t^\star\}$.
    \EndFor
    \State Let $\mathcal I_{\mathrm{keep}}:=\{i:K_i=1\}$ and $N_{\mathrm{keep}}:=|\mathcal I_{\mathrm{keep}}|$.
    \If{$N_{\mathrm{keep}}=0$}
      \State Set $\hat\tau_\alpha:=+\infty$.
    \Else
      \State Set $r_{\mathrm{keep}}:=\lceil (N_{\mathrm{keep}}+1)(1-\alpha)\rceil$.
      \If{$r_{\mathrm{keep}}=N_{\mathrm{keep}}+1$}
        \State Set $\hat\tau_\alpha:=+\infty$.
      \Else
        \State Set $\hat\tau_\alpha$ to the $r_{\mathrm{keep}}$-th order statistic of $\{A_i:i\in\mathcal I_{\mathrm{keep}}\}$.
      \EndIf
    \EndIf
    \State Return $C_\alpha(x):=\{y\in\mathcal Y:A(x,y)\le\hat\tau_\alpha\}$.
  \end{algorithmic}
\end{algorithm}

\section{Supplementary generic theory}
\label{app:supplementary-generic-theory}

This appendix collects generic statements for implementation, diagnostics, and threshold tuning. They are not needed for the main retained-law transfer argument. Section~\ref{sec:method} contains the fixed-threshold coverage theorem, the finite-sample scalar sharpness result, the same-sample finite-grid selection bound, and the audit certificates. This appendix keeps independent-tuning and auxiliary diagnostic material separate.

\subsection{Unified notation for score-CDF gaps and mixture diagnostics}
\label{app:unified-notation}

Unless stated otherwise, all quantities in this table are evaluated conditionally on the auxiliary sigma-field. Under this conditioning, $A$, $S$, and the threshold are fixed. Suprema are taken over $a\in\mathbb{R}$, and $F_M^A(a):=M\{A(Z)\le a\}$ denotes the score CDF under law $M$.

For the fixed threshold $t^\star$, write
\begin{equation*}
    K=\mathbf 1\{S(Z)\le t^\star\},\qquad
    p_c=P(K=1),\qquad
    p_d=Q(K=1),
\end{equation*}
and
\begin{equation*}
\mu_{\rm keep}=(1-\varepsilon)p_c+\varepsilon p_d.
\end{equation*}
When $p_d=0$, $Q_{\rm keep}$ may be chosen arbitrarily, and all zero-weighted dirty terms are interpreted as zero. Componentwise clean-side quantities involving $P_{\rm keep}$ require $p_c>0$.

\begin{table}[h]
  \centering
  \scriptsize
  \setlength{\tabcolsep}{3pt}
  \renewcommand{\arraystretch}{1.35}
  \begin{tabular}{p{0.23\linewidth}p{0.51\linewidth}p{0.19\linewidth}}
    \toprule
    Symbol & Definition & Role / condition \\
    \midrule
    $Q_M^A(u)$ & $\inf\{a\in\mathbb R:F_M^A(a)\ge u\}$, $u\in(0,1)$ & Lower generalized score quantile \\
    $P_{\rm keep}$, $Q_{\rm keep}$ & $P(\,\cdot\mid S\le t^\star)$, $Q(\,\cdot\mid S\le t^\star)$ & Defined when the corresponding retention probability is positive \\
    $R$ & $P^\star(\,\cdot\mid S\le t^\star)$ & Retained calibration law, requires $\mu_{\rm keep}>0$ \\
    $\tilde\varepsilon_\star$ & $\displaystyle \frac{\varepsilon p_d}{(1-\varepsilon)p_c+\varepsilon p_d}$ & Retained mixture coefficient \\
    $d_{M\to N,+}^A$ & $\displaystyle \sup_a\bigl(F_M^A(a)-F_N^A(a)\bigr)_+$ & One-sided score-CDF gap from $M$ to $N$ \\
    $d_{R\to P,+}^A$ & $\displaystyle \sup_a\bigl(F_R^A(a)-F_P^A(a)\bigr)_+$ & Scalar loss controlling clean undercoverage \\
    $d_{P\to R,+}^A$ & $\displaystyle \sup_a\bigl(F_P^A(a)-F_R^A(a)\bigr)_+$ & Mirror loss for upper-coverage diagnostics \\
    $\Delta_{\rm trim,+}^A$ & $\displaystyle d_{P_{\rm keep}\to P,+}^A$ & Clean-side trimming distortion, requires $p_c>0$ \\
    $\Delta_{\rm trim,-}^A$ & $\displaystyle d_{P\to P_{\rm keep},+}^A$ & Mirror clean-side distortion \\
    $\Delta_{\rm trim}^A$ & $\displaystyle \sup_a|F_{P_{\rm keep}}^A(a)-F_P^A(a)|=\max\{\Delta_{\rm trim,+}^A,\Delta_{\rm trim,-}^A\}$ & Two-sided clean distortion \\
    $D_{Q,+}$ & $\displaystyle d_{Q_{\rm keep}\to P,+}^A$ & Retained dirty discrepancy relevant to undercoverage \\
    $D_{Q,-}$ & $\displaystyle d_{P\to Q_{\rm keep},+}^A$ & Mirror retained dirty discrepancy \\
    $D_Q$ & $\displaystyle \sup_a|F_{Q_{\rm keep}}^A(a)-F_P^A(a)|=\max\{D_{Q,+},D_{Q,-}\}$ & Two-sided retained dirty discrepancy \\
    $\delta_{\rm mix,+}^{\rm ref}$ & $\displaystyle (1-\tilde\varepsilon_\star)\Delta_{\rm trim,+}^A+\tilde\varepsilon_\star D_{Q,+}$ & Componentwise upper bound on $d_{R\to P,+}^A$ \\
    $\delta_{\rm mix,-}^{\rm ref}$ & $\displaystyle (1-\tilde\varepsilon_\star)\Delta_{\rm trim,-}^A+\tilde\varepsilon_\star D_{Q,-}$ & Componentwise upper bound on $d_{P\to R,+}^A$ \\
    $\delta_{\rm mix}^{\rm ref}(D_Q)$ & $\displaystyle (1-\tilde\varepsilon_\star)\Delta_{\rm trim}^A+\tilde\varepsilon_\star D_Q$ & Symmetric fallback diagnostic \\
    $\delta_{\rm mix}^{\rm wc}$ & $\displaystyle (1-\tilde\varepsilon_\star)\Delta_{\rm trim}^A+\tilde\varepsilon_\star$ & Worst-case dirty-side fallback, using $D_Q\le1$ \\
    $L_{\rm mix,+}$ & $\displaystyle [1-\alpha-\delta_{\rm mix,+}^{\rm ref}]_+$ unless a finite-sample $L_{\rm fs}$ version is explicitly stated & Population lower-bound diagnostic, not by itself a data-only certificate \\
    \bottomrule
  \end{tabular}
  \caption{Unified notation for one-sided, mirror, two-sided, and mixture score-CDF diagnostics used in the appendix.}
  \label{tab:unified-notation}
\end{table}

The notation $\delta_{\rm mix,+}^{\rm ref}$ and $\delta_{\rm mix,-}^{\rm ref}$ abbreviates the displayed quantities with inputs $D_{Q,+}$ and $D_{Q,-}$, respectively. The inequalities
\begin{equation*}
    d_{R\to P,+}^A\le\delta_{\rm mix,+}^{\rm ref},
\qquad
    d_{P\to R,+}^A\le\delta_{\rm mix,-}^{\rm ref},
    \qquad
    \max\{d_{R\to P,+}^A,d_{P\to R,+}^A\}\le\delta_{\rm mix}^{\rm ref}(D_Q)
\end{equation*}
are upper bounds from the retained-law mixture representation and the triangle inequality. They are not exact decompositions in general.

\subsection{Adaptive thresholds chosen on an independent tuning split}
\label{subsec:adaptive-thresholds}

The theorem allows data-adaptive threshold selection without extra selection penalties, provided the tuning data are independent of the contaminated calibration sample used for the final conformal quantile. Same-sample threshold selection is handled separately in Proposition~\ref{prop:same-sample-threshold-selection} and incurs an empirical-CDF uniformity penalty.

\begin{theorem}[Independent adaptive threshold]
  \label{thm:independent-adaptive-threshold}
  Let \(\mathcal H\) be an auxiliary or tuning sigma-field such that, conditionally on \(\mathcal G\vee\mathcal H\), the contaminated calibration sample remains i.i.d. from \(P^\star\) and independent of the clean test point.  Suppose \(A\), \(S\), and a random threshold \(\widehat t\) are \(\mathcal G\vee\mathcal H\)-measurable, and apply the trimmed split conformal procedure with \(t^\star=\widehat t\).  Define \(p_c(\widehat t)\), \(p_d(\widehat t)\), \(\mu_{\mathrm{keep}}(\widehat t)\), \(R_{\widehat t}\), and \(d_{R_{\widehat t}\to P,+}^{A}\) by replacing \(t^\star\) with \(\widehat t\) in Section~\ref{subsec:trimmed-setup}.  On the event \(\mu_{\mathrm{keep}}(\widehat t)>0\),
  \begin{equation}
    \label{eq:adaptive-threshold-coverage}
    \mathbb P\bigl(
      Y_{\mathrm{new}}\in C_\alpha^{\widehat t}(X_{\mathrm{new}})
      \,\big|\,
      \mathcal G\vee\mathcal H
    \bigr)
    \ge
    \bigl[1-\alpha-d_{R_{\widehat t}\to P,+}^{A}\bigr]_+.
  \end{equation}
  The exact identity \eqref{eq:exact-clean-coverage-identity} also holds conditionally on $\mathcal G\vee\mathcal H$. All retained-law quantities are evaluated at $\widehat t$. As in Theorem~\ref{thm:fixed-threshold-trimmed-coverage}, the identity uses generalised retained-score quantiles and does not require continuity of $F_{R_{\widehat t}}^A$.
\end{theorem}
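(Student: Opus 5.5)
The plan is to reduce Theorem~\ref{thm:independent-adaptive-threshold} to the fixed-threshold results by conditioning on the enlarged sigma-field $\mathcal G' := \mathcal G\vee\mathcal H$. By hypothesis, conditionally on $\mathcal G'$ the calibration sample $Z_1,\dots,Z_m$ is i.i.d.\ from $P^\star$ and independent of the clean test point. Also, $A$, $S$ and $\widehat t$ are $\mathcal G'$-measurable. So, once $\mathcal G'$ is fixed, the threshold $\widehat t$ plays exactly the role of a deterministic $t^\star$. In other words, every hypothesis of the setup in Section~\ref{subsec:trimmed-setup} holds with $\mathcal G$ replaced by $\mathcal G'$ and $t^\star$ replaced by $\widehat t$. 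Nothing in the proofs of Proposition~\ref{prop:retained-law-transfer}, Theorem~\ref{thm:fixed-threshold-trimmed-coverage} and Proposition~\ref{prop:finite-sample-scalar-bound} uses any property of $\mathcal G$ beyond these. Therefore those results can be applied verbatim under $\mathcal G'$.

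The key steps, in order, are as follows.

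First, work on the $\mathcal G'$-measurable event $\{\mu_{\rm keep}(\widehat t)>0\}$. Note that $p_c(\widehat t)$, $p_d(\widehat t)$, $R_{\widehat t}$ and $d_{R_{\widehat t}\to P,+}^A$ are $\mathcal G'$-measurable, being functionals of $P$, $Q$ and the $\mathcal G'$-measurable maps $A$, $S$ and threshold $\widehat t$.

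Second, reproduce the retained-law structure conditionally on $\mathcal G'$. The indicators $K_i=\mathbf 1\{S(Z_i)\le\widehat t\}$ are i.i.d.\ Bernoulli$(\mu_{\rm keep}(\widehat t))$, so $N_{\rm keep}\sim\mathrm{Binomial}(m,\mu_{\rm keep}(\widehat t))$. Moreover, given $N_{\rm keep}=n$ and $\mathcal G'$, the retained $Z_i$ are i.i.d.\ from $R_{\widehat t}$, by thinning of an i.i.d.\ sample.

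Third, invoke Theorem~\ref{thm:fixed-threshold-trimmed-coverage} under $\mathcal G'$. This gives the exact identity \eqref{eq:exact-clean-coverage-identity} with $\mu_{\rm keep}(\widehat t)$ and $R_{\widehat t}$. Generalised quantiles are handled exactly as in Lemma~\ref{lem:generalized-quantile-order}, so no continuity of $F^A_{R_{\widehat t}}$ is needed.

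Fourth, apply Proposition~\ref{prop:finite-sample-scalar-bound}, again under $\mathcal G'$. This yields $L_{\rm fs}(d)\ge[1-\alpha-d]_+$ with $d=d_{R_{\widehat t}\to P,+}^A$, which is \eqref{eq:adaptive-threshold-coverage}.

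If a self-contained argument is preferred, one can instead use Proposition~\ref{prop:retained-law-transfer} directly. Its transfer inequality holds pointwise in $\mathcal G'$, and the bound $P(A\le\tau)\ge F_R^A(\tau)-d$, combined with $\mathbb E\{F_R^A(\hat\tau_\alpha)\mid\mathcal G'\}\ge1-\alpha$, gives the same result.

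The main obstacle is measure-theoretic rather than computational. The statement ``conditionally on $\mathcal G'$ the sample is i.i.d.\ $P^\star$'' must be used to justify substituting the random $\widehat t$ into functions of the sample. I would handle this with a regular conditional distribution, or equivalently a freezing/substitution lemma. Concretely, write the conditional coverage as $\Phi(\widehat t, A, S)$, where $\Phi(t,a,s)$ is the coverage computed under an i.i.d.\ $P^\star$ sample for fixed $(t,a,s)$. This is justified because $(\widehat t,A,S)$ is $\mathcal G'$-measurable and the sample's conditional law given $\mathcal G'$ is the product $P^{\star\otimes m}$, which does not depend on $\mathcal G'$. Joint measurability of $(t,z)\mapsto\mathbf 1\{S(z)\le t\}$ makes $\Phi$ measurable. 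Once this substitution is in place, the fixed-threshold bounds apply pointwise in $\mathcal G'$ and hold almost surely on $\{\mu_{\rm keep}(\widehat t)>0\}$.
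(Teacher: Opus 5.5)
Your proposal is correct and follows the paper's own argument. You condition on $\mathcal G\vee\mathcal H$, note that $A$, $S$ and $\widehat t$ are then frozen while the calibration sample stays i.i.d.\ from $P^\star$, and apply the fixed-threshold identity and the scalar transfer bound verbatim at $t^\star=\widehat t$. Your freezing/substitution justification, and deriving the positive part from $L_{\rm fs}(d)\ge[1-\alpha-d]_+$ rather than from non-negativity of coverage, add rigor that the paper's two-line proof leaves implicit. One caveat applies to both your argument and the paper's: each tacitly requires the clean test point to keep conditional law $P$ given $\mathcal G\vee\mathcal H$.
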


\begin{proposition}[Oracle inequality for independent grid tuning]
  \label{prop:threshold-oracle-inequality}
  Let \(\mathcal T\) be a finite set of candidate thresholds.  For each \(t\in\mathcal T\), let \(B(t)\) be any population loss satisfying
  \[
    d_{R_t\to P,+}^{A}\le B(t),
  \]
  and let \(W(t)\) be any population efficiency functional.  Suppose an independent tuning stage produces \(\widehat B(t)\) and \(\widehat W(t)\), measurable with respect to \(\mathcal H\), and that on an event \(\mathcal E_{\mathrm{tune}}\in\mathcal H\),
  \[
    \sup_{t\in\mathcal T}|\widehat B(t)-B(t)|\le r_B,
    \qquad
    \sup_{t\in\mathcal T}|\widehat W(t)-W(t)|\le r_W.
  \]
  For \(\eta\ge0\), define
  \[
    \widehat{\mathcal T}_\eta=\{t\in\mathcal T:\widehat B(t)\le \eta\}.
  \]
  Fix a calibration-independent fallback threshold \(t_{\rm fb}\) with positive retained mass, for example \(+\infty\) for no trimming.  Define
  \[
    \widehat t=
    \begin{cases}
      \argmin_{t\in\widehat{\mathcal T}_\eta}\widehat W(t), & \widehat{\mathcal T}_\eta\ne\varnothing,\\
      t_{\rm fb}, & \widehat{\mathcal T}_\eta=\varnothing,
    \end{cases}
  \]
  with any deterministic tie-breaking rule.  On \(\mathcal E_{\mathrm{tune}}\cap\{\widehat{\mathcal T}_\eta\ne\varnothing\}\),
  \begin{equation}
    \label{eq:oracle-coverage-loss-control}
    B(\widehat t)\le \eta+r_B,
  \end{equation}
  and hence
  \[
    \mathbb P\{Y_{\mathrm{new}}\in C_\alpha^{\widehat t}(X_{\mathrm{new}})\mid
    \mathcal G\vee\mathcal H\}
    \ge
    \bigl[1-\alpha-\eta-r_B\bigr]_+.
  \]
  If the oracle-feasible set \(\{t\in\mathcal T:B(t)\le \eta-r_B\}\) is nonempty, then on the same event
  \begin{equation}
    \label{eq:oracle-efficiency-control}
    W(\widehat t)
    \le
    \inf_{t\in\mathcal T:B(t)\le\eta-r_B}W(t)+2r_W.
  \end{equation}
  Consequently, if
  \[
    \mathbb P\bigl(\mathcal E_{\mathrm{tune}}\cap
      \{\widehat{\mathcal T}_\eta\ne\varnothing\}\bigr)
    \ge 1-\beta-\beta_\varnothing,
  \]
  then the marginal clean-test coverage is at least
  \[
    [1-\beta-\beta_\varnothing]_+
    [1-\alpha-\eta-r_B]_+
    \ge
    1-\alpha-\eta-r_B-\beta-\beta_\varnothing,
  \]
  provided the selected threshold has positive retained mass on the event where the tuning rule is used. If a certified fallback loss $B(t_{\rm fb})\le B_{\rm fb}$ is available, the bound can be sharpened. On the empty-feasible-set event, one can use the conditional fallback loss instead of subtracting $\beta_\varnothing$.
\end{proposition}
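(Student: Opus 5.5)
The plan is to split the statement into a deterministic selection step, which holds pointwise on the tuning event, and a coverage step obtained by plugging the selected threshold into Theorem~\ref{thm:independent-adaptive-threshold}. Marginalisation then comes last.

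\emph{Loss control.} Work on $\mathcal E_{\rm tune}\cap\{\widehat{\mathcal T}_\eta\ne\varnothing\}$. Here $\widehat t\in\widehat{\mathcal T}_\eta$, so $\widehat B(\widehat t)\le\eta$. The uniform deviation bound then gives
\[
B(\widehat t)\le \widehat B(\widehat t)+r_B\le \eta+r_B,
\]
which is \eqref{eq:oracle-coverage-loss-control}. Since $\widehat t$ is a function of $\widehat B,\widehat W$ and deterministic tie-breaking, it is $\mathcal G\vee\mathcal H$-measurable. Theorem~\ref{thm:independent-adaptive-threshold} therefore applies on the event $\mu_{\rm keep}(\widehat t)>0$. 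Combining $d_{R_{\widehat t}\to P,+}^A\le B(\widehat t)\le\eta+r_B$ with monotonicity of $x\mapsto[1-\alpha-x]_+$ yields the conditional coverage bound.

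\emph{Efficiency control.} Let $t^\circ$ be any element of the oracle-feasible set $\{t:B(t)\le\eta-r_B\}$. Then $\widehat B(t^\circ)\le B(t^\circ)+r_B\le\eta$, so $t^\circ\in\widehat{\mathcal T}_\eta$; in particular this set is nonempty. Because $\widehat t$ minimises $\widehat W$ over $\widehat{\mathcal T}_\eta$,
\[
W(\widehat t)\le\widehat W(\widehat t)+r_W\le\widehat W(t^\circ)+r_W\le W(t^\circ)+2r_W.
\]
Taking the infimum over $t^\circ$ gives \eqref{eq:oracle-efficiency-control}. The same argument shows that nonemptiness of the oracle-feasible set implies the feasible-set event, which is worth recording.

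\emph{Marginalisation.} Let $E=\mathcal E_{\rm tune}\cap\{\widehat{\mathcal T}_\eta\ne\varnothing\}\in\mathcal H$. By the tower property,
\[
\mathbb P\{Y_{\rm new}\in C\}\ \ge\ \mathbb E\bigl[\mathbf 1_E\,\mathbb P(Y_{\rm new}\in C\mid\mathcal G\vee\mathcal H)\bigr]\ \ge\ \mathbb P(E)\,[1-\alpha-\eta-r_B]_+ ,
\]
using nonnegativity of conditional probability off $E$. Substituting $\mathbb P(E)\ge[1-\beta-\beta_\varnothing]_+$ gives the first displayed bound. The elementary inequality $ab\ge a+b-1$ for $a,b\in[0,1]$ then gives $1-\alpha-\eta-r_B-\beta-\beta_\varnothing$.

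For the sharpening remark, on the complement of the feasible-set event (but inside $\mathcal E_{\rm tune}$) one uses $\widehat t=t_{\rm fb}$. Theorem~\ref{thm:independent-adaptive-threshold} then gives conditional coverage at least $[1-\alpha-B_{\rm fb}]_+$, and one adds this contribution instead of discarding it.

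The main point needing care is the coverage step rather than any computation. The positivity hypothesis $\mu_{\rm keep}(\widehat t)>0$ must hold on the event used, and $\widehat t$ must be $\mathcal G\vee\mathcal H$-measurable so that the calibration sample stays i.i.d.\ given the threshold. Both follow from the stated assumptions and the independence of the tuning stage. Everything else is triangle inequalities.
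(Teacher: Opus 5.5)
Your proposal is correct and follows the paper's proof essentially step for step. The loss bound comes from $\widehat B(\widehat t)\le\eta$ plus the uniform deviation $r_B$, and coverage comes from Theorem~\ref{thm:independent-adaptive-threshold} via $d_{R_{\widehat t}\to P,+}^A\le B(\widehat t)$. Efficiency follows by showing every oracle-feasible $t$ lies in $\widehat{\mathcal T}_\eta$ and chaining $W(\widehat t)\le\widehat W(\widehat t)+r_W\le\widehat W(t)+r_W\le W(t)+2r_W$, and marginalisation by discarding coverage off the good event. Your added remarks on the $\mathcal G\vee\mathcal H$-measurability of $\widehat t$, the positivity requirement, and the inequality $ab\ge a+b-1$ spell out steps the paper leaves implicit.
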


\subsection{Operational certificate details}
\label{subsec:operational-certificate-details}

Proposition~\ref{prop:operational-certificate}, stated and proved in Appendix~\ref{app:operational-certificate}, clarifies how to use the population mixture upper-bound diagnostic. The proposition is deliberately modular. An implementation must provide a lower bound on clean retention, an upper bound on dirty retention, a clean one-sided distortion certificate, a retained dirty-side one-sided discrepancy certificate, and either the true contamination fraction or a defensible upper bound.

The worst-case dirty-side choice $B_{Q,+}=1$ is always valid, but may be conservative. Proposition~\ref{prop:exact-audit-certificate} gives a complementary route. If independent clean labelled audit data are available, the final selected set can be certified directly by exact binomial inversion. Selected score CDFs can also be certified by a one-sided KS band, without separately auditing every mixture component.

\begin{remark}[Population and data-driven status of the certificates]
    \label{rem:operational-status}
    The refined quantities in Theorem~\ref{thm:fixed-threshold-trimmed-coverage} are population quantities for the realised auxiliary stage. In simulations, they can be computed from the known data-generating mechanism and used as diagnostics. In applications, they become coverage certificates only after the bounds in Proposition~\ref{prop:operational-certificate} are justified, without reusing the final contaminated calibration scores for post-hoc tuning. Alternatively, an independent audit sample can certify the final set through Proposition~\ref{prop:exact-audit-certificate}. If the certificate inputs hold only on a high-probability auxiliary event, Corollary~\ref{cor:marginalized-coverage} gives the corresponding marginal coverage reduction.
\end{remark}

\begin{remark}[Gaussian Mahalanobis certificate example]
  Suppose the clean covariates are certified to satisfy $X\sim N(\mu,\Sigma)$. Take
\begin{equation*}
    S(x)=(x-\mu)^\top\Sigma^{-1}(x-\mu),
\end{equation*}
with threshold $t=\chi^2_{d,q}$. Then the clean-retention certificate is $L_c=q$. If $\mu$, $\Sigma$, and the threshold are estimated from an independent clean reference split, one may instead use a DKW-lowered empirical version \citep{dvoretzky1956asymptotic,massart1990tight}.

Dirty retention and retained dirty-score discrepancy still require external information. One may assume or audit $p_d\le U_d$. When no retained-dirty score-law information is available, $B_{Q,+}=1$ is the always-valid fallback. With these inputs, Proposition~\ref{prop:operational-certificate} gives a fully numerical clean-coverage lower bound. This example shows that the certificate interface is operational only when its inputs are independently justified.
\end{remark}

\subsection{Upper-bound and threshold-efficiency statements}
\label{app:upper-bound}

\begin{proposition}[One-sided coverage upper bound under no ties or randomized ties]
  \label{prop:coverage-upper-bound}
  Under the setup of Section~\ref{subsec:trimmed-setup}, let
  \begin{equation*}
    R:=P^\star(\,\cdot\mid S(Z)\le t^\star)
  \end{equation*}
  be the retained mixture law, and assume $\mu_{\mathrm{keep}}>0$. The first inequality below requires only this retained-law positivity. The subsequent bounds involving $\delta_{\mathrm{mix},-}^{\mathrm{ref}}$ and $\delta_{\mathrm{mix}}^{\mathrm{ref}}(D_Q)$ additionally require $p_c>0$.

  If the distribution of $A(W)$, $W\sim R$, is continuous, then the ordinary inclusive trimmed split-conformal set satisfies
  \begin{align}
    \label{eq:coverage-upper-bound}
    \mathbb P\bigl(
      Y_{\mathrm{new}}\in C_\alpha(X_{\mathrm{new}})
      \,\big|\,
      \mathcal G
    \bigr)
    &\le
    1-\alpha+d_{P\to R,+}^{A}+\beta_m(\mu_{\mathrm{keep}})\notag\\
    &\le
    1-\alpha+\delta_{\mathrm{mix},-}^{\mathrm{ref}}
    +\beta_m(\mu_{\mathrm{keep}})\notag\\
    &\le
    1-\alpha+\delta_{\mathrm{mix}}^{\mathrm{ref}}(D_Q)
    +\beta_m(\mu_{\mathrm{keep}}).
  \end{align}
  The same numerical upper bound holds without the continuity assumption under the following randomised lexicographic rule. Attach independent variables $U_i,U_0\sim{\rm Unif}(0,1)$ to the retained calibration scores and the test score. Order the retained calibration pairs $(A(Z_i),U_i)$ lexicographically. Set $\widehat T_\alpha$ to the $r_{\rm keep}$-th retained pair when $r_{\rm keep}\le N_{\rm keep}$, and to $(+\infty,1)$ otherwise. Replace the prediction event by
  \[
    \{(A(Z_{\mathrm{new}}),U_0)\le_{\rm lex}\widehat T_\alpha\}.
  \]
  Here
  \begin{equation}
    \label{eq:beta-m-definition}
    \beta_m(\mu)
    :=
    \mathbb E\!\left[\frac{1}{N+1}\right]
    =
    \frac{1-(1-\mu)^{m+1}}{(m+1)\mu},
    \qquad
    N\sim\operatorname{Binomial}(m,\mu).
  \end{equation}
  Thus, the coverage excess over $1-\alpha$ is controlled by the mirror one-sided retained-to-clean distortion, plus the usual finite-sample conformal granularity term. The final symmetric inequality is a conservative fallback.
\end{proposition}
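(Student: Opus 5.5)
The plan is to condition on the retained count, compute the retained-law coverage exactly by the usual rank argument, and then transfer from $R$ to $P$ using the mirror gap $d_{P\to R,+}^A$. The mixture inequalities then follow from a convex-combination argument.

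\emph{Step 1: exact coverage under $R$.} Conditional on $\mathcal G$ and $N_{\rm keep}=n$, the retained scores are i.i.d.\ from $R$, as already used in Proposition~\ref{prop:retained-law-transfer}.
\begin{itemize}[leftmargin=2em, itemsep=0.3ex]
  \item If $r_n=n+1$, which includes $n=0$, then $\hat\tau_\alpha=+\infty$ and the coverage equals $1=r_n/(n+1)$.
  \item If $r_n\le n$ and the law of $A(W)$, $W\sim R$, is continuous, then $F_R^A(A(W_i))$ are i.i.d.\ uniform. Hence $F_R^A(\hat\tau_\alpha)\sim\mathrm{Beta}(r_n,n+1-r_n)$, and $\mathbb E\{F_R^A(\hat\tau_\alpha)\mid N_{\rm keep}=n,\mathcal G\}=r_n/(n+1)$.
\end{itemize}
In both cases, since $r_n=\lceil (n+1)(1-\alpha)\rceil<(n+1)(1-\alpha)+1$, we get $r_n/(n+1)\le 1-\alpha+1/(n+1)$.

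\emph{Step 2: transfer to $P$ and average over $N$.} Clean coverage given $N_{\rm keep}=n$ is $\mathbb E\{F_P^A(\hat\tau_\alpha)\mid n,\mathcal G\}$, with $F_P^A(+\infty)=F_R^A(+\infty)=1$. Pointwise, $F_P^A(t)\le F_R^A(t)+d_{P\to R,+}^A$, so the conditional coverage is at most $1-\alpha+1/(n+1)+d_{P\to R,+}^A$. Averaging over $N_{\rm keep}\sim\mathrm{Binomial}(m,\mu_{\rm keep})$ gives the first inequality, with $\beta_m(\mu_{\rm keep})=\mathbb E[1/(N+1)]$. The closed form follows from the standard identity $\binom{m}{n}/(n+1)=\binom{m+1}{n+1}/(m+1)$, which gives $\sum_{n}\binom{m}{n}\mu^n(1-\mu)^{m-n}/(n+1)=\{1-(1-\mu)^{m+1}\}/\{(m+1)\mu\}$.

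\emph{Step 3: mixture bounds.} Assume $p_c>0$. By \eqref{eq:retained-law-decomposition},
\[
F_P^A-F_R^A=(1-\tilde\varepsilon_\star)\bigl(F_P^A-F_{P_{\rm keep}}^A\bigr)+\tilde\varepsilon_\star\bigl(F_P^A-F_{Q_{\rm keep}}^A\bigr),
\]
with the dirty term dropped when $p_d=0$. Taking positive parts and suprema, and using subadditivity of $(\cdot)_+$ and of $\sup$, gives $d_{P\to R,+}^A\le\delta_{\rm mix,-}^{\rm ref}$. Since $\Delta_{\rm trim,-}^A\le\Delta_{\rm trim}^A$ and $D_{Q,-}\le D_Q$, the symmetric fallback $\delta_{\rm mix}^{\rm ref}(D_Q)$ follows.

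\emph{Step 4: the randomised lexicographic rule.} This is the main obstacle, since without continuity Step 1 fails. I would replace $A$ by the randomised probability-integral transform
\[
V=F_R^A(A^-)+U\,R\{A(W)=A\},
\]
where $F_R^A(a^-)$ denotes the left limit at $a$. This $V$ is exactly uniform under $R$. The map $(a,u)\mapsto F_R^A(a^-)+u\,R\{A=a\}$ is nondecreasing in the lexicographic order, and strictly increasing off $R$-null sets. So the lex-ordered retained pairs correspond to ordered $V$'s almost surely, and the Beta argument of Step 1 goes through with $G_R(a,u):=R\{(A,U)\le_{\rm lex}(a,u)\}$ playing the role of $F_R^A$.

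For the test point under $P$, coverage is $\mathbb E\,G_P(\widehat T_\alpha)$, where $G_P(a,u)=F_P^A(a^-)+u\,P\{A=a\}$. This is a convex combination of $F_P^A(a^-)$ and $F_P^A(a)$. Both are bounded by the corresponding $R$-quantities plus $d_{P\to R,+}^A$; for the left limit, take limits along $t\uparrow a$ in the pointwise bound. Hence $G_P\le G_R+d_{P\to R,+}^A$. The same averaging as in Step 2 then yields the identical numerical bound, and Step 3 applies unchanged.
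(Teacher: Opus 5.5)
Your proposal is correct and follows the paper's proof essentially step for step. You condition on $N_{\rm keep}=n$, bound the retained-law coverage by $r_n/(n+1)\le 1-\alpha+1/(n+1)$, and transfer to $P$ via $F_P^A\le F_R^A+d_{P\to R,+}^A$; for the lexicographic rule you use the pair CDF $G_M(a,u)=M\{A<a\}+u\,M\{A=a\}$, written as a convex combination of left and right limits. You then average with the binomial identity and apply the triangle inequality to the mixture decomposition, exactly as the paper does.

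The only difference is cosmetic. You obtain $r_n/(n+1)$ through the (randomised) probability-integral transform and a Beta order statistic, rather than the exchangeability rank identity. This has the benefit of making explicit the ``randomised retained-rank calculation'' that the paper states only tersely.
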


The next proposition is an efficiency diagnostic, not a coverage guarantee. The coverage theorem above does not require its local density condition.

\begin{proposition}[Threshold and width perturbation under a local density condition]
  \label{prop:threshold-efficiency}
  Assume \(\mu_{\rm keep}>0\) and \(p_c>0\), so that \(R\), \(P_{\mathrm{keep}}\), and \(\delta_{\mathrm{mix}}^{\mathrm{ref}}(D_Q)\) are well defined.
  Let \(p:=1-\alpha\), and let
  \[
    q_P(p):=\inf\{x\in\mathbb R:F_P^A(x)\ge p\}.
  \]
  Assume \(q_P(p)\in\mathbb R\), \(F_P^A\) is continuous at \(q_P(p)\), and there exist \(\lambda_{\min}>0\) and \(\rho>0\) such that the two-sided CDF increment condition \eqref{eq:cdf-increment-lower} of Lemma~\ref{lem:quantile-perturbation} holds for \(F=F_P^A\).  A sufficient condition is that \(F_P^A\) is absolutely continuous and admits a density version satisfying \(f_P^A(x)\ge\lambda_{\min}\) for Lebesgue-a.e. \(x\in[q_P(p)-\rho,q_P(p)+\rho]\).
  Fix an integer
  \[
    n_0\ge \lceil 1/\alpha\rceil-1
  \]
  and \(\beta\in(0,1)\). Define
  \[
    \eta_{n_0,\beta}
    :=
    \sqrt{\frac{\log(2/\beta)}{2n_0}},
    \qquad
    \Gamma_{n_0,\beta}
    :=
    \delta_{\mathrm{mix}}^{\mathrm{ref}}(D_Q)
    +
    \eta_{n_0,\beta}
    +
    \frac{2}{n_0}.
  \]
  If \(\Gamma_{n_0,\beta}< \lambda_{\min}\rho\), then, conditional on the fixed auxiliary stage,
  \begin{equation}
    \label{eq:threshold-efficiency-prob}
    \mathbb P\left(
      |\hat\tau_\alpha-q_P(1-\alpha)|
      \le
      \frac{\Gamma_{n_0,\beta}}{\lambda_{\min}}
    \right)
    \ge
    1-\mathbb P(N_{\mathrm{keep}}<n_0)-\beta.
  \end{equation}
  If, in addition, the nonconformity score is the absolute residual
  \(A(x,y)=|y-\hat f(x)|\), so that retained scores are nonnegative and
  \[
    C_\alpha(x)=[\hat f(x)-\hat\tau_\alpha,\hat f(x)+\hat\tau_\alpha],
  \]
  then the same event implies
  \[
    \operatorname{width}(C_\alpha(x))
    \le
    2q_P(1-\alpha)
    +
    \frac{2\Gamma_{n_0,\beta}}{\lambda_{\min}}.
  \]
\end{proposition}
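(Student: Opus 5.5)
The plan is to work on the event $\{N_{\rm keep}\ge n_0\}$ and argue conditionally on $N_{\rm keep}=n\ge n_0$. There the retained scores are i.i.d.\ from $R$, and $\hat\tau_\alpha$ is the $r_n$-th order statistic with $r_n=\lceil (n+1)(1-\alpha)\rceil\le n$; this uses $n_0\ge\lceil 1/\alpha\rceil-1$, which is exactly what prevents $\hat\tau_\alpha=+\infty$. The argument has three layers. First, a DKW bound controls the empirical retained CDF $\hat F_n$ against $F_R^A$. Second, the mixture upper bound controls $F_R^A$ against $F_P^A$. Third, Lemma~\ref{lem:quantile-perturbation} converts a uniform CDF perturbation into a quantile perturbation using the increment condition \eqref{eq:cdf-increment-lower}.

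\emph{Step 1 (DKW).} By the Dvoretzky--Kiefer--Wolfowitz inequality with Massart's constant, conditional on $N_{\rm keep}=n$,
\[
\mathbb P\bigl(\sup_t|\hat F_n(t)-F_R^A(t)|>\eta\bigr)\le 2e^{-2n\eta^2}.
\]
Since $n\ge n_0$, taking $\eta=\eta_{n_0,\beta}$ makes this at most $\beta$. Averaging over $n\ge n_0$ and adding $\mathbb P(N_{\rm keep}<n_0)$ yields the probability in \eqref{eq:threshold-efficiency-prob}.

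\emph{Step 2 (transfer to $P$).} The two-sided bound $\sup_t|F_R^A-F_P^A|\le\delta_{\rm mix}^{\rm ref}(D_Q)$ follows by writing $F_R^A-F_P^A=(1-\tilde\varepsilon_\star)(F_{P_{\rm keep}}^A-F_P^A)+\tilde\varepsilon_\star(F_{Q_{\rm keep}}^A-F_P^A)$ and applying the triangle inequality, as in Appendix~\ref{app:unified-notation}. On the good event, therefore,
\[
\sup_t|\hat F_n-F_P^A|\le\delta_{\rm mix}^{\rm ref}(D_Q)+\eta_{n_0,\beta}.
\]

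\emph{Step 3 (empirical quantile to population quantile).} This is the step I expect to be the main obstacle, because of discreteness. $\hat\tau_\alpha$ is the lower generalised inverse of $\hat F_n$ at level $r_n/n$, and the target is $q_P(p)$ with $p=1-\alpha$. We have $r_n/n\ge 1-\alpha$, and $r_n/n\le ((n+1)(1-\alpha)+1)/n\le 1-\alpha+2/n$. So the level offset is at most $2/n\le 2/n_0$. In addition, $\hat F_n(\hat\tau_\alpha)\ge r_n/n$ and $\hat F_n(\hat\tau_\alpha^-)\le (r_n-1)/n$.

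Combining these with Step 2 gives two CDF bounds at $\hat\tau_\alpha$:
\[
F_P^A(\hat\tau_\alpha)\ge p-\Gamma_{n_0,\beta},\qquad F_P^A(\hat\tau_\alpha^-)\le p+\Gamma_{n_0,\beta}.
\]
Lemma~\ref{lem:quantile-perturbation} should then apply, using $\Gamma<\lambda_{\min}\rho$ to stay inside the window $[q_P(p)-\rho,\,q_P(p)+\rho]$ and continuity of $F_P^A$ at $q_P(p)$. Concretely, if $\hat\tau_\alpha>q_P(p)+\Gamma/\lambda_{\min}$, then by the increment condition $F_P^A(\hat\tau_\alpha^-)\ge p+\lambda_{\min}\cdot\Gamma/\lambda_{\min}$, which contradicts the upper bound, with strictness handled by a limiting argument. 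The symmetric case $\hat\tau_\alpha<q_P(p)-\Gamma/\lambda_{\min}$ contradicts the lower bound. If the lemma is stated for pointwise rather than left-limit CDF values, I would instead apply it at $\hat\tau_\alpha-\epsilon$ and let $\epsilon\downarrow0$.

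\emph{Step 4 (width).} For the absolute residual, $C_\alpha(x)=[\hat f(x)-\hat\tau_\alpha,\,\hat f(x)+\hat\tau_\alpha]$ has width $2\hat\tau_\alpha$. On the good event, $2\hat\tau_\alpha\le 2q_P(1-\alpha)+2\Gamma_{n_0,\beta}/\lambda_{\min}$, which is the stated width bound.
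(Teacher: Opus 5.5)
Your proposal is correct and follows essentially the same route as the paper: DKW on the retained empirical CDF on $\{N_{\rm keep}\ge n_0\}$, the two-sided mixture bound $\sup_t|F_R^A-F_P^A|\le\delta_{\rm mix}^{\rm ref}(D_Q)$, and the local increment condition, with the $2/n_0$ term absorbing the offset between $r_n/n$ and $1-\alpha$. The only difference is cosmetic. The paper evaluates the CDFs at the fixed points $q_P(p)\pm\Gamma_{n_0,\beta}/\lambda_{\min}$ and reads off the position of the $r_n$-th order statistic directly. This avoids the left-limit and strictness bookkeeping that you handle by a limiting argument, which is valid since $\Gamma_{n_0,\beta}<\lambda_{\min}\rho$. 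Lemma~\ref{lem:quantile-perturbation} is not needed verbatim, since the quantile levels differ; your direct argument is what carries the step.
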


\begin{remark}[Role of the density condition]
  \label{rem:density-condition}
    Proposition~\ref{prop:threshold-efficiency} is not needed for validity. It is an efficiency statement, so it requires additional regularity. In particular, the local density lower bound excludes atoms and flat regions near the clean oracle cutoff. For discrete or heavily tied nonconformity scores, the lower-bound theorem remains valid. However, this quantile-perturbation route to width control requires a separate discrete-score analysis.
\end{remark}

\subsection{Marginalising high-probability conditional guarantees}

\begin{corollary}[From conditional eventwise guarantees to marginal coverage]
  \label{cor:marginalized-coverage}
  Let \(\mathcal G\) be an auxiliary sigma-field, and let \(\mathcal E\in\mathcal G\) be an event with
  \[
    \mathbb P(\mathcal E^c)\le \beta.
  \]
  Suppose that, on \(\mathcal E\),
  \[
    \mathbb P\bigl(
      Y_{\mathrm{new}}\in C_\alpha(X_{\mathrm{new}})
      \,\big|\,
      \mathcal G
    \bigr)
    \ge
    1-\alpha-L
  \]
  for a nonnegative \(\mathcal G\)-measurable loss \(L\), and that \(L\le L_0\) on \(\mathcal E\). Then
  \begin{equation}
    \label{eq:marginalized-coverage-product}
    \mathbb P\bigl(
      Y_{\mathrm{new}}\in C_\alpha(X_{\mathrm{new}})
    \bigr)
    \ge
    (1-\beta)[1-\alpha-L_0]_+.
  \end{equation}
  In particular, since coverage probabilities are bounded by one,
  \begin{equation}
    \label{eq:marginalized-coverage-additive}
    \mathbb P\bigl(
      Y_{\mathrm{new}}\in C_\alpha(X_{\mathrm{new}})
    \bigr)
    \ge
    1-\alpha-L_0-\beta.
  \end{equation}
  For example, if
  \[
    \mathcal E=\mathcal E_s\cap\mathcal E_{\mathrm{ref}}(\delta_\ell),
    \qquad
    \mathbb P(\mathcal E_s^c)\le \beta_s,
  \]
  and the clean-reference DKW event \citep{dvoretzky1956asymptotic,massart1990tight} satisfies
  \[
    \mathbb P\bigl(\mathcal E_{\mathrm{ref}}(\delta_\ell)^c\bigr)
    \le
    2e^{-2\ell\delta_\ell^2},
  \]
  then one may take
  \[
    \beta=\beta_s+2e^{-2\ell\delta_\ell^2}.
  \]
\end{corollary}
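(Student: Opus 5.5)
The statement to prove is the marginalisation Corollary~\ref{cor:marginalized-coverage}. The plan is a tower-property argument. Condition on $\mathcal G$ and split the expectation over $\mathcal E$ and $\mathcal E^c$. Write $\Pi:=\mathbb P(Y_{\rm new}\in C_\alpha(X_{\rm new})\mid\mathcal G)$, which is a $\mathcal G$-measurable random variable with values in $[0,1]$. Then
\[
\mathbb P\bigl(Y_{\rm new}\in C_\alpha(X_{\rm new})\bigr)=\mathbb E[\Pi]=\mathbb E[\Pi\mathbf 1_{\mathcal E}]+\mathbb E[\Pi\mathbf 1_{\mathcal E^c}].
\]
Since $\mathcal E\in\mathcal G$, the indicator $\mathbf 1_{\mathcal E}$ is $\mathcal G$-measurable. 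This is what makes the hypothesis, which holds pointwise on $\mathcal E$, usable inside the expectation.

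On $\mathcal E$ the hypothesis gives $\Pi\ge 1-\alpha-L$. Because $L\le L_0$ there, we get $\Pi\ge 1-\alpha-L_0$. Combined with $\Pi\ge0$, this yields $\Pi\ge[1-\alpha-L_0]_+$ on $\mathcal E$. On $\mathcal E^c$ we use only $\Pi\ge 0$. Hence
\[
\mathbb E[\Pi]\ge [1-\alpha-L_0]_+\,\mathbb P(\mathcal E)\ge (1-\beta)[1-\alpha-L_0]_+,
\]
which is \eqref{eq:marginalized-coverage-product}.

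For the additive form \eqref{eq:marginalized-coverage-additive}, write $c:=[1-\alpha-L_0]_+\in[0,1]$, which lies in $[0,1]$ since $\alpha\in(0,1)$ and $L_0\ge 0$. Then
\[
(1-\beta)c=c-\beta c\ge c-\beta\ge 1-\alpha-L_0-\beta,
\]
using $c\le 1$ and $c\ge 1-\alpha-L_0$. If $\beta\ge 1$, the claim is trivial.

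For the example, a union bound gives $\mathbb P(\mathcal E^c)\le\mathbb P(\mathcal E_s^c)+\mathbb P(\mathcal E_{\rm ref}(\delta_\ell)^c)\le\beta_s+2e^{-2\ell\delta_\ell^2}$. The DKW--Massart inequality supplies the second term. We also need both events to lie in $\mathcal G$, which holds because they are defined from auxiliary or reference data.

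The only delicate point is measurability: the conditional coverage bound must hold almost surely on an event belonging to $\mathcal G$. I would state explicitly that $\mathcal E\in\mathcal G$ and that $L$ is $\mathcal G$-measurable, so the pointwise inequality integrates correctly. No other obstacle arises.
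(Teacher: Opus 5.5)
Your proposal is correct and follows essentially the same argument as the paper. Both lower-bound $\Pi\mathbf 1_{\mathcal E}$ by $[1-\alpha-L_0]_+\mathbf 1_{\mathcal E}$, take expectations, apply $(1-\beta)c\ge c-\beta$ for $c\in[0,1]$ to get the additive form, and use a union bound for the example.
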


\subsection{Contaminated-mixture evaluation}

\begin{proposition}[Coverage under the contaminated test distribution]
  \label{prop:mixture-coverage-fixed}
  Under the calibration and procedure setup of Section~\ref{subsec:trimmed-setup}, suppose the test point is distributed as
  \[
    Z_{\mathrm{new}}\sim P^\star=(1-\varepsilon)P+\varepsilon Q
  \]
  independently of the calibration sample, after conditioning on \(\mathcal G\).  Assume \(\mu_{\rm keep}>0\), and define
  \[
    d_{R\to P^\star,+}^{A}:=\sup_t\bigl(F_R^A(t)-F_{P^\star}^A(t)\bigr)_+.
  \]
  Then, almost surely in \(\mathcal G\),
  \begin{equation}
    \label{eq:mixture-direct-transfer}
    \mathbb P_{Z_{\mathrm{new}}\sim P^\star}
    \{Y_{\mathrm{new}}\in C_\alpha(X_{\mathrm{new}})\mid\mathcal G\}
    \ge
    \bigl[1-\alpha-d_{R\to P^\star,+}^{A}\bigr]_+.
  \end{equation}
  The exact identity \eqref{eq:exact-clean-coverage-identity} also holds with \(P\) replaced by \(P^\star\).  In particular, when \(t^\star=+\infty\), \(R=P^\star\), and the direct bound recovers the ordinary split-conformal lower bound \(1-\alpha\).

  The conservative clean-component reduction always remains valid
  \begin{equation}
    \label{eq:mixture-reduction}
    \mathbb P_{Z_{\mathrm{new}}\sim P^\star}
    \{Y_{\mathrm{new}}\in C_\alpha(X_{\mathrm{new}})\mid\mathcal G\}
    \ge
    (1-\varepsilon)
    \mathbb P_{Z_{\mathrm{new}}\sim P}
    \{Y_{\mathrm{new}}\in C_\alpha(X_{\mathrm{new}})\mid\mathcal G\}.
  \end{equation}
  If, in addition, \(p_c>0\), applying the clean-target mixture upper-bound diagnostic gives
  \begin{equation}
    \label{eq:mixture-coverage-fixed}
    \mathbb P_{Z_{\mathrm{new}}\sim P^\star}
    \{Y_{\mathrm{new}}\in C_\alpha(X_{\mathrm{new}})\mid\mathcal G\}
    \ge
    (1-\varepsilon)\bigl[1-\alpha-\delta_{\mathrm{mix}}^{\mathrm{ref}}(D_Q)\bigr]_+,
  \end{equation}
  and the worst-case dirty-side version is
  \begin{equation}
    \label{eq:mixture-coverage-fixed-worst}
    \mathbb P_{Z_{\mathrm{new}}\sim P^\star}
    \{Y_{\mathrm{new}}\in C_\alpha(X_{\mathrm{new}})\mid\mathcal G\}
    \ge
    (1-\varepsilon)\bigl[1-\alpha-\delta_{\mathrm{mix}}^{\mathrm{wc}}\bigr]_+.
  \end{equation}
  The last two bounds are deliberately conservative, because they ignore any coverage of the dirty test component.
\end{proposition}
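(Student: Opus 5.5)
The statement to prove is Proposition~\ref{prop:mixture-coverage-fixed}. The plan is to treat it almost entirely as a corollary of results already in place, since nothing in the retained-law argument used the identity of the test law beyond its independence from the calibration sample.

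\emph{Step 1: the direct transfer bound \eqref{eq:mixture-direct-transfer}.} Proposition~\ref{prop:retained-law-transfer} is stated for any independent target law $P_0$. I will take $P_0=P^\star$, which gives
\[
\mathbb P_{P^\star}\{A(Z_{\rm new})\le\hat\tau_\alpha\mid\mathcal G\}\ge 1-\alpha-d^A_{R\to P^\star,+}.
\]
Since the left side is a probability, it is also nonnegative, so the bound holds with the positive part. Likewise, Theorem~\ref{thm:fixed-threshold-trimmed-coverage} explicitly allows any independent target in place of $P$, so the exact identity with $P^\star$ follows immediately. When $t^\star=+\infty$ we have $p_c=p_d=1$, hence $R=P^\star$ and the gap $d^A_{R\to P^\star,+}$ vanishes. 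This recovers $1-\alpha$.

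\emph{Step 2: the clean-component reduction \eqref{eq:mixture-reduction}.} Coverage under a mixture test law is linear in the test law. Conditioning on the calibration sample and $\mathcal G$, $\hat\tau_\alpha$ is fixed, and
\[
\mathbb P_{P^\star}\{\cdot\}=(1-\varepsilon)\,\mathbb P_P\{\cdot\}+\varepsilon\,\mathbb P_Q\{\cdot\}.
\]
Taking the expectation over the calibration sample preserves this linearity. Dropping the nonnegative dirty term then gives \eqref{eq:mixture-reduction}.

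\emph{Step 3: the mixture bounds \eqref{eq:mixture-coverage-fixed} and \eqref{eq:mixture-coverage-fixed-worst}.} I will lower-bound the clean coverage by Proposition~\ref{prop:finite-sample-scalar-bound}, which gives $[1-\alpha-d^A_{R\to P,+}]_+$. Next I will use the triangle-inequality mixture bound $d^A_{R\to P,+}\le\delta^{\rm ref}_{\rm mix,+}$. Finally, because each one-sided quantity is at most its two-sided counterpart ($\Delta^A_{\rm trim,+}\le\Delta^A_{\rm trim}$ and $D_{Q,+}\le D_Q$), this is at most $\delta^{\rm ref}_{\rm mix}(D_Q)$. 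The worst-case version then follows from $D_Q\le1$.

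The only point requiring care is the pointwise mixture inequality. On $p_c>0$, Bayes' rule \eqref{eq:retained-law-decomposition} gives
\[
F_R^A-F_P^A=(1-\tilde\varepsilon_\star)(F^A_{P_{\rm keep}}-F^A_P)+\tilde\varepsilon_\star(F^A_{Q_{\rm keep}}-F^A_P).
\]
Taking positive parts and suprema term by term gives $d^A_{R\to P,+}\le\delta^{\rm ref}_{\rm mix,+}$. The case $p_d=0$ is covered by the zero-weight convention. I expect no real obstacle here; the main thing to verify is that the conditioning on $\mathcal G$ and the positivity conventions ($\mu_{\rm keep}>0$, $p_c>0$) are carried through consistently.
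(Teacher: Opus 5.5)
Your proposal is correct and follows essentially the same route as the paper. Both arguments specialise Proposition~\ref{prop:retained-law-transfer} and Theorem~\ref{thm:fixed-threshold-trimmed-coverage} to the target $P_0=P^\star$, split mixture-test coverage linearly and drop the nonnegative dirty term, then lower-bound the clean component through the scalar transfer bound, the mixture upper bound, $\Delta^A_{\mathrm{trim},+}\le\Delta^A_{\mathrm{trim}}$, $D_{Q,+}\le D_Q$, and finally $D_Q\le 1$; your proposal just writes out explicitly the steps the paper compresses into ``applying Theorem~\ref{thm:fixed-threshold-trimmed-coverage} to the clean component.''
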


\begin{remark}[Choosing the fixed threshold]\label{rem:threshold-choice}
  Throughout the theoretical analysis, $t^\star$ is treated as fixed after conditioning on the clean auxiliary stage. This is essential for the pointwise-thinning argument in Theorem~\ref{thm:fixed-threshold-trimmed-coverage}. Once the auxiliary clean data have been realised, the retention indicators
  \begin{equation*}
      K_i=\mathbf{1}\{S_i\le t^\star\}
  \end{equation*}
  must be defined by applying a deterministic threshold separately to each calibration observation.

  In practice, however, $t^\star$ need not be chosen arbitrarily. Several principled strategies are available. One can choose $t^\star$ using external validation data or prior domain knowledge about anomalous values of $S$. Another option is a sensitivity analysis over a prespecified threshold grid, reporting coverage and retained sample size across the grid.

  A third option is to tune $t^\star$ using an auxiliary clean reference set, when available. More generally, Theorem~\ref{thm:independent-adaptive-threshold} allows arbitrary threshold-selection rules based on an independent tuning split. Proposition~\ref{prop:threshold-oracle-inequality} gives a finite-grid oracle inequality when the tuning split provides uniform estimates of coverage loss and efficiency.

  In implementations with a fitted auxiliary anomaly score, the clean reference split is used only for threshold selection. The anomaly score itself is already fixed after the auxiliary score-fitting stage.

  Our theory makes clear that the choice of $t^\star$ affects robustness through
  two distinct quantities:
  \begin{equation*}
    \tilde\varepsilon_\star
    =
    \frac{\varepsilon p_d}{(1-\varepsilon)p_c+\varepsilon p_d},
    \qquad
    \Delta_{\mathrm{trim}}^{A}
    =
    \sup_{t\in\mathbb{R}}
    \bigl|F_{P_{\mathrm{keep}}}^A(t)-F_P^A(t)\bigr|.
  \end{equation*}
  The first term, $\tilde\varepsilon_\star$, is the retained mixture coefficient of the dirty component in the retained calibration set. The second term, $\Delta_{\mathrm{trim}}^{A}$, measures how much the same threshold distorts the clean nonconformity distribution.

  This suggests a practical rule for choosing $t^\star$. If the main goal is reliable clean-test coverage, one should first keep $\Delta_{\mathrm{trim}}^{A}$ small. Overly aggressive trimming can strongly perturb the clean conformal quantile, even when it reduces the retained mixture coefficient.

  When $S$ clearly separates clean and contaminated observations, more aggressive trimming can help. It can reduce $\tilde\varepsilon_\star$ without causing a comparable increase in $\Delta_{\mathrm{trim}}^{A}$. In weak- or no-separation regimes, aggressive trimming is unlikely to help. It mainly removes clean calibration points and increases clean trimming distortion, without substantially reducing the retained mixture coefficient.

  If an auxiliary clean reference split is available, a natural choice is a clean-reference quantile rule for $t^\star$. This stabilises the clean retention probability at a target level. More broadly, threshold selection balances two objectives: making contaminated points unlikely to survive trimming, and keeping the retained clean nonconformity distribution close to its untrimmed version.
\end{remark}

\section{Technical remarks and edge cases}
\label{app:technical-remarks}
\begin{lemma}[Generalized quantiles commute with order statistics]
  \label{lem:generalized-quantile-order}
  Let \(Q:(0,1)\to\mathbb R\cup\{\pm\infty\}\) be non-decreasing, and let \(U_1,\ldots,U_n\) be real numbers in \((0,1)\).  Set \(X_i=Q(U_i)\).  Then for every \(r\in\{1,\ldots,n\}\),
  \[
    X_{(r)}=Q(U_{(r)}),
  \]
  where order statistics are taken in non decreasing order. Consequently, if $Q=Q_F$ is the lower generalised quantile of a distribution function $F$, and $U_1,\ldots,U_n\iid {\rm Unif}(0,1)$, then $Q_F(U_i)\sim F$. The $r$-th order statistic of the transformed sample has the same distribution as $Q_F(U_{(r)})$, even when $F$ has atoms.
\end{lemma}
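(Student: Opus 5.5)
The first claim is deterministic, and I would prove it by a monotonicity argument. Write $X_i = Q(U_i)$. I would show that $X_{(r)} \le Q(U_{(r)})$ and $X_{(r)} \ge Q(U_{(r)})$ separately, using only that $Q$ is non-decreasing.

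For the upper bound, fix a permutation $\sigma$ with $U_{\sigma(1)} \le \cdots \le U_{\sigma(n)}$. Monotonicity gives $Q(U_{\sigma(1)}) \le \cdots \le Q(U_{\sigma(n)})$. The indices $\sigma(r),\ldots,\sigma(n)$ are $n-r+1$ indices whose $X$-values are all at least $Q(U_{(r)})$. The indices $\sigma(1),\ldots,\sigma(r)$ are $r$ indices whose $X$-values are all at most $Q(U_{(r)})$.

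The $r$-th order statistic is characterized as the value $x$ with $\#\{i : X_i \le x\} \ge r$ and $\#\{i : X_i \ge x\} \ge n-r+1$. Equivalently, $X_{(r)} = X_{\sigma'(r)}$ for any sorting permutation $\sigma'$ of the $X$'s. Since $\sigma$ itself sorts the $X$'s (ties are allowed), we get $X_{(r)} = X_{\sigma(r)} = Q(U_{(r)})$ directly. Thus the cleanest route is to note that $\sigma$ is a valid sorting permutation of the transformed sample. Ties in $U$, or flat pieces of $Q$ creating ties in $X$, cause no issue, because order statistics do not depend on which sorting permutation is chosen. Values $\pm\infty$ are also harmless, since the argument uses only the total order on the extended reals.

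For the distributional claims, I would invoke the standard Galois-connection property of the lower generalized inverse: for $u \in (0,1)$ and $a \in \mathbb{R}$,
\[
Q_F(u) \le a \iff u \le F(a).
\]
The direction $\Leftarrow$ is immediate from the definition as an infimum. The direction $\Rightarrow$ uses right-continuity of $F$: the set $\{a : F(a) \ge u\}$ is closed on the left, so the infimum is attained.

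This equivalence gives $\mathbb{P}(Q_F(U) \le a) = \mathbb{P}(U \le F(a)) = F(a)$, so $Q_F(U) \sim F$. It also shows that $Q_F(u)$ is finite for $u \in (0,1)$. The $X_i = Q_F(U_i)$ are then i.i.d. from $F$, and applying the deterministic identity pathwise gives $X_{(r)} = Q_F(U_{(r)})$ almost surely. This is stronger than equality in distribution. Since $U_{(r)} \sim \mathrm{Beta}(r, n+1-r)$, this is exactly the representation used in Theorem~\ref{thm:fixed-threshold-trimmed-coverage}.

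The only delicate step is the Galois equivalence in the presence of atoms and flat regions of $F$. I would handle it carefully via right-continuity. Everything else is bookkeeping.
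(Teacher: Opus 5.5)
Your proposal is correct and follows the paper's proof. Both arguments note that, because $Q$ is non-decreasing, a permutation sorting the $U_i$ also sorts the $X_i=Q(U_i)$ (ties allowed), so $X_{(r)}=Q(U_{(r)})$ pathwise, and both take the distributional claim from the generalised inverse transform. Your right-continuity argument for $Q_F(u)\le a \iff u\le F(a)$ only spells out the step the paper treats as standard, and the opening two-inequality plan is unnecessary once you observe that $\sigma$ sorts the transformed sample.
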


\begin{proof}
  Sort the uniforms as $U_{(1)}\le\cdots\le U_{(n)}$. Since $Q$ is non-decreasing, the transformed values satisfy
  \begin{equation*}
      Q(U_{(1)})\le\cdots\le Q(U_{(n)}),
  \end{equation*}
  with equalities allowed. Hence the $r$-th ordered transformed value is $Q(U_{(r)})$. The distributional statement follows from the generalised inverse transform.
\end{proof}
\begin{remark}[Degenerate infinite-threshold event]
  \label{rem:degenerate-threshold}
  Because the theorem gives a lower bound, it is important to isolate the event on which the algorithm returns the degenerate cutoff $\hat\tau_\alpha=+\infty$. Since the retention indicators
  \begin{equation*}
    K_i=\mathbf 1\{S(Z_i)\le t^\star\}
  \end{equation*}
  are i.i.d. Bernoulli\((\mu_{\mathrm{keep}})\), we have
  \begin{equation*}
    N_{\mathrm{keep}}
    \sim
    \mathrm{Binomial}\bigl(m,\mu_{\mathrm{keep}}\bigr).
  \end{equation*}
  Define
  \begin{equation*}
    n_{\infty,\alpha}
    :=
    \max\Bigl\{0,\left\lceil \alpha^{-1}\right\rceil-2\Bigr\}.
  \end{equation*}
  By the algorithmic definition of the cutoff,
  \begin{equation*}
    \hat\tau_\alpha=+\infty
    \qquad\Longleftrightarrow\qquad
    N_{\mathrm{keep}}\le n_{\infty,\alpha}.
  \end{equation*}
  For the experimental nominal level $\alpha=0.1$, this threshold is $n_{\infty,\alpha}=8$. Retaining eight or fewer calibration points forces the uninformative infinite cutoff. Equivalently,
  \begin{equation*}
    \mathbb P\bigl(\hat\tau_\alpha=+\infty\bigr)
    =
    \sum_{j=0}^{n_{\infty,\alpha}}
    \binom{m}{j}
    \mu_{\mathrm{keep}}^{\,j}
    (1-\mu_{\mathrm{keep}})^{m-j}.
  \end{equation*}
  In particular,
  \begin{equation*}
    \mathbb P\bigl(N_{\mathrm{keep}}=0\bigr)
    =
    (1-\mu_{\mathrm{keep}})^m
    \le
    e^{-m\mu_{\mathrm{keep}}}.
  \end{equation*}
  If $m\mu_{\mathrm{keep}}>n_{\infty,\alpha}$, Hoeffding's inequality \citep{hoeffding1963probability} further
  gives
  \begin{equation*}
    \mathbb P\bigl(\hat\tau_\alpha=+\infty\bigr)
    \le
    \exp\!\left(
      -2m
      \left(
        \mu_{\mathrm{keep}}-\frac{n_{\infty,\alpha}}{m}
      \right)^2
    \right).
  \end{equation*}

  This event is controlled explicitly by the retained mass $\mu_{\mathrm{keep}}$. It should be interpreted as an efficiency failure, not a validity failure. If $C_\alpha(x)=\mathcal Y$ on this event, then the conditional coverage is indeed one. However, the prediction set is uninformative. For symmetric interval-valued conformal rules, this corresponds to infinite width.
\end{remark}

\begin{remark}[Comparison with the ordinary untrimmed baseline]
  \label{rem:trimmed-vs-untrimmed}
  The theorem specialises immediately to ordinary split conformal prediction.
  Indeed, if one sets \(t^\star=+\infty\), then every calibration point is
  retained, so
  \begin{equation*}
    p_c=p_d=1,
    \qquad
    P_{\mathrm{keep}}=P,
    \qquad
    \Delta_{\mathrm{trim}}^{A}=0,
    \qquad
    \tilde\varepsilon_\star=\varepsilon.
  \end{equation*}
  In that case $Q_{\mathrm{keep}}=Q$, and the retained-dirty discrepancy
  reduces to
  \begin{equation*}
    D_Q^{\mathrm{untrim}}
    :=
    \sup_{t\in\mathbb R}
    \bigl|
      F_Q^{A}(t)-F_P^A(t)
    \bigr|.
  \end{equation*}
  Therefore Theorem~\ref{thm:fixed-threshold-trimmed-coverage} yields the theorem-level
  lower bound for the untrimmed rule
  \begin{equation*}
    \mathbb P\bigl(
      Y_{\mathrm{new}}\in C_\alpha^{\mathrm{untrim}}(X_{\mathrm{new}})
      \,\big|\,
      \mathcal G
    \bigr)
    \ge
    1-\alpha-\varepsilon D_Q^{\mathrm{untrim}}.
  \end{equation*}

  Comparing this with the trimmed lower bound
  \begin{equation*}
    1-\alpha
    -
    (1-\tilde\varepsilon_\star)\Delta_{\mathrm{trim}}^{A}
    -
    \tilde\varepsilon_\star D_Q,
  \end{equation*}
  shows that the trimmed theorem-level lower bound is sharper than the untrimmed theorem-level lower bound whenever
  \begin{equation}
    \label{eq:trimmed-vs-untrimmed-condition}
    (1-\tilde\varepsilon_\star)\Delta_{\mathrm{trim}}^{A}
    +
    \tilde\varepsilon_\star D_Q
    <
    \varepsilon D_Q^{\mathrm{untrim}}.
  \end{equation}
  This comparison isolates the paper’s central trade-off. Trimming improves the lower bound only when the reduction in retained contamination is large enough to offset the clean-law distortion caused by filtering. The comparison is intentionally one-sided. It concerns theorem-level lower bounds, and does not by itself imply actual coverage dominance, efficiency improvement, or width reduction.
\end{remark}

\begin{remark}[On the clean trimming distortion $\Delta_{\mathrm{trim}}^{A}$]\label{rem:trimming-bias}
  The quantity
  \begin{equation*}
    \Delta_{\mathrm{trim}}^{A}
    =
    \sup_{t\in\mathbb{R}}
    \bigl|F_{P_{\mathrm{keep}}}^A(t) - F_P^A(t)\bigr|
  \end{equation*}
  measures how much the distribution of the nonconformity score $A(Z)$ is distorted by the fixed-threshold trimming rule. Here $F_P^A$ denotes the CDF of $A(Z)$ when $Z\sim P$, while $F_{P_{\mathrm{keep}}}^A$ denotes the CDF of $A(Z)$ under the retained clean law
  \begin{equation*}
    P_{\mathrm{keep}}
    :=
    P(\,\cdot\,\mid S(Z)\le t^\star),
  \end{equation*}
  that is, the conditional law of a clean point given retention by the population trimming rule induced by the anomaly score $S$ and fixed threshold $t^\star$. The assumption $p_c>0$ in Theorem~\ref{thm:fixed-threshold-trimmed-coverage} ensures that this conditional law, and hence $\Delta_{\mathrm{trim}}^{A}$, is well-defined.

  This quantity is not merely a distributional discrepancy. Split conformal prediction is driven by a high quantile of the clean nonconformity law. Therefore, changing the retained-clean distribution directly perturbs the conformal cutoff. In particular, Corollary~\ref{cor:delta-bias-quantile} shows that, under a local density lower bound near the clean $(1-\alpha)$-quantile, the induced quantile shift is controlled linearly by $\Delta_{\mathrm{trim}}^{A}$. Thus, $\Delta_{\mathrm{trim}}^{A}$ should be read as a direct measure of how much trimming can move the clean conformal threshold. It is not merely an abstract discrepancy between two distributions.

  In particular, $\Delta_{\mathrm{trim}}^{A}$ captures the dependence between the anomaly score $S$ and the nonconformity score $A$ under the clean distribution $P$

  \begin{itemize}
    \item If $S$ and $A$ are independent under $P$, then conditioning on the event
      $\{S(Z)\le t^\star\}$ does not change the marginal distribution of $A(Z)$.
      Consequently,
      \begin{equation*}
        F_{P_{\mathrm{keep}}}^A \equiv F_P^A,
        \qquad
        \Delta_{\mathrm{trim}}^{A} = 0.
      \end{equation*}
      In this case, trimming does not distort the clean conformal score distribution. The loss term in the population coverage bound therefore comes from residual retained contamination. Separate finite-sample effects, such as the probability of a degenerate infinite threshold and the granularity of the retained conformal quantile, affect efficiency or certificate sharpness. They do not add a separate loss term to the finite-sample rank-valid retained-law conformal step.

    \item If $S$ and $A$ are positively associated under $P$, trimming may preferentially remove part of the right tail of the clean score distribution. For example, ``hard'' clean samples may both look more anomalous under $S$ and have larger nonconformity scores $A$. In this case, the retained score distribution is typically shifted toward smaller values than the full clean score distribution. Equivalently, $F_{P_{\mathrm{keep}}}^A$ is typically larger than $F_P^A$ at relevant thresholds, and $\Delta_{\mathrm{trim}}^{A}>0$. This creates a potential under-coverage effect, because the retained calibration scores can be systematically smaller than the full clean calibration scores.
  \end{itemize}

  From a modelling perspective, $\Delta_{\mathrm{trim}}^{A}$ is a sensitivity parameter. It quantifies how much the trimming rule changes the clean calibration distribution. Thus, when designing $S$ and $A$, one should choose an anomaly score $S$ that separates contaminated points well, while remaining as weakly dependent as possible on the clean nonconformity score $A$.
\end{remark}

\section{Auxiliary and diagnostic results}
\label{app:auxiliary-diagnostic-lemmas}
\subsection{Diagnostic retained-count concentration}
The following concentration lemma is used as a diagnostic retained-count certificate and to control rare small-retained-sample events. It is not needed for the proof of Theorem~\ref{thm:fixed-threshold-trimmed-coverage}, which conditions directly on the realised retained sample size.

\begin{lemma}[Diagnostic retained-count and contamination bounds for fixed-threshold trimming]
  \label{lem:fixed-threshold}

  Let $Z_1,\dots,Z_m$ be generated under the $\varepsilon$-contamination
  model \citep{huber1964robust}
  \begin{equation*}
    P^\star
    :=
    (1-\varepsilon)P+\varepsilon Q,
    \qquad
    0\le\varepsilon<1.
  \end{equation*}
  For bookkeeping, realise this mixture on an expanded probability space through independent latent labels
  \begin{equation*}
    C_i\sim \operatorname{Bernoulli}(1-\varepsilon),
    \qquad
    D_i:=1-C_i,
  \end{equation*}
  with
  \begin{equation*}
    Z_i\mid \{C_i=1\}\sim P,
    \qquad
    Z_i\mid \{D_i=1\}\sim Q,
  \end{equation*}
  independently across $i=1,\dots,m$. Marginally, each $Z_i$ has law $P^\star$.

  Let $S:\mathcal Z\to\mathbb R$ be a measurable score function and fix a deterministic threshold $t^\star\in\overline{\mathbb R}$. For each $i=1,\dots,m$,
  define
  \begin{equation*}
    K_i:=\mathbf 1\{S(Z_i)\le t^\star\},
  \end{equation*}
  and define the retained clean, dirty, and total counts
  \begin{equation*}
    N_{\mathrm{clean}}
    :=
    \sum_{i=1}^m C_iK_i,
    \qquad
    N_{\mathrm{dirty}}
    :=
    \sum_{i=1}^m D_iK_i,
    \qquad
    N_{\mathrm{keep}}
    :=
    \sum_{i=1}^m K_i
    =
    N_{\mathrm{clean}} + N_{\mathrm{dirty}}.
  \end{equation*}

  Let
  \begin{equation*}
    p_c
    :=
    \mathbb P_{Z\sim P}\bigl(S(Z)\le t^\star\bigr),
    \qquad
    p_d
    :=
    \mathbb P_{Z\sim Q}\bigl(S(Z)\le t^\star\bigr),
  \end{equation*}
  and set
  \begin{equation*}
    \pi_c
    :=
    (1-\varepsilon)p_c,
    \qquad
    \pi_d
    :=
    \varepsilon p_d,
    \qquad
    \mu_{\mathrm{keep}}
    :=
    \pi_c+\pi_d
    =
    (1-\varepsilon)p_c+\varepsilon p_d.
  \end{equation*}

  Then the following bounds hold.
  \begin{enumerate}[label=(\roman*)]
    \item (\emph{Retained-count lower bound})
      For any \(\eta>0\),
      \begin{equation}
        \label{eq:keep-lower-prob}
        \mathbb P\Bigl(
          N_{\mathrm{keep}}
          \ge
          m(\mu_{\mathrm{keep}}-\eta)
        \Bigr)
        \ge
        1-\exp(-2m\eta^2).
      \end{equation}

    \item (\emph{Simultaneous clean, dirty, and retained count bounds})
      For any $\eta>0$, with probability at least
      \begin{equation*}
        1-3\exp(-2m\eta^2),
      \end{equation*}
      the following inequalities hold simultaneously
      \begin{align}
        N_{\mathrm{clean}}
        &\ge
        m(\pi_c-\eta),
        \label{eq:clean-lower}
        \\
        N_{\mathrm{dirty}}
        &\le
        m(\pi_d+\eta),
        \label{eq:dirty-upper}
        \\
        N_{\mathrm{keep}}
        &\ge
        m(\mu_{\mathrm{keep}}-\eta).
        \label{eq:keep-lower}
      \end{align}
      The factor of three is the cost of a simple union bound. Since $N_{\mathrm{keep}} = N_{\mathrm{clean}} + N_{\mathrm{dirty}}$, sharper simultaneous multinomial concentration inequalities could improve this diagnostic constant. They are not needed for the retained-law validity theorem.

    \item (\emph{Retained contamination proportion})
      For any $\eta_d>0$ and any $\eta_k\in(0,\mu_{\mathrm{keep}})$, with
      probability at least
      \begin{equation*}
        1-\exp(-2m\eta_d^2)-\exp(-2m\eta_k^2),
      \end{equation*}
      one has
      \begin{equation}
        \label{eq:retained-contamination-ratio-two-etas}
        \frac{N_{\mathrm{dirty}}}{N_{\mathrm{keep}}}
        \le
        \frac{\pi_d+\eta_d}{\mu_{\mathrm{keep}}-\eta_k}
        =
        \frac{\varepsilon p_d+\eta_d}
        {(1-\varepsilon)p_c+\varepsilon p_d-\eta_k}.
      \end{equation}
      In particular, taking $\eta_d=\eta_k=\eta\in(0,\mu_{\mathrm{keep}})$
      gives the simpler bound
      \begin{equation}
        \label{eq:retained-contamination-ratio}
        \frac{N_{\mathrm{dirty}}}{N_{\mathrm{keep}}}
        \le
        \frac{\pi_d+\eta}{\mu_{\mathrm{keep}}-\eta}
        =
        \frac{\varepsilon p_d+\eta}
        {(1-\varepsilon)p_c+\varepsilon p_d-\eta}
      \end{equation}
      with probability at least \(1-2\exp(-2m\eta^2)\).
  \end{enumerate}
\end{lemma}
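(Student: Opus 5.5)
The plan is to reduce every count in Lemma~\ref{lem:fixed-threshold} to a sum of i.i.d.\ Bernoulli variables and apply the one-sided Hoeffding inequality \citep{hoeffding1963probability}. Under the latent-label construction, the triples $(C_i,Z_i,K_i)$ are i.i.d.\ across $i$, and $t^\star$ is deterministic. Hence $C_iK_i\sim\mathrm{Bernoulli}(\pi_c)$, $D_iK_i\sim\mathrm{Bernoulli}(\pi_d)$ and $K_i\sim\mathrm{Bernoulli}(\mu_{\rm keep})$, each i.i.d.\ in $i$. The means follow from the tower property:
\[
\mathbb E[C_iK_i]=(1-\varepsilon)\,P(S\le t^\star)=\pi_c,
\]
and analogously $\mathbb E[D_iK_i]=\pi_d$. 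Since $K_i=C_iK_i+D_iK_i$, the mean of $K_i$ is $\pi_c+\pi_d=\mu_{\rm keep}$, which matches the marginal description $Z_i\sim P^\star$. Hoeffding's inequality for $[0,1]$-valued summands gives, for any $\eta>0$ and $N=\sum_i B_i$ with $\mathbb E B_i=\pi$,
\[
\mathbb P(N<m(\pi-\eta))\le e^{-2m\eta^2},\qquad \mathbb P(N>m(\pi+\eta))\le e^{-2m\eta^2}.
\]

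Part (i) is the lower-tail bound applied to $N_{\rm keep}$. It is stated for every $\eta>0$; when $\eta\ge\mu_{\rm keep}$ it holds trivially, since $N_{\rm keep}\ge0$. For part (ii), we apply the lower tail to $N_{\rm clean}$ with mean $\pi_c$, the upper tail to $N_{\rm dirty}$ with mean $\pi_d$, and the lower tail to $N_{\rm keep}$. Each failure event has probability at most $e^{-2m\eta^2}$, so a union bound gives the simultaneous statement with probability at least $1-3e^{-2m\eta^2}$. No independence between the three counts is needed, which is why the constant $3$ is stated as the price of the union bound.

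For part (iii), we intersect two events. The first is the upper-tail event $\{N_{\rm dirty}\le m(\pi_d+\eta_d)\}$, which has probability at least $1-e^{-2m\eta_d^2}$. The second is the lower-tail event $\{N_{\rm keep}\ge m(\mu_{\rm keep}-\eta_k)\}$, which has probability at least $1-e^{-2m\eta_k^2}$. Because $\eta_k<\mu_{\rm keep}$, the second event forces $N_{\rm keep}>0$, so the ratio is well defined. Dividing the two inequalities, with the numerator bounded above and the denominator bounded below by a positive quantity, yields \eqref{eq:retained-contamination-ratio-two-etas}. The union bound gives the stated probability, and setting $\eta_d=\eta_k=\eta$ gives \eqref{eq:retained-contamination-ratio}.

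The only step that needs care is the bookkeeping in the first paragraph. Hoeffding applies to the products $C_iK_i$ and $D_iK_i$ because these are i.i.d.\ functions of the i.i.d.\ pairs $(C_i,Z_i)$, and their means factor as stated only because $t^\star$ and $S$ are fixed, that is, $\mathcal G$-measurable and independent of the calibration sample. The remaining point is the positivity of the denominator in (iii), which the assumption $\eta_k\in(0,\mu_{\rm keep})$ handles.
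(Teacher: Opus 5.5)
Your proposal is correct and matches the paper's proof essentially step for step. Both write $C_iK_i$, $D_iK_i$ and $K_i$ as i.i.d.\ Bernoulli variables with means $\pi_c$, $\pi_d$ and $\mu_{\rm keep}$, apply one-sided Hoeffding bounds with a three-way union bound for (ii), and for (iii) intersect the dirty-upper and keep-lower events, using $\eta_k<\mu_{\rm keep}$ to make the denominator positive.
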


\begin{proof}
  Define
  \begin{equation*}
    X_i^{(c)} := C_iK_i, \qquad
    X_i^{(d)} := D_iK_i, \qquad
    i=1,\dots,m.
  \end{equation*}
  Each of $X_i^{(c)}$, $X_i^{(d)}$, and $K_i$ takes values in $\{0,1\}$. Since the latent labels and observations are independent across $i$, and the threshold $t^\star$ is fixed, each of the three sequences
  \begin{equation*}
    \{X_i^{(c)}\}_{i=1}^m,
    \qquad
    \{X_i^{(d)}\}_{i=1}^m,
    \qquad
    \{K_i\}_{i=1}^m
  \end{equation*}
  is individually i.i.d.\ Bernoulli across $i$. Within a single index $i$,
  however, the variables are functionally related through
  \begin{equation*}
    K_i=X_i^{(c)}+X_i^{(d)}.
  \end{equation*}
  Only cross-\(i\) independence is used below, through separate Hoeffding bounds \citep{hoeffding1963probability}
  and union bounds. By construction,
  \begin{equation*}
    N_{\mathrm{clean}}
    =
    \sum_{i=1}^m X_i^{(c)},
    \qquad
    N_{\mathrm{dirty}}
    =
    \sum_{i=1}^m X_i^{(d)},
    \qquad
    N_{\mathrm{keep}}
    =
    \sum_{i=1}^m K_i.
  \end{equation*}

  Under the latent-label realisation of the $\varepsilon$-contamination model,
  \begin{equation*}
    \mathbb P(X_i^{(c)}=1)
    =
    \mathbb P(C_i=1,\ S(Z_i)\le t^\star)
    =
    (1-\varepsilon)
    \mathbb P_{Z\sim P}\bigl(S(Z)\le t^\star\bigr)
    =
    \pi_c.
  \end{equation*}
  Similarly,
  \begin{equation*}
    \mathbb P(X_i^{(d)}=1)
    =
    \varepsilon
    \mathbb P_{Z\sim Q}\bigl(S(Z)\le t^\star\bigr)
    =
    \pi_d,
  \end{equation*}
  and
  \begin{equation*}
    \mathbb P(K_i=1)
    =
    (1-\varepsilon)p_c+\varepsilon p_d
    =
    \mu_{\mathrm{keep}}.
  \end{equation*}

  Hoeffding's one-sided lower-tail inequality \citep{hoeffding1963probability} for i.i.d.\ variables
  $Y_1,\dots,Y_m\in[0,1]$ gives
  \begin{equation*}
    \mathbb P\Bigl(
      \frac{1}{m}\sum_{i=1}^m Y_i-\mathbb E[Y_1]
      \le
      -\eta
    \Bigr)
    \le
    \exp(-2m\eta^2).
  \end{equation*}
  Applying this with $Y_i=K_i$ proves \eqref{eq:keep-lower-prob}. Applying one-sided Hoeffding bounds to
  \begin{equation*}
    \sum_{i=1}^m X_i^{(c)},
    \qquad
    \sum_{i=1}^m X_i^{(d)},
    \qquad
    \sum_{i=1}^m K_i,
  \end{equation*}
  and taking a union bound gives the following conservative simultaneous control. This uses the deterministic identity $K_i=X_i^{(c)}+X_i^{(d)}$ only for bookkeeping. A multinomial concentration argument could sharpen the constants.
  \begin{equation*}
    \mathbb P\Bigl(
      N_{\mathrm{clean}}\ge m(\pi_c-\eta),\
      N_{\mathrm{dirty}}\le m(\pi_d+\eta),\
      N_{\mathrm{keep}}\ge m(\mu_{\mathrm{keep}}-\eta)
    \Bigr)
    \ge
    1-3\exp(-2m\eta^2),
  \end{equation*}
  proving \eqref{eq:clean-lower}--\eqref{eq:keep-lower}.

  On the event
  \begin{equation*}
    \Bigl\{
      N_{\mathrm{dirty}}\le m(\pi_d+\eta_d)
    \Bigr\}
    \cap
    \Bigl\{
      N_{\mathrm{keep}}\ge m(\mu_{\mathrm{keep}}-\eta_k)
    \Bigr\},
  \end{equation*}
  the denominator is positive because $\eta_k<\mu_{\mathrm{keep}}$, and
  \begin{equation*}
    \frac{N_{\mathrm{dirty}}}{N_{\mathrm{keep}}}
    \le
    \frac{m(\pi_d+\eta_d)}{m(\mu_{\mathrm{keep}}-\eta_k)}
    =
    \frac{\pi_d+\eta_d}{\mu_{\mathrm{keep}}-\eta_k}.
  \end{equation*}
  Hoeffding’s upper-tail bound for $N_{\mathrm{dirty}}$ and lower-tail bound for $N_{\mathrm{keep}}$, followed by a union bound, give probability at least
  \begin{equation*}
      1-\exp(-2m\eta_d^2)-\exp(-2m\eta_k^2).
  \end{equation*}
This proves \eqref{eq:retained-contamination-ratio-two-etas}. The one-parameter version is the special case $\eta_d=\eta_k$.
\end{proof}

\subsection{Quantile perturbation and trimming-distortion identities}
\begin{lemma}[Quantile perturbation under Kolmogorov distance]
  \label{lem:quantile-perturbation}
  Let $F$ and $G$ be cumulative distribution functions on $\mathbb R$.  For $p\in(0,1)$, define the lower generalised $p$-quantiles
  \begin{equation*}
    q_F(p) := \inf\{ x\in\mathbb R : F(x) \ge p \},
    \qquad
    q_G(p) := \inf\{ x\in\mathbb R : G(x) \ge p \}.
  \end{equation*}
  Let $x_0=q_F(p)$. Assume $x_0\in\mathbb R$, $F$ is continuous at $x_0$, and there exist constants $\lambda_{\min}>0$ and $\rho>0$ such that, for every $h\in[0,\rho]$,
  \begin{equation}
    \label{eq:cdf-increment-lower}
    F(x_0+h)-F(x_0)\ge \lambda_{\min}h,
    \qquad
    F(x_0)-F(x_0-h)\ge \lambda_{\min}h.
  \end{equation}
  Suppose also that
  \begin{equation}
    \label{eq:kolmogorov-distance}
    \sup_{t\in\mathbb R}|F(t)-G(t)|\le \Delta
  \end{equation}
  and
  \begin{equation}
    \label{eq:delta-small}
    \Delta<\lambda_{\min}\rho.
  \end{equation}
  Then
  \begin{equation}
    \label{eq:quantile-perturbation}
    |q_G(p)-q_F(p)|\le \frac{\Delta}{\lambda_{\min}}.
  \end{equation}
  In particular, $q_G(p)\in[x_0-\rho,x_0+\rho]$. Condition \eqref{eq:cdf-increment-lower} holds, for example, if $F$ is absolutely continuous and has a density version satisfying $f_F(x)\ge\lambda_{\min}$ for Lebesgue-a.e. $x\in[x_0-\rho,x_0+\rho]$.
\end{lemma}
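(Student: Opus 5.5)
We will prove \eqref{eq:quantile-perturbation} by bounding $q_G(p)$ separately from above and from below. Throughout, write $\delta:=\Delta/\lambda_{\min}$. By \eqref{eq:delta-small}, $\delta<\rho$, so the increment condition \eqref{eq:cdf-increment-lower} can be applied with $h=\delta$. Since $F$ is continuous at $x_0=q_F(p)$ and $x_0\in\mathbb R$, we have $F(x_0)=p$. The right-continuity of $F$ gives $F(x_0)\ge p$, and continuity together with $F(x)<p$ for $x<x_0$ gives $F(x_0)\le p$.

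\emph{Upper bound.} The increment condition at $h=\delta$ gives $F(x_0+\delta)\ge p+\lambda_{\min}\delta=p+\Delta$. The Kolmogorov bound \eqref{eq:kolmogorov-distance} then gives $G(x_0+\delta)\ge F(x_0+\delta)-\Delta\ge p$. By the definition of the lower generalised quantile, $q_G(p)\le x_0+\delta$.

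\emph{Lower bound.} This step is the slightly delicate one, because the infimum defining $q_G(p)$ has to be excluded strictly. Take any $x<x_0-\delta$. There are two cases. If $x\ge x_0-\rho$, set $h=x_0-x\in(\delta,\rho]$; then $F(x)\le p-\lambda_{\min}h<p-\Delta$. If $x<x_0-\rho$, monotonicity gives $F(x)\le F(x_0-\rho)\le p-\lambda_{\min}\rho<p-\Delta$, where the last inequality is \eqref{eq:delta-small}. In both cases $G(x)\le F(x)+\Delta<p$. Hence no $x<x_0-\delta$ lies in $\{x:G(x)\ge p\}$, and therefore $q_G(p)\ge x_0-\delta$. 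The strict inequality $h>\delta$ in the first case is what makes $G(x)<p$ strict, so this case split is where care is needed.

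Combining the two bounds gives $|q_G(p)-q_F(p)|\le\delta=\Delta/\lambda_{\min}$. Since $\delta<\rho$, it also follows that $q_G(p)\in[x_0-\rho,x_0+\rho]$.

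For the sufficient condition, suppose $F$ is absolutely continuous with density $f_F\ge\lambda_{\min}$ a.e.\ on $[x_0-\rho,x_0+\rho]$. Then for $h\in[0,\rho]$, $F(x_0+h)-F(x_0)=\int_{x_0}^{x_0+h}f_F\ge\lambda_{\min}h$, and the left increment is handled symmetrically, which gives \eqref{eq:cdf-increment-lower}.
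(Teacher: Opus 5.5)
Your proof is correct and follows essentially the same route as the paper: you establish $F(x_0)=p$, get the upper bound from the increment condition at $h=\Delta/\lambda_{\min}$, and get the lower bound by showing $G<p$ strictly below $x_0-\Delta/\lambda_{\min}$. The only difference is in that last step. The paper uses $h_\xi=(\Delta+\xi)/\lambda_{\min}$, relies implicitly on the monotonicity of $G$, and lets $\xi\downarrow0$, whereas you exclude each $x<x_0-\Delta/\lambda_{\min}$ directly with your two-case split; the two arguments are equivalent.
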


\begin{proof}
  By the definition of $x_0=q_F(p)$, $F(x)<p$ for all $x<x_0$, and $F(x_0)\ge p$.  Continuity at $x_0$ gives $F(x_0)=p$.

  Set $h_+=\Delta/\lambda_{\min}$. By \eqref{eq:delta-small}, $h_+<\rho$ unless $\Delta=0$, in which case $h_+=0\le\rho$. From \eqref{eq:cdf-increment-lower},
  \begin{equation*}
    F(x_0+h_+)\ge p+\Delta.
  \end{equation*}
  Therefore \eqref{eq:kolmogorov-distance} implies
  \begin{equation*}
    G(x_0+h_+)\ge F(x_0+h_+)-\Delta\ge p,
  \end{equation*}
  so $q_G(p)\le x_0+\frac{\Delta}{\lambda_{\min}}$.

  For the lower bound, let $\xi>0$ and set $h_\xi=(\Delta+\xi)/\lambda_{\min}$. For sufficiently small $\xi$, $h_\xi\le\rho$. Again by \eqref{eq:cdf-increment-lower},
  \begin{equation*}
    F(x_0 - h_\xi)\le p - (\Delta + \xi).
  \end{equation*}
  Hence
  \begin{equation*}
    G(x_0-h_\xi)\le F(x_0-h_\xi)+\Delta\le p-\xi<p.
  \end{equation*}
  By the definition of the lower generalised quantile,
  \begin{equation*}
    q_G(p)\ge x_0-h_\xi=x_0-\frac{\Delta+\xi}{\lambda_{\min}}.
  \end{equation*}
  Letting $\xi\downarrow0$ gives $q_G(p)\ge x_0-\frac{\Delta}{\lambda_{\min}}$. Combining the two bounds proves \eqref{eq:quantile-perturbation}. The final inclusion follows from \eqref{eq:delta-small}.
\end{proof}

\begin{remark}[Scope of the quantile perturbation lemma]
  \label{rem:quantile-perturbation-scope}
  Lemma~\ref{lem:quantile-perturbation} is an efficiency and stability tool, not a prerequisite for conformal validity. The strict inequality in \eqref{eq:delta-small} is intentional. At the boundary $\Delta=\lambda_{\min}\rho$, the local CDF-increment information may not extend far enough to support the left-tail perturbation argument without enlarging the neighbourhood. If the score law has atoms, or if the CDF is locally flat near $q_F(p)$, the lemma may not apply. Discrete nonconformity scores require a separate perturbation analysis.
\end{remark}

\begin{lemma}[Universal upper bound for the clean trimming distortion]
  \label{lem:delta-bias-tv}
  Let $P$ be the clean distribution, let $S:\mathcal Z\to\mathbb R$ be a measurable score, and fix a deterministic threshold $t^\star\in\overline{\mathbb R}$.
  Define
  \begin{equation*}
    p_c
    :=
    \mathbb P_{Z\sim P}\bigl(S(Z)\le t^\star\bigr).
  \end{equation*}
  Assume $p_c>0$, and define the retained clean law
  \begin{equation*}
    P_{\mathrm{keep}}
    :=
    P(\,\cdot\mid S(Z)\le t^\star).
  \end{equation*}
  Let
  \begin{equation*}
    \Delta_{\mathrm{trim}}^{A}
    :=
    \sup_{t\in\mathbb R}
    \bigl|
    F_{P_{\mathrm{keep}}}^{A}(t)-F_{P}^{A}(t)
    \bigr|,
  \end{equation*}
  where $F_{P_{\mathrm{keep}}}^{A}$ and $F_P^{A}$ denote the cumulative distribution functions of $A(Z)$ under $Z\sim P_{\mathrm{keep}}$ and $Z\sim P$, respectively.

  The case $t^\star=+\infty$ is included. Then $p_c=1$, $P_{\mathrm{keep}}=P$, and the bound below gives zero distortion.

  Then
  \begin{equation}
    \label{eq:delta-bias-tv-bound}
    \Delta_{\mathrm{trim}}^{A}
    \le
    \operatorname{TV}\bigl(P_{\mathrm{keep}}^{A},P^{A}\bigr)
    \le
    \operatorname{TV}\bigl(P_{\mathrm{keep}},P\bigr)
    =
    1-p_c.
  \end{equation}
\end{lemma}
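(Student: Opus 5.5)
The plan is to prove the three inequalities in \eqref{eq:delta-bias-tv-bound} in order, left to right. All three are elementary facts about total variation and conditioning.

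\emph{First inequality.} Write $P^A$ and $P_{\rm keep}^A$ for the pushforward laws of $A(Z)$. For every $t\in\mathbb R$, the half-line $(-\infty,t]$ is a Borel set. Hence
\[
  |F_{P_{\rm keep}}^A(t)-F_P^A(t)|=|P_{\rm keep}^A((-\infty,t])-P^A((-\infty,t])|\le \sup_{B}|P_{\rm keep}^A(B)-P^A(B)|=\operatorname{TV}(P_{\rm keep}^A,P^A).
\]
Taking the supremum over $t$ gives $\Delta_{\rm trim}^A\le \operatorname{TV}(P_{\rm keep}^A,P^A)$.

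\emph{Second inequality.} This is the data-processing inequality for total variation under the measurable map $A$. For any Borel $B$, we have $P_{\rm keep}^A(B)-P^A(B)=P_{\rm keep}(A^{-1}(B))-P(A^{-1}(B))$. The right-hand side is at most $\operatorname{TV}(P_{\rm keep},P)$ in absolute value, because $A^{-1}(B)$ is a measurable subset of $\mathcal Z$.

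\emph{Identity $\operatorname{TV}(P_{\rm keep},P)=1-p_c$.} This is the only step that needs a computation. Let $E=\{S\le t^\star\}$. Then $P_{\rm keep}$ has density $h=\mathbf 1_E/p_c$ with respect to $P$, and
\[
  \operatorname{TV}(P_{\rm keep},P)=\int (h-1)_+\,dP=P(E)\Bigl(\tfrac1{p_c}-1\Bigr)=1-p_c.
\]
Equivalently, the supremum over sets is attained at $B=E$, where $P_{\rm keep}(E)-P(E)=1-p_c$. For a general set $B$, write
\[
  P_{\rm keep}(B)-P(B)=P(B\cap E)\Bigl(\tfrac1{p_c}-1\Bigr)-P(B\cap E^c).
\]
This quantity lies in $[-(1-p_c),\,1-p_c]$, using $P(B\cap E)\le p_c$ and $P(B\cap E^c)\le 1-p_c$.

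\emph{Edge case.} When $t^\star=+\infty$, we have $E=\mathcal Z$, $p_c=1$ and $P_{\rm keep}=P$, so every term vanishes. This agrees with the remark in the statement.

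I do not expect a real obstacle. The only care needed is to fix the total variation convention as $\sup_B|\cdot|$, not twice that, so that the constant $1-p_c$ comes out exactly.
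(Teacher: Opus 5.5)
Your proposal is correct and follows the paper's proof essentially step for step. Both arguments use half-lines as a subclass of Borel sets, the pushforward (data-processing) step under $A$, and the density $\mathbf 1_E/p_c$ to compute the total variation; you use $\int(h-1)_+\,dP$ where the paper uses $\tfrac12\int|h-1|\,dP$, and these agree since both densities integrate to one.
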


\begin{proof}
  By definition of $\Delta_{\mathrm{trim}}^{A}$,
  \begin{equation*}
    \Delta_{\mathrm{trim}}^{A}
    =
    \sup_{t\in\mathbb R}
    \bigl|
    P_{\mathrm{keep}}^{A}(({-\infty},t])
    -
    P^{A}(({-\infty},t])
    \bigr|.
  \end{equation*}
  Since the half-lines $({-\infty},t]$ form a subclass of all measurable sets on
  $\mathbb R$, we immediately obtain
  \begin{equation*}
    \Delta_{\mathrm{trim}}^{A}
    \le
    \sup_{B\in\mathcal B(\mathbb R)}
    \bigl|P_{\mathrm{keep}}^{A}(B)-P^{A}(B)\bigr|
    = 
    \operatorname{TV}\bigl(P_{\mathrm{keep}}^{A},P^{A}\bigr).
  \end{equation*}

  Next, $P_{\mathrm{keep}}^{A}$ and $P^{A}$ are pushforwards of
  $P_{\mathrm{keep}}$ and $P$ under the measurable map $A$. For any Borel set
  $B\subseteq\mathbb R$,
  \begin{equation*}
    P_{\mathrm{keep}}^{A}(B)-P^{A}(B)
    =
    P_{\mathrm{keep}}\bigl(A^{-1}(B)\bigr)-P\bigl(A^{-1}(B)\bigr),
  \end{equation*}
  hence
  \begin{equation*}
    \operatorname{TV}\bigl(P_{\mathrm{keep}}^{A},P^{A}\bigr)
    \le
    \operatorname{TV}\bigl(P_{\mathrm{keep}},P\bigr).
  \end{equation*}

  It remains to compute $\operatorname{TV}(P_{\mathrm{keep}},P)$. Since
  $P_{\mathrm{keep}}=P(\,\cdot\mid S(Z)\le t^\star)$, its Radon--Nikodym derivative with respect to $P$ is
  \begin{equation*}
    \frac{dP_{\mathrm{keep}}}{dP}(z)
    =
    \frac{\mathbf 1\{S(z)\le t^\star\}}{p_c}.
  \end{equation*}
  Therefore,
  \begin{align*}
    \operatorname{TV}\bigl(P_{\mathrm{keep}},P\bigr)
    &=
    \frac12
    \int
    \left|
    \frac{\mathbf 1\{S(z)\le t^\star\}}{p_c}-1
    \right|\,dP(z) \\
    &=
    \frac12
    \left[
      \int_{\{S\le t^\star\}}
      \left(\frac1{p_c}-1\right)dP
      +
      \int_{\{S>t^\star\}} 1\,dP
    \right] \\
    &=
    \frac12
    \left[
      p_c\left(\frac1{p_c}-1\right)+(1-p_c)
    \right] \\
    &=
    \frac12
    \left[
      (1-p_c)+(1-p_c)
    \right]
    =
    1-p_c,
  \end{align*}
  where we used
  \begin{equation*}
    p_c\left(\frac1{p_c}-1\right)=1-p_c.
  \end{equation*}
  Combining the three displays proves \eqref{eq:delta-bias-tv-bound}.
\end{proof}

\begin{proposition}[Covariance and retained--discarded representations of the clean trimming distortion]
  \label{prop:delta-bias-covariance}
  Let $P$ be the clean distribution, let $S:\mathcal Z\to\mathbb R$ be a measurable score, and fix a deterministic threshold $t^\star\in\mathbb R$.
  Let
  \begin{equation*}
    K
    :=
    \mathbf 1\{S(Z)\le t^\star\},
    \qquad
    p_c
    :=
    \mathbb P_{Z\sim P}(K=1),
  \end{equation*}
  and assume $p_c>0$. Define the retained clean law
  \begin{equation*}
    P_{\mathrm{keep}}
    :=
    P(\,\cdot\mid S(Z)\le t^\star),
  \end{equation*}
  and the clean trimming distortion
  \begin{equation*}
    \Delta_{\mathrm{trim}}^{A}
    :=
    \sup_{t\in\mathbb R}
    \bigl|
    F_{P_{\mathrm{keep}}}^{A}(t)-F_{P}^{A}(t)
    \bigr|.
  \end{equation*}

  Then for every $t\in\mathbb R$,
  \begin{equation}
    \label{eq:delta-bias-covariance-pointwise}
    F_{P_{\mathrm{keep}}}^{A}(t)-F_{P}^{A}(t)
    =
    \frac{
      \operatorname{Cov}_{P}
      \bigl(
        \mathbf 1\{A(Z)\le t\},\,K
      \bigr)
    }{p_c}.
  \end{equation}
  Consequently,
  \begin{equation}
    \label{eq:delta-bias-covariance-sup}
    \Delta_{\mathrm{trim}}^{A}
    =
    \frac1{p_c}
    \sup_{t\in\mathbb R}
    \left|
    \operatorname{Cov}_{P}
    \bigl(
      \mathbf 1\{A(Z)\le t\},\,K
    \bigr)
    \right|.
  \end{equation}

  If $p_c<1$, define the discarded clean law
  \begin{equation*}
    P_{\mathrm{drop}}
    :=
    P(\,\cdot\mid S(Z)>t^\star).
  \end{equation*}
  Then the clean trimming distortion also admits the exact retained--discarded representation
  \begin{equation}
    \label{eq:delta-bias-retained-discarded}
    \Delta_{\mathrm{trim}}^{A}
    =
    (1-p_c)
    \sup_{t\in\mathbb R}
    \bigl|
      F_{P_{\mathrm{keep}}}^{A}(t)
      -
      F_{P_{\mathrm{drop}}}^{A}(t)
    \bigr|.
  \end{equation}
  If $p_c = 1$, then $P_{\mathrm{keep}}=P$ and
  $\Delta_{\mathrm{trim}}^{A}=0$.
\end{proposition}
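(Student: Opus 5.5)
The plan is a direct computation from the definition of conditional probability, followed by a law-of-total-probability decomposition of $F_P^A$ for the retained--discarded form. Fix $t\in\mathbb R$ and write $I_t=\mathbf 1\{A(Z)\le t\}$. Since $P_{\rm keep}=P(\cdot\mid K=1)$ with $p_c=\mathbb E_P[K]>0$, we have
\[
F_{P_{\rm keep}}^A(t)=\frac{\mathbb E_P[I_tK]}{p_c}, \qquad F_P^A(t)=\mathbb E_P[I_t]=\frac{\mathbb E_P[I_t]\,\mathbb E_P[K]}{p_c}.
\]
Subtracting gives $\{\mathbb E_P[I_tK]-\mathbb E_P[I_t]\mathbb E_P[K]\}/p_c=\mathrm{Cov}_P(I_t,K)/p_c$, which is \eqref{eq:delta-bias-covariance-pointwise}. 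Taking absolute values and the supremum over $t$ yields \eqref{eq:delta-bias-covariance-sup} immediately; $p_c$ does not depend on $t$, so it factors out of the supremum.

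For the retained--discarded representation with $0<p_c<1$, use total probability under $P$:
\[
F_P^A(t)=p_cF_{P_{\rm keep}}^A(t)+(1-p_c)F_{P_{\rm drop}}^A(t).
\]
Hence
\[
F_{P_{\rm keep}}^A(t)-F_P^A(t)=(1-p_c)\bigl\{F_{P_{\rm keep}}^A(t)-F_{P_{\rm drop}}^A(t)\bigr\}
\]
for every $t$. Taking absolute values and the supremum gives \eqref{eq:delta-bias-retained-discarded}, again because $1-p_c$ is a constant. As a cross-check, this agrees with the covariance form: $\mathrm{Cov}_P(I_t,K)=p_c(1-p_c)\{\mathbb E[I_t\mid K=1]-\mathbb E[I_t\mid K=0]\}$.

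In the case $p_c=1$, the event $\{K=1\}$ has $P$-probability one, so $P(B\mid K=1)=P(B\cap\{K=1\})=P(B)$ for every measurable $B$. Thus $P_{\rm keep}=P$ and the distortion vanishes. This is consistent with \eqref{eq:delta-bias-covariance-pointwise}, since $K$ is a.s.\ constant and hence has zero covariance with $I_t$.

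None of these steps is a real obstacle. The only points needing care are bookkeeping. First, $P_{\rm drop}$ is defined only when $p_c<1$, which is why that case is stated separately. Second, no continuity of $F_P^A$ is needed, because every identity holds pointwise in $t$ before the supremum is taken.
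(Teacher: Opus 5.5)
Your proposal is correct and follows essentially the same route as the paper. The paper also obtains the covariance identity by writing $F_{P_{\rm keep}}^A(t)=\mathbb E_P[\mathbf 1\{A(Z)\le t\}K]/p_c$ and subtracting, and it derives the retained--discarded form from the total-probability split $F_P^A=p_cF_{P_{\rm keep}}^A+(1-p_c)F_{P_{\rm drop}}^A$, treating the $p_c=1$ case separately.
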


\begin{proof}
  For any $t\in\mathbb R$,
  \begin{equation*}
    F_{P_{\mathrm{keep}}}^{A}(t)
    =
    P\bigl(A(Z)\le t\mid K=1\bigr)
    =
    \frac{\mathbb E_P[\mathbf 1\{A(Z)\le t\}K]}{p_c}.
  \end{equation*}
  Therefore,
  \begin{align*}
    F_{P_{\mathrm{keep}}}^{A}(t)-F_{P}^{A}(t)
    &=
    \frac{\mathbb E_P[\mathbf 1\{A(Z)\le t\}K]}{p_c}
    -
    \mathbb E_P[\mathbf 1\{A(Z)\le t\}] \\
    &=
    \frac{
      \mathbb E_P[\mathbf 1\{A(Z)\le t\}K]
      -
      p_c\,\mathbb E_P[\mathbf 1\{A(Z)\le t\}]
    }{p_c} \\
    &=
    \frac{
      \operatorname{Cov}_{P}
      \bigl(
        \mathbf 1\{A(Z)\le t\},\,K
      \bigr)
    }{p_c},
  \end{align*}
  which proves \eqref{eq:delta-bias-covariance-pointwise}. Taking the supremum over $t$ yields \eqref{eq:delta-bias-covariance-sup}.

  If $p_c=1$, then $K=1$ $P$-almost surely, so $P_{\mathrm{keep}}=P$ and $\Delta_{\mathrm{trim}}^{A}=0$.

  It remains to consider the case $p_c<1$. Define
  \begin{equation*}
    P_{\mathrm{drop}}
    :=
    P(\,\cdot\mid K=0)
    =
    P(\,\cdot\mid S(Z)>t^\star).
  \end{equation*}
  Then, for every $t\in\mathbb R$,
  \begin{equation*}
    F_P^A(t)
    =
    p_c F_{P_{\mathrm{keep}}}^{A}(t)
    +
    (1-p_c)F_{P_{\mathrm{drop}}}^{A}(t).
  \end{equation*}
  Therefore
  \begin{align*}
    F_{P_{\mathrm{keep}}}^{A}(t)-F_P^A(t)
    &=
    F_{P_{\mathrm{keep}}}^{A}(t)
    -
    \Bigl[
      p_c F_{P_{\mathrm{keep}}}^{A}(t)
      +
      (1-p_c)F_{P_{\mathrm{drop}}}^{A}(t)
    \Bigr]
    \\
    &=
    (1-p_c)
    \Bigl(
      F_{P_{\mathrm{keep}}}^{A}(t)
      -
      F_{P_{\mathrm{drop}}}^{A}(t)
    \Bigr).
  \end{align*}
  Taking the supremum over $t\in\mathbb R$ proves
  \eqref{eq:delta-bias-retained-discarded}.
\end{proof}

\begin{remark}[Interpretation of the covariance and retained--discarded identities]
  \label{rem:delta-bias-covariance}
  The main content of Proposition~\ref{prop:delta-bias-covariance} is structural. The covariance identity \eqref{eq:delta-bias-covariance-pointwise} and its supremum form \eqref{eq:delta-bias-covariance-sup} show that clean trimming distortion is driven by dependence under the clean law $P$. Specifically, it depends on the relation between the retention indicator $K=\mathbf 1\{S(Z)\le t^\star\}$ and the nonconformity events $\{A(Z)\le t\}$.

  When $p_c<1$, the retained--discarded identity \eqref{eq:delta-bias-retained-discarded} gives an equivalent and often more interpretable form
  \begin{equation*}
    \Delta_{\mathrm{trim}}^{A}
    =
    (1-p_c)
    d_{\mathrm{Kol}}
    \bigl(
      F_{P_{\mathrm{keep}}}^{A},
      F_{P_{\mathrm{drop}}}^{A}
    \bigr),
  \end{equation*}
  where
  \begin{equation*}
    d_{\mathrm{Kol}}(F,G)
    :=
    \sup_{t\in\mathbb R}|F(t)-G(t)|.
  \end{equation*}
  Thus, clean trimming distortion is small when little clean mass is discarded $(1-p_c\approx0)$, or when retained and discarded clean points have similar nonconformity-score distributions. Taking $d_{\mathrm{Kol}}\le1$ immediately gives the total-variation bound
  \begin{equation*}
      \Delta_{\mathrm{trim}}^{A}\le 1-p_c.
  \end{equation*}
  When $p_c=1$, this bound is trivial because $\Delta_{\mathrm{trim}}^{A}=0$.
\end{remark}

\begin{corollary}[Quantile stability induced by trimming]
  \label{cor:delta-bias-quantile}
  Let $F_P^{A}$ denote the cumulative distribution function of $A(Z)$ under $Z\sim P$, and let
  \begin{equation*}
    q_P(1-\alpha)
    :=
    \inf\{x\in\mathbb R:F_P^{A}(x)\ge 1-\alpha\}
  \end{equation*}
  be its lower generalised $(1-\alpha)$-quantile. Let
  \begin{equation*}
    q_{\mathrm{keep}}(1-\alpha)
    :=
    \inf\{x\in\mathbb R:F_{P_{\mathrm{keep}}}^{A}(x)\ge 1-\alpha\}
  \end{equation*}
  be the corresponding retained-clean quantile.

  Assume the setting of Lemma~\ref{lem:delta-bias-tv}. Further assume that $F_P^A$ is continuous at $q_P(1-\alpha)$. Also assume that there exist constants $\lambda_{\min}>0$ and $\rho>0$ such that the two-sided CDF increment condition in Lemma~\ref{lem:quantile-perturbation} holds with $F=F_P^A$ and $p=1-\alpha$. A sufficient condition is that $F_P^A$ is absolutely continuous and admits a density version satisfying
  \begin{equation*}
    f_P^{A}(x)\ge \lambda_{\min}
    \qquad\text{for Lebesgue-a.e. }x\in
    [q_P(1-\alpha)-\rho,\,q_P(1-\alpha)+\rho].
  \end{equation*}
  If
  \begin{equation*}
    \Delta_{\mathrm{trim}}^{A}< \lambda_{\min}\rho,
  \end{equation*}
  and, in particular, if the sufficient condition
  \begin{equation*}
    1-p_c< \lambda_{\min}\rho
  \end{equation*}
  holds, then
  \begin{equation}
    \label{eq:delta-bias-quantile-shift}
    \bigl|q_{\mathrm{keep}}(1-\alpha)-q_P(1-\alpha)\bigr|
    \le
    \frac{\Delta_{\mathrm{trim}}^{A}}{\lambda_{\min}}
    \le
    \frac{1-p_c}{\lambda_{\min}}.
  \end{equation}
\end{corollary}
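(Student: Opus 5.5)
The plan is to obtain the corollary directly from Lemma~\ref{lem:quantile-perturbation}, with the Kolmogorov-distance input supplied by Lemma~\ref{lem:delta-bias-tv}. Take $F=F_P^A$, $G=F_{P_{\mathrm{keep}}}^A$, $p=1-\alpha\in(0,1)$ and $\Delta=\Delta_{\mathrm{trim}}^A$.

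\emph{Step 1: check the hypotheses of Lemma~\ref{lem:quantile-perturbation}.}
\begin{itemize}
\item By the definition of $\Delta_{\mathrm{trim}}^A$ as a supremum of $|F_{P_{\mathrm{keep}}}^A-F_P^A|$, the Kolmogorov condition \eqref{eq:kolmogorov-distance} holds with $\Delta=\Delta_{\mathrm{trim}}^A$.
\item Continuity of $F_P^A$ at $x_0=q_P(1-\alpha)$ and the two-sided increment condition \eqref{eq:cdf-increment-lower} are assumed in the corollary.
\item If we use the density formulation instead, integrating $f_P^A\ge\lambda_{\min}$ over $[x_0,x_0+h]$ and $[x_0-h,x_0]$ gives \eqref{eq:cdf-increment-lower} for $h\le\rho$.
\item Finiteness of $x_0$ is implicit in the increment condition being stated at $x_0$, so we take it as part of the assumption.
\item The smallness condition \eqref{eq:delta-small} is exactly $\Delta_{\mathrm{trim}}^A<\lambda_{\min}\rho$.
\end{itemize}

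\emph{Step 2: apply the lemma.} Lemma~\ref{lem:quantile-perturbation} then gives
\[
|q_{\mathrm{keep}}(1-\alpha)-q_P(1-\alpha)|\le \Delta_{\mathrm{trim}}^A/\lambda_{\min},
\]
which is the first inequality in \eqref{eq:delta-bias-quantile-shift}.

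\emph{Step 3: the TV route.} Lemma~\ref{lem:delta-bias-tv} gives $\Delta_{\mathrm{trim}}^A\le 1-p_c$. This has two consequences.
\begin{itemize}
\item The sufficient condition $1-p_c<\lambda_{\min}\rho$ implies $\Delta_{\mathrm{trim}}^A<\lambda_{\min}\rho$, so Step 2 applies.
\item Dividing $\Delta_{\mathrm{trim}}^A\le 1-p_c$ by $\lambda_{\min}$ gives the second inequality in \eqref{eq:delta-bias-quantile-shift}.
\end{itemize}
The case $t^\star=+\infty$ is covered by Lemma~\ref{lem:delta-bias-tv}, with zero shift.

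The argument is essentially routine. The only point needing care is the correct sign and side conventions: the lemma needs a two-sided sup bound, and $\Delta_{\mathrm{trim}}^A$ as defined in the corollary is the two-sided quantity, not the one-sided $\Delta_{\mathrm{trim},+}^A$.
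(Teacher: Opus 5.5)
Your proposal is correct and matches the paper's proof. You apply Lemma~\ref{lem:quantile-perturbation} with $F=F_P^A$, $G=F_{P_{\mathrm{keep}}}^A$, $p=1-\alpha$ and $\Delta=\Delta_{\mathrm{trim}}^A$, and you use $\Delta_{\mathrm{trim}}^A\le 1-p_c$ from Lemma~\ref{lem:delta-bias-tv} both to turn the sufficient condition into the smallness hypothesis and to obtain the second inequality. (One small simplification: finiteness of $q_P(1-\alpha)$ is automatic, because $A$ is real-valued and $1-\alpha\in(0,1)$, so you need not treat it as an implicit assumption.)
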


\begin{proof}
  Apply Lemma~\ref{lem:quantile-perturbation} with
  \begin{equation*}
    F=F_P^{A},
    \qquad
    G=F_{P_{\mathrm{keep}}}^{A},
    \qquad
    p=1-\alpha.
  \end{equation*}
  By definition of $\Delta_{\mathrm{trim}}^{A}$,
  \begin{equation*}
    \sup_{t\in\mathbb R}
    \bigl|F_P^{A}(t)-F_{P_{\mathrm{keep}}}^{A}(t)\bigr|
    =
    \Delta_{\mathrm{trim}}^{A}.
  \end{equation*}
  By Lemma~\ref{lem:delta-bias-tv},
  \begin{equation*}
    \Delta_{\mathrm{trim}}^{A}\le 1-p_c.
  \end{equation*}
  Hence the sufficient condition $1-p_c < \lambda_{\min}\rho$ implies
  \begin{equation*}
      \Delta_{\mathrm{trim}}^{A}< \lambda_{\min}\rho .
  \end{equation*}
  Under this sufficient condition, or under the stated direct condition $\Delta_{\mathrm{trim}}^{A}< \lambda_{\min}\rho$, the hypotheses of Lemma~\ref{lem:quantile-perturbation} are satisfied. Consequently,
  \begin{equation*}
    \bigl|q_{\mathrm{keep}}(1-\alpha)-q_P(1-\alpha)\bigr|
    \le
    \frac{\Delta_{\mathrm{trim}}^{A}}{\lambda_{\min}}
    \le
    \frac{1-p_c}{\lambda_{\min}},
  \end{equation*}
  proving \eqref{eq:delta-bias-quantile-shift}.
\end{proof}

\begin{remark}[Moderate trimming, aggressive trimming, and width interpretation]
  \label{rem:delta-bias-quantile-interpretation}
  The condition
  \begin{equation*}
      \Delta_{\mathrm{trim}}^{A}<\lambda_{\min}\rho
      \quad\text{(or the sufficient condition }1-p_c<\lambda_{\min}\rho\text{)}
  \end{equation*}
  is a moderate-trimming requirement. It can fail under aggressive trimming, because the retained clean mass $p_c$) may become too small. The local density lower bound around $q_P(1-\alpha)$ then cannot control the quantile perturbation. In that regime, Corollary~\ref{cor:delta-bias-quantile} should be read as a diagnostic for mild-enough trimming and oracle-quantile stability, not as a blanket statement over all thresholds.

  For the common symmetric interval rule
  \begin{equation*}
    C_\alpha(x)
    =
    [\hat f(x)-\tau,\,\hat f(x)+\tau],
  \end{equation*}
  the population-oracle width is $2\tau$. Therefore,
  \eqref{eq:delta-bias-quantile-shift} immediately implies that, whenever the assumptions of the corollary hold, the oracle width perturbation induced by trimming is controlled by
  \begin{equation*}
    2\bigl|q_{\mathrm{keep}}(1-\alpha)-q_P(1-\alpha)\bigr|
    \le
    \frac{2\Delta_{\mathrm{trim}}^{A}}{\lambda_{\min}}.
  \end{equation*}
\end{remark}

\section{Proofs of main generic results}
\label{app:proofs}

\subsection{Proof of Proposition~\ref{prop:retained-law-transfer}}
\begin{proof}
  Work conditionally on $\mathcal G$. Then $A$, $S$, and $t^\star$ are fixed, and the contaminated calibration observations are i.i.d. from $P^\star$.  Let
  \begin{equation*}
    K_i=\mathbf 1\{S(Z_i)\le t^\star\},
    \qquad
    \mathcal I_{\mathrm{keep}}=\{i:K_i=1\}.
  \end{equation*}
  Each $K_i$ has success probability $\mu_{\mathrm{keep}}$, so $N_{\mathrm{keep}}\sim\operatorname{Binomial}(m,\mu_{\mathrm{keep}})$. For any deterministic index set $I=\{i_1<\cdots<i_n\}$ with positive probability, the event $\mathcal I_{\mathrm{keep}}=I$ factors coordinate wise into retained and discarded events.  Independence gives
  \begin{equation*}
    \mathcal L\bigl((Z_{i_1},\ldots,Z_{i_n})\mid \mathcal I_{\mathrm{keep}}=I,\mathcal G\bigr)
    =R^{\otimes n}.
  \end{equation*}
  Since the right-hand side does not depend on $I$, the same product-law statement holds after conditioning only on $N_{\mathrm{keep}}=n$. Thus, the retained observations are i.i.d. from $R$. Their scores are exchangeable with an independent retained-law test score $A(W_0)$, where $W_0\sim R$.

  If $n=0$, or if $\lceil(n+1)(1-\alpha)\rceil=n+1$, the procedure sets $\hat\tau_\alpha=+\infty$, and retained-law coverage is one. Otherwise let
  \begin{equation*}
    r_n=\lceil(n+1)(1-\alpha)\rceil\le n.
  \end{equation*}
  Attach auxiliary tie-breakers $U_0,U_1,\ldots,U_n\stackrel{\mathrm{i.i.d.}}{\sim}{\rm Unif}(0,1)$, independent of the scores. Order the pairs $(A(W_i),U_i)$ lexicographically. Let $\widetilde R_0$ be the rank of the test pair among the $n+1$ pairs. By exchangeability and the continuous tie-breakers,
  \begin{equation*}
    \mathbb P\{\widetilde R_0\le r_n\mid N_{\mathrm{keep}}=n,\mathcal G\}
    =\frac{r_n}{n+1}
    \ge 1-\alpha.
  \end{equation*}
  Let $\hat\tau_\alpha$ be the ordinary inclusive $r_n$-th order statistic of the $n$ retained calibration scores. We claim
  \begin{equation*}
    \{\widetilde R_0\le r_n\}\subseteq\{A(W_0)\le\hat\tau_\alpha\}.
  \end{equation*}
  Indeed, if $A(W_0)>\hat\tau_\alpha$, then at least $r_n$ retained calibration scores are no larger than $\hat\tau_\alpha$, and hence are strictly smaller than $A(W_0)$. Their lexicographic pairs all precede the test pair, so $\widetilde R_0\ge r_n+1$, a contradiction. Therefore
  \begin{equation*}
    \mathbb P_R\{A(W_0)\le\hat\tau_\alpha\mid N_{\mathrm{keep}}=n,\mathcal G\}
    \ge 1-\alpha.
  \end{equation*}
  Averaging over $N_{\mathrm{keep}}$ proves
  \begin{equation*}
    \mathbb E\{F_R^A(\hat\tau_\alpha)\mid\mathcal G\}
    =
    \mathbb P_R\{A(W_0)\le\hat\tau_\alpha\mid\mathcal G\}
    \ge 1-\alpha.
  \end{equation*}

  For the transfer statement, set
  \begin{equation*}
    \Delta_{R,P_0,+}:=\sup_t\bigl(F_R^A(t)-F_{P_0}^A(t)\bigr)_+.
  \end{equation*}
  Then $F_{P_0}^A(t)\ge F_R^A(t)-\Delta_{R,P_0,+}$ for all finite $t$, and also at $+\infty$. Hence
  \begin{equation*}
    \mathbb P_{P_0}\{A(Z_{\mathrm{new}})\le\hat\tau_\alpha\mid\mathcal G\}
    =\mathbb E\{F_{P_0}^A(\hat\tau_\alpha)\mid\mathcal G\}
    \ge
    \mathbb E\{F_R^A(\hat\tau_\alpha)\mid\mathcal G\}
    -\Delta_{R,P_0,+}
    \ge 1-\alpha-\Delta_{R,P_0,+}.
  \end{equation*}
\end{proof}

\subsection{Proof of Theorem~\ref{thm:fixed-threshold-trimmed-coverage}}
\begin{proof}[Proof of Theorem~\ref{thm:fixed-threshold-trimmed-coverage}]
    Work conditionally on $\mathcal{G}$, and suppress this conditioning in the notation. Then $A$, $S$, and $t^\star$ are fixed. The calibration observations are i.i.d.\ from $P^\star$, and the target test point is independent of the calibration sample. The retained law
    \begin{equation*}
        R = P^\star(\,\cdot \mid S(Z) \le t^\star\,)
    \end{equation*}
    is well-defined because $\mu_{\rm keep} > 0$. We prove the identity for an arbitrary independent target law $P_0$, and recover the stated case by taking $P_0 = P$.
    
    By the thinning argument in the proof of Proposition~\ref{prop:retained-law-transfer}, conditional on $N_{\rm keep} = n$, the retained scores have the same joint law as
    \begin{equation*}
      A(W_1), \ldots, A(W_n), \qquad W_1, \ldots, W_n \overset{\mathrm{iid}}{\sim} R,
    \end{equation*}
    Moreover, $N_{\rm keep} \sim \mathrm{Binomial}(m, \mu_{\rm keep})$.
    
    Fix $n$. If $n = 0$ or $r_n = n + 1$, the algorithm sets $\hat\tau_\alpha = +\infty$, and
    \begin{equation*}
      \mathbb{P}_{P_0}\{A(Z_{\rm new}) \le \hat\tau_\alpha \mid N_{\rm keep} = n\} = 1.
    \end{equation*}
    This is the first branch of $\Psi_n(P_0, R)$.
    
    Suppose $1 \le r_n \le n$. Let $U_1, \ldots, U_n \overset{\mathrm{iid}}{\sim} \mathrm{Unif}(0,1)$ and set $X_i = Q_R^A(U_i)$. By the generalised inverse transform, $X_i \sim F_R^A$. Since $Q_R^A$ is non decreasing,
    \begin{equation*}
      X_{(r_n)} \;=\; Q_R^A(U_{(r_n)})
    \end{equation*}
    almost surely, even when $F_R^A$ has atoms. The order statistic $U_{(r_n)} \sim \mathrm{Beta}(r_n, n + 1 - r_n)$, so the inclusive retained cutoff conditional on $N_{\rm keep} = n$ has the same distribution as $Q_R^A(B_{r_n, n+1-r_n})$. Conditioning on the cutoff and using independence of the test point,
    \begin{equation*}
      \mathbb{P}_{P_0}\{A(Z_{\rm new}) \le \hat\tau_\alpha \mid \hat\tau_\alpha, N_{\rm keep} = n\}
      = F_{P_0}^A(\hat\tau_\alpha),
    \end{equation*}
    hence
    \begin{equation*}
      \mathbb{P}_{P_0}\{A(Z_{\rm new}) \le \hat\tau_\alpha \mid N_{\rm keep} = n\}
      = \mathbb{E}\bigl[F_{P_0}^A\{Q_R^A(B_{r_n, n+1-r_n})\}\bigr].
    \end{equation*}
    
    Combining the two branches and averaging over $N_{\rm keep} \sim \mathrm{Binomial}(m, \mu_{\rm keep})$ yields the target-law identity. Taking $P_0 = P$ gives \eqref{eq:exact-clean-coverage-identity}.
\end{proof}

\begin{remark}[Mixture upper bound on the retained-to-clean discrepancy]
\label{rem:mixture-upper-bound}
The componentwise upper bound used in Section~\ref{subsec:exact-scalar-mixture} motivates the clean-side and dirty-side decomposition. It follows from the retained-law identity \eqref{eq:retained-law-decomposition} by a short triangle-inequality argument, recorded here for reference.

Assume $p_c > 0$. Bayes' rule gives the retained-law decomposition $R = (1 - \tilde\varepsilon_\star) P_{\rm keep} + \tilde\varepsilon_\star Q_{\rm keep}$, with $Q_{\rm keep}$ arbitrary if $p_d = 0$. Therefore
\begin{equation*}
  F_R^A(t) - F_P^A(t)
  = (1 - \tilde\varepsilon_\star)\bigl(F_{P_{\rm keep}}^A(t) - F_P^A(t)\bigr)
  + \tilde\varepsilon_\star\bigl(F_{Q_{\rm keep}}^A(t) - F_P^A(t)\bigr),
\end{equation*}
and applying $(\cdot)_+$ pointwise followed by the supremum yields
\begin{equation}
  \label{eq:mixture-upper-bound}
  d_{R \to P,+}^A
  \le (1 - \tilde\varepsilon_\star) \Delta_{\rm trim,+}^A
  + \tilde\varepsilon_\star\, D_{Q,+}.
\end{equation}
The two-sided variant follows analogously, with $\Delta_{\rm trim}^A$ and $D_Q$ replacing their one-sided counterparts. Combining \eqref{eq:mixture-upper-bound} with Proposition~\ref{prop:finite-sample-scalar-bound} gives the population diagnostic lower bound stated in Section~\ref{subsec:exact-scalar-mixture}. This bound is also used in the operational certificate of Proposition~\ref{prop:operational-certificate}.
\end{remark}

\subsection{Proof of Proposition~\ref{prop:finite-sample-scalar-bound}}
\begin{proof}
  Work conditionally on $\mathcal G$, and write $d=d_{R\to P,+}^{A}$. By definition of $d$, for every score threshold $t$,
  \begin{equation*}
    F_P^A(t)\ge F_R^A(t)-d.
  \end{equation*}
  Since $F_R^A(Q_R^A(u))\ge u$ for the lower generalised quantile, this implies, for every $u\in(0,1)$,
  \begin{equation*}
    F_P^A\{Q_R^A(u)\}
    \ge
    u-d.
  \end{equation*}
  Because a CDF is non negative, $F_P^A\{Q_R^A(u)\}\ge0$ as well. Hence
  \begin{equation*}
    F_P^A\{Q_R^A(u)\}
    \ge
    (u-d)_+.
  \end{equation*}
  Applying this pointwise inequality to the exact identity \eqref{eq:exact-clean-coverage-identity} gives the first inequality in \eqref{eq:finite-sample-scalar-bound}. On the branch $r_n=n+1$, the cutoff is $+\infty$, and both the exact conditional coverage contribution and $\psi_n(d)$ equal one. On the branch $r_n\le n$, the exact contribution is
  \begin{equation*}
    \mathbb E\!\left[F_P^A\{Q_R^A(B_{r_n,n+1-r_n})\}\right]
    \ge
    \mathbb E\bigl[(B_{r_n,n+1-r_n}-d)_+\bigr].
  \end{equation*}
  Averaging over $N_{\rm keep}\sim{\rm Binomial}(m,\mu_{\rm keep})$ proves the first bound.

  The closed form for $\psi_n(d)$ follows from
  \begin{equation*}
    \mathbb E[(B-d)_+]
    =\mathbb E[B\mathbf 1\{B>d\}]-d\,\mathbb P(B>d),
  \end{equation*}
  with $B\sim {\rm Beta}(a,b)$, together with
  \begin{equation*}
    \mathbb P(B>d)=1-I_d(a,b),
    \qquad
    \mathbb E[B\mathbf 1\{B>d\}]
    =\frac{a}{a+b}\{1-I_d(a+1,b)\}.
  \end{equation*}
  Taking $a=r_n$ and $b=n+1-r_n$ gives the displayed expression.

  For the second bound, if $r_n=n+1$, then $\psi_n(d)=1\ge[1-\alpha-d]_+$. If $r_n\le n$, then
  \begin{equation*}
    \psi_n(d)
    =\mathbb E[(B_{r_n,n+1-r_n}-d)_+]
    \ge
    \mathbb E[B_{r_n,n+1-r_n}]-d
    =
    \frac{r_n}{n+1}-d
    \ge
    1-\alpha-d.
  \end{equation*}
  Since $\psi_n(d)\ge0$, this gives $\psi_n(d)\ge[1-\alpha-d]_+$ for every $n$. The binomial weights sum to one, so $L_{\rm fs}(d)\ge[1-\alpha-d]_+$.
\end{proof}

\subsection{Proof of Theorem~\ref{thm:one-sided-transfer-sharpness}}
\begin{proof}
  Let $p=1-\alpha$. We prove the minimax identity in two directions.

  Take any $(R^A,P^A)\in\mathfrak C_d$ and put
  \begin{equation*}
    d_0:=\sup_t\bigl(F_R^A(t)-F_P^A(t)\bigr)_+\le d .
  \end{equation*}
  The proof of Proposition~\ref{prop:finite-sample-scalar-bound} is a score-law argument and applies directly to the abstract experiment defining $\mathcal C_{m,\mu}$. It gives
  \begin{equation*}
    \mathcal C_{m,\mu}(R^A,P^A)
    \ge L_{\rm fs}^{m,\mu}(d_0).
  \end{equation*}
  For every non degenerate retained count, $\psi_n(x)=\mathbb E[(B_{r_n,n+1-r_n}-x)_+]$ is non increasing in $x$; in the degenerate branch it is identically one. Hence $L_{\rm fs}^{m,\mu}(x)$ is non increasing, and therefore
  \begin{equation*}
    \mathcal C_{m,\mu}(R^A,P^A)
    \ge L_{\rm fs}^{m,\mu}(d_0)
    \ge L_{\rm fs}^{m,\mu}(d).
  \end{equation*}
  Taking the infimum over $\mathfrak C_d$ proves the ``$\ge$'' direction of \eqref{eq:sharpness-minimax-form}.

  Take $R_d^A$ to be uniform on $[0,1]$, so that
  \begin{equation*}
    F_{R_d}^A(t)=
    \begin{cases}
      0, & t<0,\\
      t, & 0\le t<1,\\
      1, & t\ge1.
    \end{cases}
  \end{equation*}
  Define the continuous target score law
  \begin{equation*}
    F_{P_d}^A(t)=
    \begin{cases}
      0, & t<d,\\
      t-d, & d\le t<1,\\
      1-d+d(t-1), & 1\le t<2,\\
      1, & t\ge2.
    \end{cases}
  \end{equation*}
  This is a valid continuous CDF: it has density one on $[d,1]$, density $d$ on $[1,2]$, and total mass $(1-d)+d=1$. For $t\in[d,1]$,
  \begin{equation*}
    F_{R_d}^A(t)-F_{P_d}^A(t)=d.
  \end{equation*}
  For $t\in[0,d)$, the difference is $t\le d$; for $t\in[1,2]$, the difference is $d(2-t)\le d$; and outside these intervals it is non positive or zero. Hence
  \begin{equation*}
    \sup_t\{F_{R_d}^A(t)-F_{P_d}^A(t)\}_+=d,
  \end{equation*}
  so $(R_d^A, P_d^A)\in\mathfrak C_d$ with equality in the discrepancy constraint.

  Now condition on $N=n$. If $r_n = n + 1$, the cutoff is $+\infty$ and the coverage contribution is one, equal to $\psi_n(d)$. If $r_n\le n$, the cutoff is the $r_n$-th order statistic of $n$ i.i.d. uniform variables, so
  \begin{equation*}
    \widehat q_n\sim {\rm Beta}(r_n,n+1-r_n).
  \end{equation*}
  Since $\widehat q_n\in[0,1]$ almost surely,
  \begin{equation*}
    F_{P_d}^A(\widehat q_n)=(\widehat q_n-d)_+,
  \end{equation*}
  and the conditional target coverage is
  \begin{equation*}
    \mathbb E[(B_{r_n,n+1-r_n}-d)_+]=\psi_n(d).
  \end{equation*}
  Averaging these exact conditional contributions over $N\sim{\rm Binomial}(m,\mu)$ yields
  \begin{equation*}
    \mathcal C_{m,\mu}(R_d^A,P_d^A)=L_{\rm fs}^{m,\mu}(d).
  \end{equation*}
  This proves the ``$\le$'' direction and shows that the infimum is attained.

  Because $\mu\in(0,1]$ is fixed, $N\to\infty$ in probability as $m\to\infty$. Conditional on $N = n\to\infty$, the beta order statistic $B_{r_n,n+1-r_n}$ converges in probability to $p$, since $r_n/(n+1)\to p$. The variables $(B_{r_n,n+1-r_n}-d)_+$ are bounded by one. Bounded convergence therefore gives convergence of the binomial mixture expectation to $(p-d)_+$. The fallback probability tends to zero, and hence
  \begin{equation*}
    L_{\rm fs}^{m,\mu}(d)\to[p-d]_+=[(1-\alpha)-d]_+.
  \end{equation*}
\end{proof}

\subsection{Same-sample threshold selection}
\label{app:same-sample-selection}

The exact retained-law identity in Theorem~\ref{thm:fixed-threshold-trimmed-coverage} requires the threshold $t^\star$ to be fixed before the final contaminated calibration sample is used for the conformal quantile. If the threshold is selected from the same calibration data, the retained sample no longer has the fixed-threshold product law after conditioning on the selected threshold. The rank argument in Proposition~\ref{prop:retained-law-transfer} therefore no longer applies.

The following result gives a finite-grid, selection-aware alternative. It is a high-probability empirical-CDF transfer bound that holds uniformly over all candidate thresholds. It therefore allows arbitrary data-dependent selection rules, at the cost of a $\sqrt{\log K / N_t}$ grid-search penalty. The result is only a certificate template: the population losses $d_t$ must still be bounded independently to obtain a numerical guarantee.

\begin{proposition}[Finite-grid same-sample threshold selection]
  \label{prop:same-sample-threshold-selection}
  Condition on $\mathcal{G}$. Let $\mathcal{T} = \{t_1, \ldots, t_K\} \subset \overline{\mathbb{R}}$ be a finite set of candidate thresholds such that $\mu_t := P^\star(S \le t) > 0$ for every $t \in \mathcal{T}$. For each $t \in \mathcal{T}$, define
  \begin{equation*}
    I_t = \{i : S(Z_i) \le t\}, \qquad
    N_t = |I_t|, \qquad
    R_t = P^\star(\,\cdot \mid S \le t\,),
  \end{equation*}
  and let $\hat\tau_t$ be the trimmed conformal cutoff computed from $I_t$ by the inclusive quantile rule of Algorithm~\ref{alg:fixed-threshold-trimmed-cp}. Define the per-threshold retained-to-clean loss
  \begin{equation*}
    d_t = d_{R_t \to P,+}^A
        = \sup_a \bigl(F_{R_t}^A(a) - F_P^A(a)\bigr)_+ .
  \end{equation*}
  Fix $\beta\in(0,1)$. Set $r_t = \lceil (N_t + 1)(1 - \alpha) \rceil$ for all $t$, and, when $N_t \ge 1$, set
  \begin{equation*}
    \eta_t = \sqrt{\frac{\log(2K/\beta)}{2 N_t}}.
  \end{equation*}
  Let $\widehat t$ be any data-dependent rule taking values in $\mathcal{T}$. It may be chosen after inspecting all calibration scores and all candidate cutoffs.

  Then, with conditional probability at least $1 - \beta$ over the final contaminated calibration sample, the following holds simultaneously for every $t \in \mathcal{T}$ with $N_t \ge 1$ and $r_t \le N_t$,
  \begin{equation}
    \label{eq:same-sample-selection-uniform}
    \mathbb{P}_P\bigl\{ A(Z_{\rm new}) \le \hat\tau_t \big| Z_1, \ldots, Z_m, \mathcal{G} \bigr\}
    \ge
    \frac{r_t}{N_t} - d_t - \eta_t .
  \end{equation}
  For thresholds with $N_t = 0$ or $r_t = N_t + 1$, the algorithm sets $\hat\tau_t = +\infty$, and the conditional clean coverage equals one. On the same event, the bound \eqref{eq:same-sample-selection-uniform} holds at $t = \widehat t$ whenever the selected threshold is non degenerate. If it is degenerate, the conditional clean coverage is one. No independence between $\widehat t$ and the calibration sample is required, because the event holds uniformly over $\mathcal{T}$.
\end{proposition}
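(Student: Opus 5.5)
The plan is to condition on the retention pattern across the whole grid, apply a one-sided DKW inequality to the retained scores at each threshold separately, and union-bound over the $K$ thresholds. Work conditionally on $\mathcal G$. For each fixed $t\in\mathcal T$, the thinning argument in the proof of Proposition~\ref{prop:retained-law-transfer} shows that, conditional on $N_t=n\ge1$, the retained scores $\{A(Z_i):i\in I_t\}$ are i.i.d.\ from $F_{R_t}^A$. Write $\widehat F_t$ for their empirical CDF. The one-sided DKW inequality with Massart's constant gives, conditionally on $N_t=n$,
\[
  \mathbb P\Bigl\{\sup_a\bigl(\widehat F_t(a)-F_{R_t}^A(a)\bigr)>\eta\Bigr\}\le e^{-2n\eta^2}.
\]
Choosing $\eta=\eta_t$ makes this at most $\beta/(2K)$, and in fact the two-sided version already gives $\beta/K$. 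Since the bound holds for every value of $N_t\ge1$, integrating over $N_t$ shows that the failure event $E_t$ has unconditional probability at most $\beta/K$. The random radius $\eta_t$ causes no difficulty, because we condition on $N_t$ first. A union bound over the $K$ thresholds then gives an event of probability at least $1-\beta$ on which
\[
  F_{R_t}^A(a)\ge \widehat F_t(a)-\eta_t \quad\text{for all } a \text{ and all } t \text{ with } N_t\ge1 .
\]
Because this holds for every $t$ simultaneously, it needs no independence from $\widehat t$.

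On this event, fix any $t$ with $N_t\ge1$ and $r_t\le N_t$. The cutoff $\hat\tau_t$ is the $r_t$-th order statistic of the retained scores, so $\widehat F_t(\hat\tau_t)\ge r_t/N_t$, with ties only increasing the left side. Evaluating the uniform band at the data-dependent point $a=\hat\tau_t$ is legitimate because the band holds for all $a$. This gives $F_{R_t}^A(\hat\tau_t)\ge r_t/N_t-\eta_t$. Next, the definition of $d_t$ gives $F_P^A(a)\ge F_{R_t}^A(a)-d_t$ for every $a$, hence $F_P^A(\hat\tau_t)\ge r_t/N_t-d_t-\eta_t$.

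It remains to identify the conditional clean coverage. Since $Z_{\rm new}\sim P$ is independent of $(Z_1,\dots,Z_m)$ given $\mathcal G$, and $\hat\tau_t$ is a function of the calibration sample, we have
\[
  \mathbb P_P\{A(Z_{\rm new})\le\hat\tau_t\mid Z_{1:m},\mathcal G\}=F_P^A(\hat\tau_t).
\]
This yields \eqref{eq:same-sample-selection-uniform}. The degenerate cases ($N_t=0$, or $r_t=N_t+1$) give $\hat\tau_t=+\infty$, so coverage is one by the convention $F_P^A(+\infty)=1$. The statement for $\widehat t$ follows by plugging $t=\widehat t$ into the simultaneous event.

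The main obstacle is the first step: applying DKW with a random sample size and a random radius. I would resolve this by stating DKW conditionally on $\{N_t=n\}$, using the i.i.d.\ $R_t$ product structure from the thinning argument, and then averaging over $n$. I would also remark that $R_t$ and the retained sets are nested across $t$. This dependence across thresholds does not matter, because only a union bound over $t$ is used.
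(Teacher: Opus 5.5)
Your proposal is correct and follows the paper's proof essentially step for step: a per-threshold DKW bound conditional on $N_t=n$ via the thinning argument, averaging over $n$ (the paper uses the two-sided Massart bound to get $\beta/K$, as you note), a union bound over $\mathcal T$, then $\widehat F_t(\hat\tau_t)\ge r_t/N_t$, the transfer $F_P^A\ge F_{R_t}^A-d_t$, and the identification of the conditional coverage with $F_P^A(\hat\tau_t)$. One wording fix: drop the opening phrase about conditioning on the retention pattern across the whole grid, since conditioning jointly on $(I_{t_1},\dots,I_{t_K})$ would leave the points of $I_t$ independent but with cell-specific laws rather than i.i.d.\ from $R_t$; your actual argument conditions on $N_t$ alone, threshold by threshold, which is the right move.
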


The loss $d_t$ is a population retained-to-clean discrepancy for the candidate threshold. It is not directly observable. Proposition~\ref{prop:same-sample-threshold-selection} is therefore selection-aware for the empirical retained CDF, but it is not by itself a data-only coverage certificate. To obtain a numerical guarantee, the losses $\{d_t\}_{t \in \mathcal{T}}$ must be bounded using independent information. This can be done, for example, with a clean audit sample (Proposition~\ref{prop:exact-audit-certificate}) or with simultaneous independent CDF bounds over the grid.

\begin{proof}[Proof of Proposition~\ref{prop:same-sample-threshold-selection}]
Work conditionally on $\mathcal G$ throughout. Then the candidate grid is fixed, and $Z_1,\ldots,Z_m$ are i.i.d. from $P^\star$.

Fix $t\in\mathcal T$. Since $\mu_t>0$, the retained law
\begin{equation*}
    R_t=P^\star(\cdot\mid S\le t)
\end{equation*}
is well-defined. Conditional on $N_t=n\ge1$, the retained observations $\{Z_i:i\in I_t\}$ are i.i.d. from $R_t$ by independent thinning. Hence their retained-score empirical CDF
\begin{equation*}
  \widehat F_t^A(a) := \frac{1}{N_t}\sum_{i\in I_t}\mathbf 1\{A(Z_i)\le a\},
  \qquad N_t\ge1,
\end{equation*}
satisfies, for every $n\ge1$,
\begin{align*}
  &\mathbb P\left(
    \sup_a\left|\widehat F_t^A(a)-F_{R_t}^A(a)\right|
    >\sqrt{\frac{\log(2K/\beta)}{2n}}
    \,\Bigm|\, N_t=n,\mathcal G
  \right) \\
  &\hspace{7em}\le
  2\exp\left\{-2n\cdot\frac{\log(2K/\beta)}{2n}\right\}
  =\frac{\beta}{K},
\end{align*}
by the two-sided Dvoretzky--Kiefer--Wolfowitz inequality with Massart's constant.
Define the per-threshold good event
\begin{equation*}
  \mathcal E_t
  :=
  \{N_t=0\}
  \cup
  \left\{
    N_t\ge1,
    \sup_a\left|\widehat F_t^A(a)-F_{R_t}^A(a)\right|
    \le
    \sqrt{\frac{\log(2K/\beta)}{2N_t}}
  \right\}.
\end{equation*}
Equivalently, $\mathcal E_t^c$ is the event where $N_t\ge1$ and the empirical-CDF deviation exceeds the random threshold $\eta_t$. Averaging the conditional DKW bound over the possible positive retained counts gives
\begin{align*}
  \mathbb P(\mathcal E_t^c\mid\mathcal G)
  &=
  \sum_{n=1}^m
  \mathbb P(N_t=n\mid\mathcal G)
  \mathbb P\left(
    \sup_a\left|\widehat F_t^A(a)-F_{R_t}^A(a)\right|
    >\sqrt{\frac{\log(2K/\beta)}{2n}}
    \,\Bigm|\, N_t=n,\mathcal G
  \right) \\
  &\le
  \frac{\beta}{K}\,\mathbb P(N_t\ge1\mid\mathcal G)
  \le
  \frac{\beta}{K}.
\end{align*}
This is where the zero-retained-count case must be included in the good event. Otherwise, a threshold with $N_t=0$ would incorrectly make $\mathcal E_t$ fail.

Let
\begin{equation*}
  \mathcal E := \bigcap_{t\in\mathcal T}\mathcal E_t.
\end{equation*}
By the preceding display and the union bound,
\begin{equation*}
  \mathbb P(\mathcal E^c\mid\mathcal G)
  \le
  \sum_{t\in\mathcal T}\mathbb P(\mathcal E_t^c\mid\mathcal G)
  \le
  K\cdot\frac{\beta}{K}
  =\beta.
\end{equation*}
Thus $\mathcal E$ holds with conditional probability at least $1-\beta$. On $\mathcal E$, for every candidate threshold with $N_t\ge1$,
\begin{equation}
  \label{eq:same-sample-dkw-on-event}
  \sup_a\left|\widehat F_t^A(a)-F_{R_t}^A(a)\right|\le \eta_t.
\end{equation}

Work on $\mathcal E$, and fix any $t\in\mathcal T$ with $N_t\ge1$ and $r_t\le N_t$. The cutoff $\hat\tau_t$ is the $r_t$-th order statistic of the $N_t$ retained scores. Hence the inclusive empirical CDF satisfies
\begin{equation*}
  \widehat F_t^A(\hat\tau_t)\ge \frac{r_t}{N_t}.
\end{equation*}
Combining this with \eqref{eq:same-sample-dkw-on-event},
\begin{equation*}
  F_{R_t}^A(\hat\tau_t)
  \ge
  \widehat F_t^A(\hat\tau_t)-\eta_t
  \ge
  \frac{r_t}{N_t}-\eta_t.
\end{equation*}
By definition of $d_t$, for every finite score threshold $a$,
\begin{equation*}
  F_P^A(a)\ge F_{R_t}^A(a)-d_t.
\end{equation*}
Therefore
\begin{equation*}
  F_P^A(\hat\tau_t)
  \ge
  F_{R_t}^A(\hat\tau_t)-d_t
  \ge
  \frac{r_t}{N_t}-d_t-\eta_t.
\end{equation*}
Given $\mathcal G$, the clean test point is independent of the calibration sample. The cutoff $\hat\tau_t$ is measurable with respect to $Z_1,\ldots,Z_m$ and $\mathcal G$. Hence
\[
  \mathbb{P}_P\bigl\{ A(Z_{\rm new}) \le \hat\tau_t \,\big|\, Z_1, \ldots, Z_m, \mathcal{G} \bigr\}
  =F_P^A(\hat\tau_t),
\]
which proves \eqref{eq:same-sample-selection-uniform}.

If $N_t=0$ or $r_t=N_t+1$, the algorithm sets $\hat\tau_t=+\infty$, and the conditional clean coverage is $F_P^A(+\infty)=1$. The event $\mathcal E$  is defined before selecting a threshold and holds uniformly over the finite grid. Hence, after $\mathcal E$ occurs, any measurable rule $\widehat t\in\mathcal T$ inherits the corresponding bound at its selected threshold when the selected branch is non degenerate. If it is degenerate, it inherits the trivial coverage-one statement. This holds regardless of how $\widehat t$ depends on the calibration sample.
\end{proof}

\subsection{Proof of Proposition~\ref{prop:trimming-bias-orthogonality}}
\begin{proof}[Proof of Proposition~\ref{prop:trimming-bias-orthogonality}]
    The proof has three parts: the covariance identity itself, the homoscedastic specialization, and the approximate residual bound under model misspecification.

    For any $t \in \mathbb{R}$,
    \begin{equation*}
      F_{P_{\mathrm{keep}}}^A(t)
      =
      P\bigl(A(Z) \le t \big| K = 1\bigr)
      =
      \frac{\mathbb{E}_P\bigl[\mathbf{1}\{A(Z) \le t\} K\bigr]}{p_c},
    \end{equation*}
    hence
    \begin{align*}
      F_{P_{\mathrm{keep}}}^A(t) - F_P^A(t)
      &=
      \frac{\mathbb{E}_P[\mathbf{1}\{A(Z) \le t\}\,K]}{p_c}
      - \mathbb{E}_P[\mathbf{1}\{A(Z) \le t\}] \\
      &=
      \frac{\mathrm{Cov}_P\bigl(\mathbf{1}\{A(Z) \le t\},\,K\bigr)}{p_c},
    \end{align*}
    which establishes the pointwise covariance identity. Taking positive parts and the supremum over $t$ gives the one-sided form $\Delta_{\mathrm{trim},+}^A = p_c^{-1}\sup_t\{\mathrm{Cov}_P(\mathbf{1}\{A(Z) \le t\}, K)\}_+$, and taking absolute values gives the two-sided form $\Delta_{\mathrm{trim}}^A = p_c^{-1}\sup_t |\mathrm{Cov}_P(\mathbf{1}\{A(Z) \le t\}, K)|$. If $A(Z)$ and $K$ are independent under $P$, every covariance in these displays vanishes, so $F_{P_{\mathrm{keep}}}^A \equiv F_P^A$ and $\Delta_{\mathrm{trim}}^A = \Delta_{\mathrm{trim},+}^A = \Delta_{\mathrm{trim},-}^A = 0$.
 
    Suppose $Y = f^\star(X) + \sigma\xi$ with $\xi \ind X$, $\hat f = f^\star$, $A(X,Y) = |Y - \hat f(X)|$, and $S$ covariate-only. Then
    \begin{equation*}
      A(X, Y) = |Y - \hat f(X)| = \sigma|\xi|,
    \end{equation*}
    which depends on $X$ only through $\xi$. Since $\xi \ind X$, the variable $A(X, Y)$ is independent of any covariate-only retention indicator $K = \mathbf{1}\{S_X(X) \le t^\star\}$ under $P$. Then gives $\Delta_{\mathrm{trim}}^A = 0$.

    Now suppose only that $Y = f^\star(X) + \sigma\xi$ with $\xi \ind X$, but allow $\hat f \ne f^\star$ with $\|f^\star - \hat f\|_\infty \le \eta$. Define $b(X) := f^\star(X) - \hat f(X)$ and $A_0 := \sigma|\xi|$, so that $\|b\|_\infty \le \eta$ and
    \begin{equation*}
      A(X, Y) \;=\; |\sigma\xi + b(X)|.
    \end{equation*}
    The reverse triangle inequality gives $|A - A_0| \le |b(X)| \le \eta$. Under the same covariate-only retention condition, $A_0 \ind K$ under $P$, so the law of $A_0$ is unaffected by conditioning on $\{K = 1\}$:
    \begin{equation*}
      F_{P_{\mathrm{keep}}}^{A_0} \;\equiv\; F_P^{A_0}.
    \end{equation*}
    For any $t$, the inequality $A \le t$ implies $A_0 \le t + \eta$, and conversely $A_0 \le t - \eta$ implies $A \le t$. Hence
    \begin{equation*}
      F_{P_{\mathrm{keep}}}^A(t)
      \;\le\;
      F_{P_{\mathrm{keep}}}^{A_0}(t + \eta)
      \;=\;
      F_P^{A_0}(t + \eta),
      \qquad
      F_P^A(t)
      \;\ge\;
      F_P^{A_0}(t - \eta),
    \end{equation*}
    which give
    \begin{equation*}
      F_{P_{\mathrm{keep}}}^A(t) - F_P^A(t)
      \;\le\;
      F_P^{A_0}(t + \eta) - F_P^{A_0}(t - \eta)
      \;=\;
      \mathbb{P}\{A_0 \in [t - \eta, t + \eta]\}.
    \end{equation*}
    Reversing the roles of $P_{\mathrm{keep}}$ and $P$ in the same chain of inequalities gives the symmetric upper bound on $F_P^A(t) - F_{P_{\mathrm{keep}}}^A(t)$. Taking the supremum over $t$ yields the residual smoothness bound
    \begin{equation*}
        \Delta_{\mathrm{trim}}^A
        \le
        \omega_{A_0}(\eta)
        :=
        \sup_t \mathbb{P}{A_0 \in [t-\eta,t+\eta]}.
    \end{equation*}
    If $A_0$ has density bounded by $M$, every interval of length $2\eta$ has probability at most $2M\eta$. Hence $\omega_{A_0}(\eta)\le 2M\eta$.
\end{proof}

\subsection{Proof of Proposition~\ref{thm:separation-dichotomy-main}}
\label{app:proof-separation-dichotomy-main}

\begin{proof}[Proof of Proposition~\ref{thm:separation-dichotomy-main}]
Work conditionally on $\mathcal G$. Since $p_c>0$ and $\varepsilon\in[0,1)$, the denominator in
\begin{equation*}
  \tilde\varepsilon_\star
  =
  \frac{\varepsilon p_d}{(1-\varepsilon)p_c+\varepsilon p_d}
\end{equation*}
is positive. For fixed $p_c>0$ and $\varepsilon\in[0,1)$, define
\begin{equation*}
  g(x)=\frac{\varepsilon x}{(1-\varepsilon)p_c+\varepsilon x},
  \qquad x\ge0.
\end{equation*}
Then
\begin{equation*}
  g'(x)
  =
  \frac{\varepsilon(1-\varepsilon)p_c}
  {\{(1-\varepsilon)p_c+\varepsilon x\}^2}
  \ge0,
\end{equation*}
so $g$ is nondecreasing in the dirty retention probability. If $p_d\le\lambda p_c$, then
\begin{equation*}
  \tilde\varepsilon_\star
  =g(p_d)
  \le g(\lambda p_c)
  =
  \frac{\varepsilon\lambda p_c}
  {(1-\varepsilon)p_c+\varepsilon\lambda p_c}
  =
  \frac{\varepsilon\lambda}{1-\varepsilon+\varepsilon\lambda}.
\end{equation*}
In particular, if $p_d/p_c\to0$, take $\lambda=p_d/p_c$ to obtain $\tilde\varepsilon_\star\to0$. If $p_d=p_c$, then
\begin{equation*}
  \tilde\varepsilon_\star
  =
  \frac{\varepsilon p_c}{(1-\varepsilon)p_c+\varepsilon p_c}
  =\varepsilon.
\end{equation*}
This proves the proposition.
\end{proof}

    \subsection{Proofs for independent adaptive threshold selection}
        \begin{proof}
    Work conditionally on $\mathcal G\vee\mathcal H$. Under the theorem’s assumptions, $A$, $S$, and $\widehat t$ are fixed after this conditioning. The contaminated calibration observations remain i.i.d. from $P^\star$ and independent of the clean test point.

    Therefore, Theorem~\ref{thm:fixed-threshold-trimmed-coverage} applies verbatim with $t^\star=\widehat t$. It gives the untruncated bound $1-\alpha-d_{R_{\widehat t}\to P,+}^{A}$. Non-negativity of coverage gives the positive-part version stated in the theorem, on the event $\mu_{\mathrm{keep}}(\widehat t)>0$.

    The componentwise mixture upper-bound diagnostic additionally requires $p_c(\widehat t)>0$. The exact identity also follows, using generalised retained-score quantiles and without any continuity requirement.
\end{proof}

        \begin{proof}
  Work on \(\mathcal E_{\rm tune}\cap\{\widehat{\mathcal T}_\eta\ne\varnothing\}\).  Since \(\widehat t\in\widehat{\mathcal T}_\eta\), \(\widehat B(\widehat t)\le\eta\).  The uniform tuning error gives
  \[
    B(\widehat t)\le\widehat B(\widehat t)+r_B\le\eta+r_B,
  \]
  proving \eqref{eq:oracle-coverage-loss-control}.  Since \(d_{R_{\widehat t}\to P,+}^{A}\le B(\widehat t)\), Theorem~\ref{thm:independent-adaptive-threshold} yields the stated conditional coverage bound on this event.

  For efficiency, let \(t\in\mathcal T\) satisfy \(B(t)\le\eta-r_B\).  On \(\mathcal E_{\rm tune}\), \(\widehat B(t)\le B(t)+r_B\le\eta\), so \(t\in\widehat{\mathcal T}_\eta\).  Therefore \(\widehat W(\widehat t)\le\widehat W(t)\).  Applying the uniform efficiency error twice gives
  \[
    W(\widehat t)
    \le \widehat W(\widehat t)+r_W
    \le \widehat W(t)+r_W
    \le W(t)+2r_W.
  \]
  Taking the infimum over \(t\) with \(B(t)\le\eta-r_B\) proves \eqref{eq:oracle-efficiency-control}.

  For the marginal statement, on \(\mathcal E_{\rm tune}\cap\{\widehat{\mathcal T}_\eta\ne\varnothing\}\) the conditional coverage is at least \([1-\alpha-\eta-r_B]_+\).  The complement of this event has probability at most \(\beta+\beta_\varnothing\), and coverage probabilities are non-negative. Therefore, the marginal coverage is at least
  \[
    [1-\beta-\beta_\varnothing]_+[1-\alpha-\eta-r_B]_+
    \ge 1-\alpha-\eta-r_B-\beta-\beta_\varnothing.
  \]
  If \(\widehat{\mathcal T}_\eta\) is empty, the proposition explicitly falls back to \(t_{\rm fb}\). Theorem~\ref{thm:independent-adaptive-threshold} still gives the fixed-threshold guarantee for this fallback. A certified fallback loss can then replace the crude probability subtraction.
\end{proof}

        \subsection{Componentwise certificate interface}
\label{app:operational-certificate}

The diagnostic bounds in Section~\ref{subsec:exact-scalar-mixture} are population quantities for the realised auxiliary stage. They become numerical certificates only when their ingredients are bounded by independent information. The following proposition formalises the componentwise route. Given independent bounds on the retention probabilities, the clean trimming distortion, and the retained dirty discrepancy, it converts the mixture upper-bound diagnostic into a deterministic clean-coverage lower bound. The result is stated in one-sided form to match the scalar clean-transfer theorem. It also allows the contamination fraction to be known only through an upper bound.

\begin{proposition}[One-sided certificate interface]
  \label{prop:operational-certificate}
  Under the setup of Section~\ref{subsec:trimmed-setup}, suppose that $\varepsilon \le \varepsilon_{\max} < 1$ and that deterministic or $\mathcal{G}$-measurable quantities
  \[
    L_c, \quad U_d, \quad B_{\Delta,+}, \quad B_{Q,+}
  \]
  satisfy
  \[
    0 < L_c \le p_c, \qquad
    0 \le p_d \le U_d, \qquad
    \Delta_{\mathrm{trim},+}^A \le B_{\Delta,+}, \qquad
    D_{Q,+} \le B_{Q,+} \le 1.
  \]
  Define
  \[
    \bar\varepsilon
    \;:=\;
    \frac{\varepsilon_{\max} U_d}{(1 - \varepsilon_{\max}) L_c + \varepsilon_{\max} U_d}.
  \]
  Then, conditionally on $\mathcal{G}$,
  \begin{equation}
    \label{eq:operational-certificate-refined}
    \mathbb{P}\bigl(Y_{\mathrm{new}} \in C_\alpha(X_{\mathrm{new}}) \,\big|\, \mathcal{G}\bigr)
    \;\ge\;
    \bigl[\,1 - \alpha - B_{\Delta,+} - \bar\varepsilon \max\{B_{Q,+} - B_{\Delta,+},\,0\}\,\bigr]_+.
  \end{equation}
  A simpler but looser consequence is
  \begin{equation}
    \label{eq:operational-certificate-simple}
    \mathbb{P}\bigl(Y_{\mathrm{new}} \in C_\alpha(X_{\mathrm{new}}) \,\big|\, \mathcal{G}\bigr)
    \;\ge\;
    \bigl[\,1 - \alpha - B_{\Delta,+} - \bar\varepsilon\, B_{Q,+}\,\bigr]_+.
  \end{equation}
  If the certificates are non-random, the conditioning can be dropped by averaging. If they are $\mathcal{G}$-measurable and hold only on a high-probability auxiliary event, the marginalised version in Corollary~\ref{cor:marginalized-coverage} applies.
\end{proposition}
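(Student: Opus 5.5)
The plan is to combine the mixture upper bound of Remark~\ref{rem:mixture-upper-bound} with the scalar transfer bound of Proposition~\ref{prop:finite-sample-scalar-bound}, then replace each unknown population quantity by its certified bound. We work conditionally on $\mathcal G$. The hypotheses give $p_c\ge L_c>0$, so $\mu_{\rm keep}>0$ and $P_{\rm keep}$ is defined. Proposition~\ref{prop:finite-sample-scalar-bound} then yields
\[
\mathbb P\{Y_{\rm new}\in C_\alpha(X_{\rm new})\mid\mathcal G\}\ge[1-\alpha-d_{R\to P,+}^A]_+ .
\]
Remark~\ref{rem:mixture-upper-bound} gives
\[
d_{R\to P,+}^A\le(1-\tilde\varepsilon_\star)\Delta_{\rm trim,+}^A+\tilde\varepsilon_\star D_{Q,+},
\]
with the dirty term read as zero when $p_d=0$. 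Since $t\mapsto[1-\alpha-t]_+$ is nonincreasing, it suffices to show that this right-hand side is at most $B_{\Delta,+}+\bar\varepsilon\max\{B_{Q,+}-B_{\Delta,+},0\}$.

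First, bound the right-hand side through $B_{\Delta,+}$ and $B_{Q,+}$. Because $1-\tilde\varepsilon_\star\ge0$ and $\tilde\varepsilon_\star\ge0$, it is at most
\[
(1-\tilde\varepsilon_\star)B_{\Delta,+}+\tilde\varepsilon_\star B_{Q,+}=B_{\Delta,+}+\tilde\varepsilon_\star(B_{Q,+}-B_{\Delta,+}).
\]
This is at most $B_{\Delta,+}+\tilde\varepsilon_\star\max\{B_{Q,+}-B_{\Delta,+},0\}$. The $\max$ handles the case $B_{Q,+}<B_{\Delta,+}$: there the convex combination is at most $B_{\Delta,+}$, and monotonicity in $\tilde\varepsilon_\star$ would otherwise point the wrong way.

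Second, show $\tilde\varepsilon_\star\le\bar\varepsilon$. This is the only step that needs care, because $\varepsilon$, $p_c$ and $p_d$ all enter $\tilde\varepsilon_\star$. Write
\[
\tilde\varepsilon_\star=h(\varepsilon,p_c,p_d)=\frac{\varepsilon p_d}{(1-\varepsilon)p_c+\varepsilon p_d}.
\]
The function $h$ is nondecreasing in $p_d$; this is the derivative computation in the proof of Proposition~\ref{thm:separation-dichotomy-main}. It is nonincreasing in $p_c$, as is obvious from the denominator. It is nondecreasing in $\varepsilon$, since $h=1/\bigl(1+\tfrac{1-\varepsilon}{\varepsilon}\tfrac{p_c}{p_d}\bigr)$ when $\varepsilon>0$ and $p_d>0$, and $(1-\varepsilon)/\varepsilon$ decreases in $\varepsilon$. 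The edge cases $\varepsilon=0$ or $p_d=0$ give $h=0\le\bar\varepsilon$ trivially. Moving successively to $p_d\le U_d$, $p_c\ge L_c$ and $\varepsilon\le\varepsilon_{\max}$ gives $h\le\bar\varepsilon$. If $U_d=0$, then $\bar\varepsilon=0$, and the denominator of $\bar\varepsilon$ is positive because $L_c>0$ and $\varepsilon_{\max}<1$. Substituting $\tilde\varepsilon_\star\le\bar\varepsilon$ into the first step gives \eqref{eq:operational-certificate-refined}.

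Finally, \eqref{eq:operational-certificate-simple} follows from $\max\{B_{Q,+}-B_{\Delta,+},0\}\le B_{Q,+}$, which holds because $B_{\Delta,+}\ge\Delta_{\rm trim,+}^A\ge0$ and $B_{Q,+}\ge D_{Q,+}\ge0$. For the closing remarks, non-random certificates let us average over $\mathcal G$. Certificates that hold only on a high-probability event reduce to Corollary~\ref{cor:marginalized-coverage} with $L_0=B_{\Delta,+}+\bar\varepsilon\max\{B_{Q,+}-B_{\Delta,+},0\}$.
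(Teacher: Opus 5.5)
Your proposal is correct and follows the paper's proof essentially step for step. Both arguments combine the scalar transfer bound with the mixture upper bound, rewrite $(1-\tilde\varepsilon_\star)B_{\Delta,+}+\tilde\varepsilon_\star B_{Q,+}=B_{\Delta,+}+\tilde\varepsilon_\star(B_{Q,+}-B_{\Delta,+})$ with the $\max$ absorbing the case $B_{Q,+}<B_{\Delta,+}$, and use monotonicity to get $\tilde\varepsilon_\star\le\bar\varepsilon$. If anything, you are more explicit than the paper: it states monotonicity only in $\varepsilon$, and leaves implicit both the monotonicity in $p_c$ and $p_d$ and the nonnegativity needed for the simpler bound.
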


The worst-case dirty-side choice $B_{Q,+} = 1$ is always valid, but often conservative. Smaller values of $B_{Q,+}$, or non-trivial bounds on $p_d$, require additional information about the contamination mechanism. The proposition is intentionally an \emph{interface}. It states which independent inputs suffice for a numerical certificate, but does not construct those inputs from the contaminated calibration sample.

\begin{proof}[Proof of Proposition~\ref{prop:operational-certificate}]
Work conditionally on $\mathcal{G}$. The argument has three steps: a starting bound from retained-law transfer, a monotonicity bound on the retained mixture coefficient, and a componentwise bound on the mixture loss.

By Proposition~\ref{prop:finite-sample-scalar-bound} and the mixture upper bound on $d_{R\to P,+}^A$ derived in Section~\ref{subsec:exact-scalar-mixture},
\[
  \mathbb{P}\bigl\{Y_{\mathrm{new}} \in C_\alpha(X_{\mathrm{new}}) \mid \mathcal{G}\bigr\}
  \;\ge\;
  1 - \alpha - \bigl[(1 - \tilde\varepsilon_\star)\Delta_{\mathrm{trim},+}^A + \tilde\varepsilon_\star D_{Q,+}\bigr],
\]
and non-negativity of coverage gives the corresponding positive-part bound. It remains to upper-bound the bracketed loss using the certificate inputs.

For fixed non-negative $p_d$ and positive $p_c$, the map $e \mapsto e p_d / [(1-e)p_c + e p_d]$ is non-decreasing on $[0,1)$. Combined with $p_c \ge L_c$, $p_d \le U_d$, and $\varepsilon \le \varepsilon_{\max}$, this gives
\[
  \tilde\varepsilon_\star
  \;=\; \frac{\varepsilon p_d}{(1-\varepsilon) p_c + \varepsilon p_d}
  \;\le\; \frac{\varepsilon_{\max} U_d}{(1-\varepsilon_{\max}) L_c + \varepsilon_{\max} U_d}
  \;=\; \bar\varepsilon.
\]

Applying the distortion certificates $\Delta_{\mathrm{trim},+}^A \le B_{\Delta,+}$ and $D_{Q,+} \le B_{Q,+}$ and rearranging,
\[
  (1 - \tilde\varepsilon_\star)\Delta_{\mathrm{trim},+}^A + \tilde\varepsilon_\star D_{Q,+}
  \;\le\;
  (1 - \tilde\varepsilon_\star) B_{\Delta,+} + \tilde\varepsilon_\star B_{Q,+}
  \;=\;
  B_{\Delta,+} + \tilde\varepsilon_\star (B_{Q,+} - B_{\Delta,+}).
\]
If $B_{Q,+} \ge B_{\Delta,+}$, monotonicity in $\tilde\varepsilon_\star$ and Step 2 give an upper bound of $B_{\Delta,+} + \bar\varepsilon (B_{Q,+} - B_{\Delta,+})$. If $B_{Q,+} < B_{\Delta,+}$, the second term is non-positive and the upper bound is simply $B_{\Delta,+}$. In either case,
\[
  (1 - \tilde\varepsilon_\star)\Delta_{\mathrm{trim},+}^A + \tilde\varepsilon_\star D_{Q,+}
  \;\le\;
  B_{\Delta,+} + \bar\varepsilon \max\{B_{Q,+} - B_{\Delta,+},\,0\},
\]
which proves \eqref{eq:operational-certificate-refined} after taking positive parts. The simpler bound \eqref{eq:operational-certificate-simple} follows from $\max\{B_{Q,+} - B_{\Delta,+}, 0\} \le B_{Q,+}$.
\end{proof}

        \subsection{Audit-based certificates}
\label{app:audit-certificate}

The componentwise interface in Proposition~\ref{prop:operational-certificate} certifies the prediction set indirectly, by bounding each ingredient of the mixture decomposition. When an independent clean labelled audit sample is available, a tighter and more direct route is to certify the realised prediction set itself. The following proposition gives two such certificates. Exact binomial inversion for the final selected set, which is agnostic to how the set was constructed. And a one-sided Kolmogorov certificate for any data-dependent cutoff, expressed through an audited score-CDF gap.

\begin{proposition}[Final-set and score-CDF audit certificates]
  \label{prop:exact-audit-certificate}
  Let $\mathcal{A}$ be a sigma-field containing all data and randomness used to construct a prediction set-valued map $\widehat C$, and suppose an audit sample $(X_j^{\mathrm{aud}}, Y_j^{\mathrm{aud}})_{j=1}^{n_{\mathrm{aud}}} \overset{\mathrm{iid}}{\sim} P$ is independent of $\mathcal{A}$. Define
  \[
    p_{\widehat C} = P\{Y \in \widehat C(X) \mid \mathcal{A}\},
    \qquad
    H_j = \mathbf{1}\{Y_j^{\mathrm{aud}} \in \widehat C(X_j^{\mathrm{aud}})\},
    \qquad
    M = \sum_{j=1}^{n_{\mathrm{aud}}} H_j.
  \]
  For $\beta \in (0,1)$, set
  \[
    L_{\mathrm{bin}}(M, n_{\mathrm{aud}}, \beta) =
    \begin{cases}
      0, & M = 0, \\
      \mathrm{BetaInv}\{\beta;\, M,\, n_{\mathrm{aud}} - M + 1\}, & M \ge 1,
    \end{cases}
  \]
  where $\mathrm{BetaInv}\{\beta; a, b\}$ is the $\beta$-quantile of a $\mathrm{Beta}(a, b)$ distribution. Then, conditionally on $\mathcal{A}$,
  \begin{equation}
    \label{eq:exact-binomial-audit-certificate}
    \mathbb{P}\bigl\{p_{\widehat C} \ge L_{\mathrm{bin}}(M, n_{\mathrm{aud}}, \beta) \,\big|\, \mathcal{A}\bigr\}
    \;\ge\; 1 - \beta.
  \end{equation}

  Furthermore, let $I$ be any selected calibration index set of size $k \ge 1$, $\mathcal{A}$-measurable, and let
  \[
    \widehat F_I^A(a) = \frac{1}{k}\sum_{i \in I} \mathbf{1}\{A(Z_i) \le a\}.
  \]
  Let $A_j^{\mathrm{aud}} = A(X_j^{\mathrm{aud}}, Y_j^{\mathrm{aud}})$ and $\widehat F_{\mathrm{aud}}^A$ be their empirical CDF, and suppose $c_{n_{\mathrm{aud}}, \beta}^+$ satisfies
  \[
    \mathbb{P}\!\left(
      \sup_a \bigl(\widehat F_{\mathrm{aud}}^A(a) - F_P^A(a)\bigr) > c_{n_{\mathrm{aud}}, \beta}^+
      \,\Bigm|\, \mathcal{A}
    \right)
    \;\le\; \beta.
  \]
  Then, with conditional probability at least $1 - \beta$, every $\mathcal{A}$-measurable cutoff $\hat\tau$ satisfies
  \begin{equation}
    \label{eq:ks-audit-selected-cdf-certificate}
    P\{A(Z_{\mathrm{new}}) \le \hat\tau \mid \mathcal{A}\}
    \;\ge\;
    \widehat F_I^A(\hat\tau)
    - \sup_a \bigl\{\widehat F_I^A(a) - \widehat F_{\mathrm{aud}}^A(a)\bigr\}_+
    - c_{n_{\mathrm{aud}}, \beta}^+.
  \end{equation}
  In particular, if $\hat\tau$ is the $r$-th order statistic of $\{A(Z_i) : i \in I\}$, then $\widehat F_I^A(\hat\tau) \ge r/k$, yielding the explicit bound
  \[
    P\{A(Z_{\mathrm{new}}) \le \hat\tau \mid \mathcal{A}\}
    \;\ge\;
    \frac{r}{k}
    - \sup_a \bigl\{\widehat F_I^A(a) - \widehat F_{\mathrm{aud}}^A(a)\bigr\}_+
    - c_{n_{\mathrm{aud}}, \beta}^+.
  \]
\end{proposition}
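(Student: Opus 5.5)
The two parts need separate arguments: an exact binomial tail computation conditional on $\mathcal{A}$ for the final-set certificate, and a deterministic CDF chaining argument on a single good event for the Kolmogorov part.

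\emph{Binomial inversion.} First condition on $\mathcal{A}$. Then $\widehat C$ is a fixed set-valued map and $p:=p_{\widehat C}$ is a fixed number in $[0,1]$. Because the audit sample is i.i.d.\ from $P$ and independent of $\mathcal{A}$, the indicators $H_j$ are i.i.d.\ Bernoulli$(p)$, so $M\sim\mathrm{Binomial}(n_{\rm aud},p)$ given $\mathcal{A}$.

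The claim is a Clopper--Pearson lower bound, so I would show $\mathbb{P}\{L_{\rm bin}(M)>p\mid\mathcal{A}\}\le\beta$. Write $n=n_{\rm aud}$. The key tool is the binomial--beta identity
\[
\mathbb{P}(\mathrm{Bin}(n,p)\ge k)=I_p(k,n-k+1).
\]
The map $M\mapsto L_{\rm bin}(M)$ is nondecreasing, with $L_{\rm bin}(0)=0\le p$. Hence the event $\{L_{\rm bin}(M)>p\}$ equals $\{M\ge k^\ast\}$, where $k^\ast$ is the smallest $k\ge1$ with $\mathrm{BetaInv}(\beta;k,n-k+1)>p$; if no such $k$ exists the event is empty. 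For this $k^\ast$ we have $I_p(k^\ast,n-k^\ast+1)<\beta$, because the Beta CDF evaluated below its $\beta$-quantile is less than $\beta$. The identity then gives $\mathbb{P}(M\ge k^\ast)<\beta$.

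Checking monotonicity of $L_{\rm bin}$ in $M$ is the only mildly delicate step. It follows because $\mathrm{Beta}(k,n-k+1)$ is stochastically increasing in $k$. The boundary cases $p\in\{0,1\}$ are immediate.

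\emph{KS part.} Let $\mathcal{E}$ be the event $\sup_a(\widehat F_{\rm aud}^A(a)-F_P^A(a))\le c^+_{n_{\rm aud},\beta}$; by hypothesis $\mathbb{P}(\mathcal{E}\mid\mathcal{A})\ge1-\beta$. Fix any $\mathcal{A}$-measurable cutoff $\hat\tau$ and put $g:=\sup_a\{\widehat F_I^A(a)-\widehat F_{\rm aud}^A(a)\}_+$. Since $I$ is $\mathcal{A}$-measurable, $\widehat F_I^A$ is determined by $\mathcal{A}$, while $g$ is computed from $\mathcal{A}$ together with the audit sample.

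On $\mathcal{E}$, the following chain holds deterministically for every real $a$, and in particular at $a=\hat\tau$:
\[
F_P^A(a)\ge \widehat F_{\rm aud}^A(a)-c^+\ge \widehat F_I^A(a)-g-c^+.
\]
The first inequality is the definition of $\mathcal{E}$ and the second is the definition of $g$. The independence of the clean test point from $\mathcal{A}$ gives $P\{A(Z_{\rm new})\le\hat\tau\mid\mathcal{A}\}=F_P^A(\hat\tau)$. The case $\hat\tau=+\infty$ is trivial because $F_P^A(+\infty)=1$.

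The chain holds for all $a$ at once, so the single event $\mathcal{E}$ covers every $\mathcal{A}$-measurable cutoff simultaneously. This is why no union bound over cutoffs is needed. Finally, if $\hat\tau$ is the $r$-th order statistic of the $k$ selected scores, then at least $r$ of them are $\le\hat\tau$. This gives $\widehat F_I^A(\hat\tau)\ge r/k$ and hence the explicit bound.
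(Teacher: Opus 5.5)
Your proposal is correct and follows the paper's proof. For the first claim, conditional on $\mathcal{A}$ the $H_j$ are i.i.d.\ Bernoulli$(p_{\widehat C})$ and the bound is the one-sided Clopper--Pearson guarantee, which you verify explicitly through the monotonicity of $L_{\rm bin}$ and the identity $\mathbb{P}(\mathrm{Bin}(n,p)\ge k)=I_p(k,n-k+1)$, whereas the paper simply invokes it as standard; the Kolmogorov part is the same chaining on the single audit event $\{\sup_a\{\widehat F_{\rm aud}^A(a)-F_P^A(a)\}\le c^+_{n_{\rm aud},\beta}\}$, with the paper only packaging the bound through the intermediate quantity $d_{I\to P,+}^A$ before evaluating at $\hat\tau$.
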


The two certificates serve complementary purposes. The binomial inversion certificate \eqref{eq:exact-binomial-audit-certificate} is exact. It applies to \emph{any} prediction-set construction, including fixed-threshold trimming, same-sample threshold selection, Stein thinning, and other selected-subset procedures. The only requirement is that the audit sample is independent of all selection decisions.

The KS certificate \eqref{eq:ks-audit-selected-cdf-certificate} is asymptotic in $n_{\mathrm{aud}}$, but it is more flexible. It directly certifies the realised score CDF. This certificate can then be combined with empirical-CDF gaps. When the selected calibration scores are well aligned with the audit scores, this can yield sharper guarantees.

\begin{proof}[Proof of Proposition~\ref{prop:exact-audit-certificate}]
We prove the two certificates separately.

Conditional on $\mathcal{A}$, the prediction-set map $\widehat C$ is fixed and the audit indicators $H_1, \ldots, H_{n_{\mathrm{aud}}}$ are i.i.d.\ Bernoulli with success probability $p_{\widehat C}$. The lower endpoint $L_{\mathrm{bin}}(M, n_{\mathrm{aud}}, \beta)$ is the standard one-sided Clopper--Pearson inversion of the binomial model. For each fixed $p \in [0, 1]$, the probability under $\mathrm{Binomial}(n_{\mathrm{aud}}, p)$ that the inverted lower endpoint exceeds $p$ is at most $\beta$. Applying this with $p = p_{\widehat C}$ gives \eqref{eq:exact-binomial-audit-certificate}.

Work on the audit-side event
\[
  \mathcal{E}_{\mathrm{aud}}
  := \biggl\{
    \sup_a \bigl(\widehat F_{\mathrm{aud}}^A(a) - F_P^A(a)\bigr) \le c_{n_{\mathrm{aud}}, \beta}^+
  \biggr\},
\]
which has conditional probability at least $1 - \beta$ by assumption. On $\mathcal{E}_{\mathrm{aud}}$, for every $a$,
\[
  F_P^A(a) \;\ge\; \widehat F_{\mathrm{aud}}^A(a) - c_{n_{\mathrm{aud}}, \beta}^+,
\]
so for every $a$,
\[
  \widehat F_I^A(a) - F_P^A(a)
  \;\le\;
  \widehat F_I^A(a) - \widehat F_{\mathrm{aud}}^A(a) + c_{n_{\mathrm{aud}}, \beta}^+.
\]
Taking suprema and positive parts,
\[
  d_{I \to P,+}^A
  \;:=\; \sup_a \bigl\{\widehat F_I^A(a) - F_P^A(a)\bigr\}_+
  \;\le\;
  \sup_a \bigl\{\widehat F_I^A(a) - \widehat F_{\mathrm{aud}}^A(a)\bigr\}_+
  + c_{n_{\mathrm{aud}}, \beta}^+.
\]
By the definition of $d_{I \to P,+}^A$, every $\mathcal{A}$-measurable cutoff $\hat\tau$ satisfies $F_P^A(\hat\tau) \ge \widehat F_I^A(\hat\tau) - d_{I \to P,+}^A$. Substituting yields \eqref{eq:ks-audit-selected-cdf-certificate}. The explicit-rank bound follows from the inclusive empirical-CDF inequality $\widehat F_I^A(\hat\tau) \ge r/k$ when $\hat\tau$ is the $r$-th selected order statistic.
\end{proof}

    \subsection{Upper-bound, efficiency, and marginalization statements}
        \begin{proof}
  Work conditionally on $\mathcal G$. We suppress this conditioning in the notation. Define the retained mixture law as
  \begin{equation*}
      R:=P^\star(,\cdot\mid S(Z)\le t^\star).
  \end{equation*}
  By the same conditional-thinning argument used in Theorem~\ref{thm:fixed-threshold-trimmed-coverage}, the following holds. Conditional on $N_{\mathrm{keep}}=n$, the retained calibration scores are i.i.d. draws from the law of $A(W)$, where $W\sim R$. The clean test score used for retained-law evaluation is also an independent draw from the same score law.

  Fix $n$. Suppose
  \begin{equation*}
      r_n:=\lceil(n+1)(1-\alpha)\rceil\le n .
  \end{equation*}
  Under the no-ties assumption, we obtain the exact retained-law identity. The same identity also holds if the procedure uses randomised threshold tie-breaking in a consistent way.

  If ties occur at the cutoff, the situation is different. The ordinary inclusive rule may have retained-law coverage strictly above the randomised-rank value. Therefore, the upper-bound statement requires either continuity of $F_R^A$, or consistent randomised tie-breaking.
  \[
    \mathbb P_R(A_0\le \hat\tau_\alpha\mid N_{\mathrm{keep}}=n)
    =
    \frac{r_n}{n+1}.
  \]
  Since
  \[
    r_n\le (n+1)(1-\alpha)+1,
  \]
  this is at most
  \[
    1-\alpha+\frac{1}{n+1}.
  \]
  If \(r_n=n+1\), then \(\hat\tau_\alpha=+\infty\) and the retained-law coverage
  is one. In this case \(r_n=n+1\) implies
  \[
    (n+1)(1-\alpha)>n,
  \]
  hence \(\alpha<1/(n+1)\), and therefore
  \[
    1\le 1-\alpha+\frac{1}{n+1}.
  \]
  Thus, for every \(n\ge0\),
  \[
    \mathbb E\bigl[
      F_R^A(\hat\tau_\alpha)
      \mid N_{\mathrm{keep}}=n
    \bigr]
    \le
    1-\alpha+\frac{1}{n+1}.
  \]
  Averaging over \(N_{\mathrm{keep}}\) gives
  \[
    \mathbb E\bigl[F_R^A(\hat\tau_\alpha)\bigr]
    \le
    1-\alpha+
    \mathbb E\!\left[
      \frac{1}{N_{\mathrm{keep}}+1}
    \right].
  \]

  For the randomised lexicographic rule, the same argument is applied to the pair
  \(T=(A,U)\).  Define the pair CDF
  \[
    G_M(a,u)=M\{A<a\}+uM\{A=a\},
    \qquad u\in[0,1].
  \]
  Then the randomised retained-rank calculation gives the same upper bound for
  \(\mathbb E[G_R(\widehat T)]\).  Moreover,
  \[
    G_P(a,u)-G_R(a,u)
    =(1-u)\{F_P^A(a^-)-F_R^A(a^-)\}
    +u\{F_P^A(a)-F_R^A(a)\}
    \le d_{P\to R,+}^{A},
  \]
  so the target-law comparison below is unchanged.  In the continuous no-ties
  case, this reduces to the scalar CDF comparison.

  The pointwise one-sided comparison gives, for all
  \(t\in\mathbb R\cup\{+\infty\}\),
  \[
    F_P^A(t)
    \le
    F_R^A(t)+d_{P\to R,+}^{A}.
  \]
  Therefore
  \[
    \mathbb P\bigl(Y_{\mathrm{new}}\in C_\alpha(X_{\mathrm{new}})\mid\mathcal G\bigr)
    =
    \mathbb E\bigl[F_P^A(\hat\tau_\alpha)\bigr]
    \le
    1-\alpha
    +
    d_{P\to R,+}^{A}
    +
    \mathbb E\!\left[
      \frac{1}{N_{\mathrm{keep}}+1}
    \right].
  \]
  Since
  \[
    d_{P\to R,+}^{A}
    \le
    \delta_{\mathrm{mix},-}^{\mathrm{ref}}
    \le
    \delta_{\mathrm{mix}}^{\mathrm{ref}}(D_Q),
  \]
  the two subsequent upper bounds in \eqref{eq:coverage-upper-bound} follow.

  Finally, \(N_{\mathrm{keep}}\sim\operatorname{Binomial}(m,\mu_{\mathrm{keep}})\).
  Since \(\mu_{\mathrm{keep}}>0\),
  \begin{align*}
    \mathbb E\!\left[\frac{1}{N_{\mathrm{keep}}+1}\right]
    &=
    \sum_{n=0}^m
    \frac{1}{n+1}
    \binom{m}{n}
    \mu_{\mathrm{keep}}^n
    (1-\mu_{\mathrm{keep}})^{m-n} \\
    &=
    \frac{1}{(m+1)\mu_{\mathrm{keep}}}
    \sum_{n=0}^m
    \binom{m+1}{n+1}
    \mu_{\mathrm{keep}}^{n+1}
    (1-\mu_{\mathrm{keep}})^{m-n} \\
    &=
    \frac{
      1-(1-\mu_{\mathrm{keep}})^{m+1}
    }{
      (m+1)\mu_{\mathrm{keep}}
    }.
  \end{align*}
  This proves \eqref{eq:coverage-upper-bound}.
\end{proof}

        \begin{proof}
  Write \(R=P^\star(\cdot\mid S(Z)\le t^\star)\), and let
  \[
    \widehat F_n(t)
    :=
    \frac{1}{n}
    \sum_{i\in\mathcal I_{\mathrm{keep}}}
    \mathbf 1\{A_i\le t\}
  \]
  be the empirical CDF of the retained scores when \(N_{\mathrm{keep}}=n\ge1\).
  Conditional on \(N_{\mathrm{keep}}=n\), the retained scores are i.i.d.\ from
  \(F_R^A\). Hence the Dvoretzky--Kiefer--Wolfowitz inequality gives \citep{dvoretzky1956asymptotic,massart1990tight}
  \[
    \mathbb P\!\left(
      \sup_{t\in\mathbb R}
      |\widehat F_n(t)-F_R^A(t)|
      >
      \eta_{n_0,\beta}
      \,\middle|\,
      N_{\mathrm{keep}}=n
    \right)
    \le
    2e^{-2n\eta_{n_0,\beta}^2}
    \le
    2e^{-2n_0\eta_{n_0,\beta}^2}
    =
    \beta
  \]
  for every \(n\ge n_0\). Therefore the event
  \[
    \mathcal E_{n_0,\beta}
    :=
    \left\{
      N_{\mathrm{keep}}\ge n_0
      \ \text{and}\
      \sup_{t\in\mathbb R}
      |\widehat F_{N_{\mathrm{keep}}}(t)-F_R^A(t)|
      \le
      \eta_{n_0,\beta}
    \right\}
  \]
  has probability at least \(1-\mathbb P(N_{\mathrm{keep}}<n_0)-\beta\).

  Work on \(\mathcal E_{n_0,\beta}\), and write \(n=N_{\mathrm{keep}}\). Since
  \(n\ge n_0\ge \lceil 1/\alpha\rceil-1\), the conformal index
  \[
    r_n:=\lceil(n+1)(1-\alpha)\rceil
  \]
  satisfies \(r_n\le n\), so \(\hat\tau_\alpha\) is a finite order statistic.
  Let \(p=1-\alpha\), \(q=q_P(p)\), and
  \[
    \Gamma=\Gamma_{n_0,\beta}.
  \]
  By assumption, \(h:=\Gamma/\lambda_{\min}<\rho\).  The local CDF-increment condition from Lemma~\ref{lem:quantile-perturbation} gives
  \[
    F_P^A(q+h)\ge p+\Gamma,
    \qquad
    F_P^A(q-h)\le p-\Gamma.
  \]

  The retained-to-clean Kolmogorov comparison gives
  \[
    \sup_{t\in\mathbb R}|F_R^A(t)-F_P^A(t)|
    \le
    \delta_{\mathrm{mix}}^{\mathrm{ref}}(D_Q).
  \]
  Hence
  \[
    F_R^A(q+h)
    \ge
    p+\eta_{n_0,\beta}+\frac{2}{n_0},
  \]
  and on \(\mathcal E_{n_0,\beta}\),
  \[
    \widehat F_n(q+h)
    \ge
    p+\frac{2}{n_0}.
  \]
  On the other hand,
  \[
    \frac{r_n}{n}
    \le
    \frac{(n+1)p+1}{n}
    =
    p+\frac{p+1}{n}
    \le
    p+\frac{2}{n}
    \le
    p+\frac{2}{n_0}.
  \]
  Therefore \(\widehat F_n(q+h)\ge r_n/n\), which implies
  \[
    \hat\tau_\alpha\le q+h.
  \]

  Similarly,
  \[
    F_R^A(q-h)
    \le
    p-\eta_{n_0,\beta}-\frac{2}{n_0},
  \]
  and hence
  \[
    \widehat F_n(q-h)\le p-\frac{2}{n_0}<\frac{r_n}{n}.
  \]
  Thus \(q-h\) lies strictly below the \(r_n\)-th retained order statistic, and
  \[
    \hat\tau_\alpha\ge q-h.
  \]
  Combining the upper and lower inequalities yields
  \[
    |\hat\tau_\alpha-q_P(1-\alpha)|
    \le
    \frac{\Gamma_{n_0,\beta}}{\lambda_{\min}}
  \]
  on \(\mathcal E_{n_0,\beta}\), which proves
  \eqref{eq:threshold-efficiency-prob}. Under the additional residual-score assumption \(A(x,y)=|y-\hat f(x)|\), the
  prediction set is the symmetric residual interval displayed in the proposition.
  Its width is \(2\hat\tau_\alpha\), so the stated width bound follows
  immediately.
\end{proof}

        \begin{proof}
  Let
  \[
    H
    :=
    \mathbb P\bigl(
      Y_{\mathrm{new}}\in C_\alpha(X_{\mathrm{new}})
      \,\big|\,
      \mathcal G
    \bigr).
  \]
  Since \(0\le H\le1\), and on \(\mathcal E\) one has
  \[
    H\ge 1-\alpha-L\ge 1-\alpha-L_0,
  \]
  it follows that
  \[
    H\mathbf 1_{\mathcal E}
    \ge
    [1-\alpha-L_0]_+\,\mathbf 1_{\mathcal E}.
  \]
  Taking expectations gives
  \[
    \mathbb P\bigl(
      Y_{\mathrm{new}}\in C_\alpha(X_{\mathrm{new}})
    \bigr)
    =
    \mathbb E[H]
    \ge
    [1-\alpha-L_0]_+\mathbb P(\mathcal E)
    \ge
    (1-\beta)[1-\alpha-L_0]_+,
  \]
  proving \eqref{eq:marginalized-coverage-product}. The additive form follows
  from the elementary inequality
  \[
    (1-\beta)c\ge c-\beta
    \qquad\text{for }0\le c\le1,
  \]
  applied with \(c=[1-\alpha-L_0]_+\), and from
  \[
    [1-\alpha-L_0]_+\ge 1-\alpha-L_0.
  \]
  The final statement is the union bound applied to
  \(\mathcal E_s^c\cup\mathcal E_{\mathrm{ref}}(\delta_\ell)^c\).
\end{proof}

    \subsection{Proof of Proposition~\ref{prop:mixture-coverage-fixed}}
        \begin{proof}
  The direct bound \eqref{eq:mixture-direct-transfer} follows from Proposition~\ref{prop:retained-law-transfer} applied with target law \(P_0=P^\star\), together with non-negativity of coverage for the positive-part form. The target-law exact identity follows from the proof of Theorem~\ref{thm:fixed-threshold-trimmed-coverage} with \(P_0=P^\star\).

  For the clean-component reduction, write the mixture-test coverage as
  \begin{align*}
    &\mathbb P_{Z_{\mathrm{new}}\sim P^\star}
    \{Y_{\mathrm{new}}\in C_\alpha(X_{\mathrm{new}})\mid\mathcal G\} \\
    &\qquad =
    (1-\varepsilon)
    \mathbb P_{Z_{\mathrm{new}}\sim P}
    \{Y_{\mathrm{new}}\in C_\alpha(X_{\mathrm{new}})\mid\mathcal G\}
    +
    \varepsilon
    \mathbb P_{Z_{\mathrm{new}}\sim Q}
    \{Y_{\mathrm{new}}\in C_\alpha(X_{\mathrm{new}})\mid\mathcal G\}.
  \end{align*}
  Dropping the non-negative dirty-component coverage gives \eqref{eq:mixture-reduction}. Applying Theorem~\ref{thm:fixed-threshold-trimmed-coverage} to the clean component gives \eqref{eq:mixture-coverage-fixed}. Since \(D_Q\le1\), \eqref{eq:mixture-coverage-fixed-worst} follows.
\end{proof}

    \subsection{Supplementary corollary for contaminated-mixture coverage}
        \begin{corollary}[Conservative additive-form mixture coverage bound]
  \label{cor:mixture-additive}
  Under the assumptions of Proposition~\ref{prop:mixture-coverage-fixed}, on any realisation of \(\mathcal G\) satisfying those assumptions,
  \begin{align}
    \mathbb P_{Z_{\mathrm{new}}\sim P^\star}
    \bigl(
      Y_{\mathrm{new}}\in C_\alpha(X_{\mathrm{new}})
      \,\big|\,
      \mathcal G
    \bigr)
    &\ge
    (1-\varepsilon)(1-\alpha)
    -(1-\varepsilon)(1-\tilde\varepsilon_\star)\Delta_{\mathrm{trim}}^{A}
    -(1-\varepsilon)\tilde\varepsilon_\star D_Q.
    \label{eq:mixture-additive}
  \end{align}
  In particular, since \(D_Q\le1\), \(1-\varepsilon\le1\), and
  \(1-\tilde\varepsilon_\star\le1\), the weaker but simpler bound
  \begin{align}
    \mathbb P_{Z_{\mathrm{new}}\sim P^\star}
    \bigl(
      Y_{\mathrm{new}}\in C_\alpha(X_{\mathrm{new}})
      \,\big|\,
      \mathcal G
    \bigr)
    &\ge
    (1-\varepsilon)(1-\alpha)
    -\Delta_{\mathrm{trim}}^{A}
    -\tilde\varepsilon_\star
    \label{eq:mixture-additive-simple}
  \end{align}
  also holds.
\end{corollary}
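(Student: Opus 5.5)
The plan is to start from the clean-component reduction \eqref{eq:mixture-reduction} of Proposition~\ref{prop:mixture-coverage-fixed}. Dropping the non-negative dirty-test contribution gives that the mixture-test coverage is at least $(1-\varepsilon)$ times the clean-test coverage $\mathbb P_{Z_{\rm new}\sim P}\{Y_{\rm new}\in C_\alpha(X_{\rm new})\mid\mathcal G\}$. It therefore suffices to lower-bound the clean-test coverage by a two-sided mixture expression and then multiply through by $1-\varepsilon\ge0$.

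For the clean-test factor, I will use Proposition~\ref{prop:finite-sample-scalar-bound}, which gives at least $[1-\alpha-d_{R\to P,+}^A]_+\ge 1-\alpha-d_{R\to P,+}^A$. Then I will apply the mixture upper bound of Remark~\ref{rem:mixture-upper-bound}:
\[
  d_{R\to P,+}^A\le(1-\tilde\varepsilon_\star)\Delta_{\rm trim,+}^A+\tilde\varepsilon_\star D_{Q,+}.
\]
Since one-sided gaps are dominated by the two-sided ones, $\Delta_{\rm trim,+}^A\le\Delta_{\rm trim}^A$ and $D_{Q,+}\le D_Q$. This gives the clean-test lower bound $1-\alpha-\delta_{\rm mix}^{\rm ref}(D_Q)$. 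Multiplying by $1-\varepsilon$ and expanding yields \eqref{eq:mixture-additive} directly, without the positive part. Keeping the untruncated form is harmless, because the inequality $a\ge b$ with $a\ge(1-\varepsilon)[b']_+\ge(1-\varepsilon)b'$ remains valid. The only bookkeeping point is that $p_c>0$ is needed for $P_{\rm keep}$ and $\Delta_{\rm trim}^A$ to make sense. If $p_d=0$, the dirty term is interpreted as zero, per the conventions in Section~\ref{subsec:trimmed-setup}.

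For \eqref{eq:mixture-additive-simple}, I will bound each subtracted term from above. First, $(1-\varepsilon)(1-\tilde\varepsilon_\star)\Delta_{\rm trim}^A\le\Delta_{\rm trim}^A$, using $0\le 1-\varepsilon\le 1$, $0\le 1-\tilde\varepsilon_\star\le 1$ and $\Delta_{\rm trim}^A\ge0$. Second, $(1-\varepsilon)\tilde\varepsilon_\star D_Q\le\tilde\varepsilon_\star$, using $D_Q\le1$ and $\tilde\varepsilon_\star\ge0$.

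There is no real obstacle. The only care needed is the direction of each inequality when weakening: every discarded factor multiplies a non-negative quantity that is being subtracted, so enlarging it decreases the bound. The step of dropping the positive part likewise only weakens the bound.
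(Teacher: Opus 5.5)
Your proof is correct and takes essentially the paper's route. The paper cites the bound \eqref{eq:mixture-coverage-fixed} of Proposition~\ref{prop:mixture-coverage-fixed}, namely $(1-\varepsilon)\bigl[1-\alpha-\delta_{\rm mix}^{\rm ref}(D_Q)\bigr]$, which is exactly what you re-derive from the clean-component reduction, the scalar transfer bound, the mixture upper bound, and the one-sided-versus-two-sided comparison. It then expands that bound and weakens it term by term, just as you do.
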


\begin{proof}
  Proposition~\ref{prop:mixture-coverage-fixed} gives
  \[
    \mathbb P_{Z_{\mathrm{new}}\sim P^\star}
    \bigl(
      Y_{\mathrm{new}}\in C_\alpha(X_{\mathrm{new}})
      \,\big|\,
      \mathcal G
    \bigr)
    \ge
    (1-\varepsilon)
    \bigl[
      1-\alpha-\delta_{\mathrm{mix}}^{\mathrm{ref}}(D_Q)
    \bigr].
  \]
  Expanding
  \[
    \delta_{\mathrm{mix}}^{\mathrm{ref}}(D_Q)
    =
    (1-\tilde\varepsilon_\star)\Delta_{\mathrm{trim}}^{A}
    +
    \tilde\varepsilon_\star D_Q
  \]
  gives \eqref{eq:mixture-additive}.

  The simple bound follows from the non-negativity of
  \(\Delta_{\mathrm{trim}}^{A}\), \(\tilde\varepsilon_\star\), and \(D_Q\), together
  with
  \[
    0\le 1-\varepsilon\le1,
    \qquad
    0\le 1-\tilde\varepsilon_\star\le1,
    \qquad
    0\le D_Q\le1.
  \]
\end{proof}

        \begin{remark}[Interpreting the conservative clean-component mixture bound]
  \label{rem:mixture-interpretation}
  Proposition~\ref{prop:mixture-coverage-fixed} contains two distinct statements for mixture-test coverage. The direct retained-to-$P^\star$ transfer is the relevant scalar CDF bound when the deployment target is the contaminated mixture itself.

  The clean-component reduction is more conservative. It lower-bounds mixture-test coverage using only the clean part of the mixture. It also uses only the trivial fact that dirty-test coverage is non-negative.
  In conditional form, the clean-component reduction gives
  \[
    \mathbb P_{Z_{\mathrm{new}}\sim P^\star}
    \bigl(
      Y_{\mathrm{new}}\in C_\alpha(X_{\mathrm{new}})
      \,\big|\,
      \mathcal G
    \bigr)
    \ge
    (1-\varepsilon)
    \bigl[
      1-\alpha-\delta_{\mathrm{mix}}^{\mathrm{ref}}(D_Q)
    \bigr]_+.
  \]
  Expanding the retained-mixture loss yields
  \[
    (1-\varepsilon)
    \Bigl[
      1-\alpha
      -(1-\tilde\varepsilon_\star)\Delta_{\mathrm{trim}}^{A}
      -\tilde\varepsilon_\star D_Q
    \Bigr]_+.
  \]
  This display separates three mechanisms. First, the prefactor $(1-\varepsilon)$ is a worst-case deployed-mixture penalty. It appears because we make no assumption about coverage under an arbitrary dirty test law.

  Second, the retained mixture coefficient $\tilde\varepsilon_\star$ is the dirty-component weight actually seen by the retained conformal quantile. Third, the clean distortion $\Delta_{\mathrm{trim}}^{A}$ is the price paid for conditioning clean calibration points on the retained event.

  Thus, the raw contamination level $\varepsilon$ is not, by itself, the right diagnostic for the conformal calibration step after trimming. The relevant population tuple is
  \begin{equation*}
      \bigl(p_c,p_d,\tilde\varepsilon_\star,
      \Delta_{\mathrm{trim}}^{A},D_Q\bigr).
  \end{equation*}
  No additional $m^{-1/2}$ term appears in this population theorem. Such terms arise only from separate empirical certificates, or from events used to justify data-driven threshold selection.
\end{remark}

        \begin{remark}[Interpretation of the \((1-\varepsilon)\) factor in the mixture-coverage bound]
  \label{rem:mixture-worst-q}

  The prefactor $(1-\varepsilon)$ in the conservative clean-component bounds of Proposition~\ref{prop:mixture-coverage-fixed} and Corollary~\ref{cor:mixture-additive} should be viewed as a worst-case contamination benchmark. It appears because the deployed-mixture coverage lower bound uses only the trivial inequality
  \begin{equation*}
      \mathbb P_{Z_{\mathrm{new}}\sim Q}
      \bigl(
      Y_{\mathrm{new}}\in C_\alpha(X_{\mathrm{new}})
      \bigr)
      \ge 0.
  \end{equation*}
  Thus, the bound does not exploit any favourable structure of the contaminating law $Q$. Consequently, the resulting mixture-law guarantee is conservative whenever the prediction set also achieves none trivial coverage under $Q$. Stronger deployed-mixture coverage guarantees are possible in settings where we can separately lower-bound
  \begin{equation*}
      \mathbb P_{Z_{\mathrm{new}}\sim Q}
      \bigl(
      Y_{\mathrm{new}}\in C_\alpha(X_{\mathrm{new}})
      \bigr).
  \end{equation*}
  They are also possible when one can exploit overlap or regularity shared by $P$ and $Q$. The direct retained-to-$P^\star$ bound in Proposition~\ref{prop:mixture-coverage-fixed} is the appropriate scalar transfer bound for mixture testing. The clean-component corollaries should be interpreted differently. They provide robust worst-case benchmarks under arbitrary dirty test behaviour. They should not be read as sharp descriptions of mixture-law coverage in favourable contamination regimes.
\end{remark}

\section{Benchmark regimes}
    \label{app:benchmark-regimes}
    \subsection{Ideal separation}
        \begin{theorem}[Best-case validity under ideal separation for fixed-threshold trimming]
  \label{thm:ideal-separation-fixed}

  Assume the setup of Theorem~\ref{thm:fixed-threshold-trimmed-coverage}. Let
  \begin{equation*}
      \mathcal A_{\mathrm{clean}}
  \end{equation*}
  denote the realised clean auxiliary stage. This includes all auxiliary clean randomness used to construct the fitted functions $A$ and $S$, as well as the fixed threshold $t^\star$.

  Throughout this theorem, all conditional probability statements are interpreted given the realised clean auxiliary stage. Thus, conditional on $\mathcal A_{\mathrm{clean}}$, the functions $A$, $S$, and the threshold $t^\star$ are deterministic.

  Suppose that, for the realised score function $S$ and threshold $t^\star$, the fixed threshold perfectly separates the clean and contaminating distributions:
    \begin{equation}
      \label{eq:ideal-separation-fixed-threshold}
      \mathbb P_{Z\sim P}\bigl(S(Z)\le t^\star\bigr)=1,
      \qquad
      \mathbb P_{Z\sim Q}\bigl(S(Z)\le t^\star\bigr)=0,
    \end{equation}
  Here, the probabilities are taken over an independent draw $Z$ from $P$ or $Q$. The score function $S$ and the threshold $t^\star$ are held fixed at their realised auxiliary-stage values.

  Then, for every integer $n$ with positive conditional probability under the realised auxiliary stage, conditional on
  \(\mathcal A_{\mathrm{clean}}\) and on \(N_{\mathrm{keep}}=n\),
  \begin{equation}
    \label{eq:ideal-separation-conditional-coverage}
    \mathbb P\Bigl(
      Y_{\mathrm{new}}\in C_\alpha(X_{\mathrm{new}})
      \,\Big|\,
      \mathcal A_{\mathrm{clean}},\,N_{\mathrm{keep}}=n
    \Bigr)
    \ge
    \frac{\left\lceil (n+1)(1-\alpha)\right\rceil}{n+1}
    \ge
    1-\alpha.
  \end{equation}
  Consequently,
  \[
    \mathbb P\bigl(
      Y_{\mathrm{new}}\in C_\alpha(X_{\mathrm{new}})
    \bigr)
    \ge
    1-\alpha.
  \]
\end{theorem}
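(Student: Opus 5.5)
The plan is to reduce the statement to ordinary split-conformal exchangeability by showing that, under ideal separation, the retained law coincides with the clean law. Work conditionally on $\mathcal A_{\mathrm{clean}}$, so that $A$, $S$ and $t^\star$ are deterministic and the calibration points are i.i.d.\ from $P^\star$. Then $\mu_{\rm keep}=(1-\varepsilon)p_c+\varepsilon p_d=1-\varepsilon>0$, since $\varepsilon<1$. Hence $R$ is well defined. With $p_c=1$ and $p_d=0$, the decomposition \eqref{eq:retained-law-decomposition} gives $\tilde\varepsilon_\star=0$. Moreover, $P_{\rm keep}=P(\cdot\mid S\le t^\star)=P$ because $P(S\le t^\star)=1$. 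It follows that $R=P$ as laws on $\mathcal Z$, and in particular $F_R^A=F_P^A$.

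Next I will use the thinning argument from the proof of Proposition~\ref{prop:retained-law-transfer}. Conditional on $N_{\rm keep}=n$ with positive probability, the retained observations are i.i.d.\ from $R=P$. The clean test point is independent of them and also has law $P$, so the $n$ retained scores and the test score $A(Z_{\rm new})$ are exchangeable.

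If $n=0$ or $r_n:=\lceil(n+1)(1-\alpha)\rceil=n+1$, then $\hat\tau_\alpha=+\infty$. The conditional coverage is then $1\ge r_n/(n+1)$, and this bound holds even for $n=0$, since $r_0/(1)\le 1$. Otherwise, I apply the lexicographic tie-breaking rank argument from that proof verbatim, with $W_0=Z_{\rm new}$. It gives conditional coverage at least $\mathbb P(\widetilde R_0\le r_n)=r_n/(n+1)$, and this is at least $1-\alpha$ by the definition of the ceiling. Alternatively, the exact identity of Theorem~\ref{thm:fixed-threshold-trimmed-coverage} with $d_{R\to P,+}^A=0$ yields $\Psi_n\ge\mathbb E[B_{r_n,n+1-r_n}]=r_n/(n+1)$, using $F_P^A(Q_P^A(u))\ge u$.

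Finally, I average over $N_{\rm keep}$ to get $\mathbb P(\cdot\mid\mathcal A_{\mathrm{clean}})\ge1-\alpha$, and then take expectation over $\mathcal A_{\mathrm{clean}}$ to obtain the marginal statement.

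The only delicate step is the identification $R=P$ at the level of full laws rather than score CDFs. It matters that the null set $\{S>t^\star\}$ under $P$ does not alter $P_{\rm keep}$, and that $Q$-points are retained with probability zero, so that $Q_{\rm keep}$ carries zero weight. Once this is settled, the rest is the standard rank argument.
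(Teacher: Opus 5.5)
Your proof is correct. It reaches the same endpoint as the paper's proof: exchangeability of the retained scores with the clean test score given $N_{\mathrm{keep}}=n$, then the rank argument and averaging. It gets there by a different route.

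You argue at the level of laws. The conditions $p_c=1$ and $p_d=0$ force $\tilde\varepsilon_\star=0$ and $P_{\mathrm{keep}}=P$, hence $R=P$. The generic thinning lemma from the proof of Proposition~\ref{prop:retained-law-transfer} then gives i.i.d.\ $P$ retained points given $N_{\mathrm{keep}}=n$. Your alternative via Theorem~\ref{thm:fixed-threshold-trimmed-coverage} makes this explicit: the result is essentially the special case $d_{R\to P,+}^A=0$ of the general transfer theory, sharpened per $n$ to $r_n/(n+1)$.

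The paper instead works on the latent-label space. It proves the sample-level identity $\mathcal I_{\mathrm{keep}}=\mathcal C$ almost surely, via a finite union bound over null events. It then re-derives exchangeability by conditioning on $\mathcal C=I$ and averaging over index sets of size $n$; this is in effect a re-proof of the thinning step.

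What each route buys:
\begin{itemize}
\item Your route is shorter. It shows that nothing beyond the already-established retained-law machinery is needed.
\item The paper's route is self-contained. It also yields the stronger pathwise statement that trimming returns exactly the clean subsample. That statement underlies the accompanying remark that $\mathbb P(N_{\mathrm{keep}}=0)=\varepsilon^m$. The distributional fact $N_{\mathrm{keep}}\sim\mathrm{Binomial}(m,1-\varepsilon)$ also follows from your $\mu_{\mathrm{keep}}=1-\varepsilon$.
\end{itemize}
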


\begin{remark}[Rare degenerate case under ideal separation]
  \label{rem:ideal-separation-degenerate}
  Under the ideal-separation assumptions,
  \[
    \mathcal I_{\mathrm{keep}}=\mathcal C
    \qquad\text{and hence}\qquad
    N_{\mathrm{keep}}=|\mathcal C|
  \]
  almost surely, where \(\mathcal C\) is the latent clean index set in the mixture construction. Therefore
  \[
    N_{\mathrm{keep}}
    \sim
    \mathrm{Binomial}(m,1-\varepsilon),
    \qquad
    \mathbb P\bigl(N_{\mathrm{keep}}=0\bigr)=\varepsilon^m.
  \]
  Thus, the case $N_{\mathrm{keep}}=0$ is not impossible. However, it is a rare corner event when $\varepsilon<1$ and $m$ are moderate. On this event, the algorithm returns
  \begin{equation*}
      C_\alpha(x)=\mathcal Y.
  \end{equation*}
  This gives coverage one, but it provides no efficiency.
\end{remark}

\begin{proof}
  Work on the expanded mixture probability space with latent component labels.
  Let
  \[
    \mathcal C
    :=
    \{i\in\{1,\dots,m\}: Z_i \text{ is drawn from } P\},
    \qquad
    \mathcal D
    :=
    \{i\in\{1,\dots,m\}: Z_i \text{ is drawn from } Q\}.
  \]
  Thus, $\mathcal C$ and $\mathcal D$ form a partition of $\{1,\dots,m\}$. The partition is determined by whether each calibration point comes from the clean component or the contaminating component of the mixture.

  We first show that the retained set equals the clean component almost surely. Conditional on $\mathcal A_{\mathrm{clean}}$, the score $S$ and threshold
  $t^\star$ are fixed. By the first part of
  \eqref{eq:ideal-separation-fixed-threshold}, for each clean calibration point $i\in\mathcal C$,
  \[
    \mathbb P\bigl(
      S(Z_i)>t^\star
      \,\big|\,
      i\in\mathcal C,\mathcal A_{\mathrm{clean}}
    \bigr)
    =
    0.
  \]
  Since the calibration sample is finite, a union bound over the clean indices gives
  \[
    \mathbb P\Bigl(
      \exists i\in\mathcal C:\ S(Z_i)>t^\star
      \,\Big|\,
      \mathcal C,\mathcal D,\mathcal A_{\mathrm{clean}}
    \Bigr)
    \le
    \sum_{i\in\mathcal C}
    \mathbb P\bigl(
      S(Z_i)>t^\star
      \,\big|\,
      i\in\mathcal C,\mathcal A_{\mathrm{clean}}
    \bigr)
    =
    0.
  \]
  Similarly, by the second part of
  \eqref{eq:ideal-separation-fixed-threshold}, for each dirty calibration point
  \(i\in\mathcal D\),
  \[
    \mathbb P\bigl(
      S(Z_i)\le t^\star
      \,\big|\,
      i\in\mathcal D,\mathcal A_{\mathrm{clean}}
    \bigr)
    =
    0,
  \]
  and another finite union bound yields
  \[
    \mathbb P\Bigl(
      \exists i\in\mathcal D:\ S(Z_i)\le t^\star
      \,\Big|\,
      \mathcal C,\mathcal D,\mathcal A_{\mathrm{clean}}
    \Bigr)
    =
    0.
  \]
  Therefore, almost surely,
  \[
    S(Z_i)\le t^\star \quad \text{for every } i\in\mathcal C,
    \qquad
    S(Z_i)>t^\star \quad \text{for every } i\in\mathcal D.
  \]
  Since
  \[
    \mathcal I_{\mathrm{keep}}
    =
    \{i\in\{1,\dots,m\}: S(Z_i)\le t^\star\},
  \]
  it follows that, almost surely,
  \begin{equation}
    \label{eq:ikeep-equals-clean}
    \mathcal I_{\mathrm{keep}}=\mathcal C.
  \end{equation}
  Consequently,
  \begin{equation}
    \label{eq:nkeep-equals-nclean}
    N_{\mathrm{keep}}
    =
    |\mathcal I_{\mathrm{keep}}|
    =
    |\mathcal C|.
  \end{equation}

  Next, condition on \(\mathcal A_{\mathrm{clean}}\). Once the full clean auxiliary stage is fixed, the fitted functions \(A\) and \(S\) and the threshold \(t^\star\) are deterministic measurable objects. Conditional on the latent clean index set \(\mathcal C\), the clean calibration points
  \[
    \{Z_i:i\in\mathcal C\}
  \]
  are independent draws from \(P\). Since the identities \eqref{eq:ikeep-equals-clean} and \eqref{eq:nkeep-equals-nclean} hold almost surely, the retained calibration scores satisfy
  \[
    \{A_i:i\in\mathcal I_{\mathrm{keep}}\}
    =
    \{A(Z_i):i\in\mathcal C\}.
  \]

  We now justify the conditioning on \(N_{\mathrm{keep}}\) explicitly. Fix an integer \(n\in\{0,\dots,m\}\). On the ideal-separation event, \(N_{\mathrm{keep}}=|\mathcal C|\). For every deterministic subset \(I\subseteq\{1,\dots,m\}\) with \(|I|=n\), conditional on
  \[
    \mathcal A_{\mathrm{clean}}
    \quad\text{and}\quad
    \mathcal C=I,
  \]
  the collection
  \[
    \{Z_i:i\in I\}
  \]
  consists of \(n\) independent draws from \(P\). Moreover, the clean test point
  \[
    Z_{\mathrm{new}}=(X_{\mathrm{new}},Y_{\mathrm{new}})\sim P
  \]
  is independent of the calibration sample and of the clean auxiliary stage. Hence, conditional on
  \[
    \mathcal A_{\mathrm{clean}}
    \quad\text{and}\quad
    \mathcal C=I,
  \]
  the nonconformity scores
  \[
    \{A(Z_i):i\in I\}
    \cup
    \{A(Z_{\mathrm{new}})\}
  \]
  are exchangeable.

  Averaging over all deterministic subsets $I$ of size $n$ preserves this exchangeable joint law. Indeed, for each such $I$, the conditional law of the retained clean scores and the test score is the same. It is the law induced by $n+1$ i.i.d. draws from $P$, after applying the fixed map $A$. Therefore, conditional on
  \[
    \mathcal A_{\mathrm{clean}}
    \quad\text{and}\quad
    N_{\mathrm{keep}}=n,
  \]
  the collection
  \[
    \{A_i:i\in\mathcal I_{\mathrm{keep}}\}\cup\{A(Z_{\mathrm{new}})\}
  \]
  is exchangeable.

  We now fix the realised value \(N_{\mathrm{keep}}=n\) and consider two cases.

  \paragraph{Case 1: \(N_{\mathrm{keep}}=0\).}
  By the algorithmic definition of the procedure, \(\hat\tau_\alpha=+\infty\).
  Hence
  \[
    C_\alpha(x)=\mathcal Y
    \qquad\text{for every }x\in\mathcal X,
  \]
  and therefore
  \[
    \mathbb P\Bigl(
      Y_{\mathrm{new}}\in C_\alpha(X_{\mathrm{new}})
      \,\Big|\,
      \mathcal A_{\mathrm{clean}},\,N_{\mathrm{keep}}=0
    \Bigr)
    =
    1.
  \]
  Since
  \[
    \frac{\left\lceil (0+1)(1-\alpha)\right\rceil}{0+1}
    =
    \left\lceil 1-\alpha\right\rceil
    =
    1,
  \]
  the bound \eqref{eq:ideal-separation-conditional-coverage} holds in this
  case.

  \paragraph{Case 2: \(n\ge1\).}
  Let
  \[
    r_{\mathrm{keep}}
    :=
    \bigl\lceil (n+1)(1-\alpha)\bigr\rceil.
  \]

  If \(r_{\mathrm{keep}}=n+1\), then by the algorithmic definition
  \(\hat\tau_\alpha=+\infty\), and hence
  \[
    \mathbb P\Bigl(
      Y_{\mathrm{new}}\in C_\alpha(X_{\mathrm{new}})
      \,\Big|\,
      \mathcal A_{\mathrm{clean}},\,N_{\mathrm{keep}}=n
    \Bigr)
    =
    1
    \ge
    \frac{r_{\mathrm{keep}}}{n+1}.
  \]

  If instead \(r_{\mathrm{keep}}\le n\), then \(\hat\tau_\alpha\) is the
  \(r_{\mathrm{keep}}\)-th order statistic of the retained scores. By
  exchangeability, the usual split-conformal rank argument gives
  \[
    \mathbb P\Bigl(
      A(Z_{\mathrm{new}})\le \hat\tau_\alpha
      \,\Big|\,
      \mathcal A_{\mathrm{clean}},\,N_{\mathrm{keep}}=n
    \Bigr)
    \ge
    \frac{r_{\mathrm{keep}}}{n+1}
    \ge
    1-\alpha.
  \]
  Since \(Y_{\mathrm{new}}\in C_\alpha(X_{\mathrm{new}})\) is equivalent to
  \(A(Z_{\mathrm{new}})\le \hat\tau_\alpha\), this proves
  \eqref{eq:ideal-separation-conditional-coverage} in the second case as well.

  Finally, averaging \eqref{eq:ideal-separation-conditional-coverage} over the
  distribution of \(N_{\mathrm{keep}}\) conditional on \(\mathcal A_{\mathrm{clean}}\),
  and then over \(\mathcal A_{\mathrm{clean}}\), yields the unconditional lower
  bound
  \[
    \mathbb P\bigl(
      Y_{\mathrm{new}}\in C_\alpha(X_{\mathrm{new}})
    \bigr)
    \ge
    1-\alpha.
  \]
\end{proof}

\subsection{No separation}
\begin{theorem}[No first-order retained-contamination gain under no separation]
  \label{thm:no-separation-fixed}

  Work conditionally on the auxiliary sigma-field \(\mathcal G\). Assume the setup of Theorem~\ref{thm:fixed-threshold-trimmed-coverage}, so that \(A\), \(S\), and \(t^\star\) are fixed after conditioning and all quantities below are \(\mathcal G\)-measurable. Suppose that the anomaly score is uninformative at the fixed threshold \(t^\star\), in the retained-probability sense that
  \[
    p_c
    =
    \mathbb P_{Z\sim P}\bigl(S(Z)\le t^\star\bigr)
    =
    \mathbb P_{Z\sim Q}\bigl(S(Z)\le t^\star\bigr)
    =
    p_d
    =:
    r
    \in(0,1].
  \]

  Under \(Z\sim P\), define
  \[
    K:=\mathbf 1\{S(Z)\le t^\star\}.
  \]
  Since \(p_c=r\), \(\mathbb E_P[K]=r\). Define the clean retained-event
  supremum covariance discrepancy
  \[
    \rho_{A,K}
    :=
    \sup_{t\in\mathbb R}
    \left|
      \operatorname{Cov}_P
      \bigl(
        \mathbf 1\{A(Z)\le t\},K
      \bigr)
    \right|.
  \]
  Also define the retained dirty nonconformity discrepancies
  \[
    Q_{\mathrm{keep}}:=Q(\,\cdot\mid S(Z)\le t^\star),
    \qquad
    D_{Q,+}:=\sup_{t\in\mathbb R}
    \bigl(F_{Q_{\mathrm{keep}}}^{A}(t)-F_P^A(t)\bigr)_+,
  \]
  and
  \[
    D_Q:=\sup_{t\in\mathbb R}
    \bigl|F_{Q_{\mathrm{keep}}}^{A}(t)-F_P^A(t)\bigr|.
  \]

  Then the following statements hold:
  \begin{enumerate}
    \item The retained mass satisfies
      \[
        \mu_{\mathrm{keep}}
        =
        (1-\varepsilon)p_c+\varepsilon p_d
        =
        r.
      \]

    \item The retained mixture coefficient is unchanged:
      \[
        \tilde\varepsilon_\star
        =
        \frac{\varepsilon p_d}{(1-\varepsilon)p_c+\varepsilon p_d}
        =
        \varepsilon.
      \]

    \item The clean trimming distortion is exactly controlled by the retained-event
      supremum covariance discrepancy:
      \[
        \Delta_{\mathrm{trim}}^{A}
        =
        \frac{\rho_{A,K}}{r}.
      \]
      In particular, if \(\rho_{A,K}\le c\), then
      \[
        \Delta_{\mathrm{trim}}^{A}\le\frac{c}{r}.
      \]
      As a strong sufficient special case, if \(S(Z)\) and \(A(Z)\) are
      independent under \(P\), then \(\rho_{A,K}=0\) and hence
      \[
        \Delta_{\mathrm{trim}}^{A}=0.
      \]
      More generally, the relevant requirement is not literal independence but
      small \(\rho_{A,K}\).

    \item Consequently, the mixture upper-bound consequence of
      Theorem~\ref{thm:fixed-threshold-trimmed-coverage} reduces to
      \[
      \begin{aligned}
        \mathbb P\bigl(
          Y_{\mathrm{new}}\in C_\alpha(X_{\mathrm{new}})
          \,\big|\,
          \mathcal G
        \bigr)
        &\ge
        \biggl[1-\alpha
        -(1-\varepsilon)\frac{\rho_{A,K}}{r}
        -\varepsilon D_{Q,+}\biggr]_+ \\
        &\ge
        \biggl[1-\alpha
        -(1-\varepsilon)\frac{\rho_{A,K}}{r}
        -\varepsilon D_Q\biggr]_+.
      \end{aligned}
      \]
      Since \(D_Q\le1\), the worst-case dirty-side version is
      \[
        \mathbb P\bigl(
          Y_{\mathrm{new}}\in C_\alpha(X_{\mathrm{new}})
          \,\big|\,
          \mathcal G
        \bigr)
        \ge
        \biggl[1-\alpha
        -\varepsilon
        -(1-\varepsilon)\frac{\rho_{A,K}}{r}\biggr]_+.
      \]
      In particular, if \(\rho_{A,K}\le c\), then
      \[
        \mathbb P\bigl(
          Y_{\mathrm{new}}\in C_\alpha(X_{\mathrm{new}})
          \,\big|\,
          \mathcal G
        \bigr)
        \ge
        \biggl[1-\alpha
        -\varepsilon
        -(1-\varepsilon)\frac{c}{r}\biggr]_+.
      \]
      In the independence special case this further simplifies to
      \[
        \mathbb P\bigl(
          Y_{\mathrm{new}}\in C_\alpha(X_{\mathrm{new}})
          \,\big|\,
          \mathcal G
        \bigr)
        \ge
        [1-\alpha-\varepsilon]_+.
      \]
      Thus, in the no-separation regime $p_c=p_d=r$, fixed-threshold trimming does not reduce the retained mixture coefficient. In this case, $\tilde\varepsilon_\star$ remains equal to $\varepsilon$. Any further benefit must come from one of two sources. The first is structural information implying $D_{Q,+}<1$. The second is that the retained event is nearly harmless for the clean nonconformity score. Equivalently, $\rho_{A,K}$ must be small.
  \end{enumerate}
\end{theorem}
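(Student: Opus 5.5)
The proof is a chain of substitutions into results already established, taken in the order of the four items. Everything is done conditionally on $\mathcal G$, so $A$, $S$ and $t^\star$ are fixed.

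Items 1 and 2 are direct algebra. With $p_c=p_d=r$ we get $\mu_{\rm keep}=(1-\varepsilon)r+\varepsilon r=r$, which is positive because $r\in(0,1]$. So the retained law $R$ is well defined and the setup of Theorem~\ref{thm:fixed-threshold-trimmed-coverage} applies. Substituting into the formula for $\tilde\varepsilon_\star$ gives $\varepsilon r/r=\varepsilon$; this is exactly the second bullet of Proposition~\ref{thm:separation-dichotomy-main}, and I will cite it.

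For item 3 I will invoke Proposition~\ref{prop:delta-bias-covariance}, whose hypothesis $p_c>0$ holds since $p_c=r$. Its supremum form gives
\[
\Delta_{\rm trim}^A=p_c^{-1}\sup_t\bigl|\mathrm{Cov}_P(\mathbf 1\{A(Z)\le t\},K)\bigr| = \rho_{A,K}/r .
\]
Two corollaries follow at once. The bound $\rho_{A,K}\le c$ gives $\Delta_{\rm trim}^A\le c/r$. Under independence of $A(Z)$ and $S(Z)$, the retention indicator $K$ is a function of $S(Z)$, so every covariance vanishes. One edge case needs a remark: Proposition~\ref{prop:delta-bias-covariance} is stated for finite $t^\star$. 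When $t^\star=+\infty$ we have $K\equiv1$, all covariances are zero, and $\Delta_{\rm trim}^A=0$, so the identity still holds. The pointwise covariance computation in fact does not use finiteness of $t^\star$ at all.

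For item 4 I will start from the componentwise consequence stated after Theorem~\ref{thm:one-sided-transfer-sharpness}, which combines Proposition~\ref{prop:finite-sample-scalar-bound} with the mixture upper bound of Remark~\ref{rem:mixture-upper-bound}:
\[
\mathbb P\{Y_{\rm new}\in C_\alpha(X_{\rm new})\mid\mathcal G\}\ge\bigl[1-\alpha-(1-\tilde\varepsilon_\star)\Delta_{\rm trim,+}^A-\tilde\varepsilon_\star D_{Q,+}\bigr]_+ .
\]
I then insert $\tilde\varepsilon_\star=\varepsilon$ and use the one-sided comparison $\Delta_{\rm trim,+}^A\le\Delta_{\rm trim}^A=\rho_{A,K}/r$. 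The bound is nonincreasing in each loss term, and the map $x\mapsto[x]_+$ is monotone, so replacing losses by larger ones preserves the inequality. The successive displays then follow from the chain $D_{Q,+}\le D_Q\le1$, then $\rho_{A,K}\le c$, then $\rho_{A,K}=0$ in the independence case.

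If $p_d=0$ the dirty term would need special handling, but that cannot occur here because $p_d=r>0$.

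The only real obstacle is bookkeeping. The main point to state carefully is that the covariance identity controls the two-sided distortion $\Delta_{\rm trim}^A$, while the coverage bound only needs the one-sided $\Delta_{\rm trim,+}^A$, which is no larger. Beyond that, the argument is routine substitution.
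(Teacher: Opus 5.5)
Your proposal is correct and matches the paper's proof step for step. Items 1--2 are direct substitution. Item 3 applies Proposition~\ref{prop:delta-bias-covariance} with $p_c=r$. Item 4 inserts $\tilde\varepsilon_\star=\varepsilon$ and $\Delta_{\rm trim,+}^A\le\Delta_{\rm trim}^A=\rho_{A,K}/r$ into the one-sided mixture bound, then uses $D_{Q,+}\le D_Q\le 1$. Your extra remark on $t^\star=+\infty$ is a correct minor tightening the paper leaves implicit, since Proposition~\ref{prop:delta-bias-covariance} is formally stated only for finite $t^\star$.
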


\begin{remark}[Interpretation of the no-separation regime]
  \label{rem:no-separation-fixed-interpretation}
  Theorem~\ref{thm:no-separation-fixed} gives a negative characterisation of the retained-contamination part of the mixture upper-bound diagnostic. It is not a universal impossibility theorem.

  When $p_c=p_d=r$, trimming does not reduce the retained mixture coefficient:
  \begin{equation*}
      \tilde\varepsilon_\star=\varepsilon.
  \end{equation*}
  In the endpoint case $r=1$, no observations are trimmed. Then $K=1$ $P$-almost surely, $\rho_{A,K}=0$, and
  \begin{equation*}
      \Delta_{\mathrm{trim}}^{A}=0.
  \end{equation*}
  However, the retained mixture coefficient remains
  \begin{equation*}
      \tilde\varepsilon_\star=\varepsilon.
  \end{equation*}

  For $r<1$, trimming may also distort the clean nonconformity distribution. This distortion is measured exactly by
  \begin{equation*}
      \Delta_{\mathrm{trim}}^{A}=\frac{\rho_{A,K}}{r}.
  \end{equation*}
  Thus, without retained-probability separation, trimming is not a free lunch. It leaves the retained mixture coefficient unchanged. It can also introduce a covariance-distortion cost.

  The refined dirty-side factor $D_{Q,+}$ identifies the only way the retained dirty law can help in this regime. Its retained nonconformity distribution must itself be close to the clean nonconformity distribution. Therefore, the theorem rules out improvement through the retained contamination coefficient. It does not rule out improvement through a favourable change in the retained dirty score distribution $Q_{\mathrm{keep}}$.
\end{remark}

\begin{proof}
  By assumption \(p_c=p_d=r\), hence
  \[
    \mu_{\mathrm{keep}}
    =
    (1-\varepsilon)p_c+\varepsilon p_d
    =
    (1-\varepsilon)r+\varepsilon r
    =
    r,
  \]
  proving part~(1).

  The retained mixture coefficient is
  \[
    \tilde\varepsilon_\star
    =
    \frac{\varepsilon p_d}{(1-\varepsilon)p_c+\varepsilon p_d}.
  \]
  Substituting \(p_c=p_d=r\) gives
  \[
    \tilde\varepsilon_\star
    =
    \frac{\varepsilon r}{(1-\varepsilon)r+\varepsilon r}
    =
    \varepsilon,
  \]
  proving part~(2).

  By Proposition~\ref{prop:delta-bias-covariance} applied with \(p_c=r\), we
  have the exact identity
  \[
    \Delta_{\mathrm{trim}}^{A}
    =
    \frac{1}{r}
    \sup_{t\in\mathbb R}
    \left|
      \operatorname{Cov}_P
      \bigl(\mathbf 1\{A(Z)\le t\},K\bigr)
    \right|
    =
    \frac{\rho_{A,K}}{r}.
  \]
  This proves part~(3). The bound with \(\rho_{A,K}\le c\) is immediate. If
  \(S(Z)\) and \(A(Z)\) are independent under \(P\), then
  \(K=\mathbf 1\{S(Z)\le t^\star\}\) is independent of each event
  \(\{A(Z)\le t\}\), so all the displayed covariances vanish and
  \(\rho_{A,K}=0\).

  Finally, substitute
  \[
    \tilde\varepsilon_\star=\varepsilon,
    \qquad
    \Delta_{\mathrm{trim}}^{A}=\frac{\rho_{A,K}}{r}
  \]
  into the one-sided mixture upper-bound consequence of Theorem~\ref{thm:fixed-threshold-trimmed-coverage}.  Since \(\Delta_{\mathrm{trim},+}^{A}\le\Delta_{\mathrm{trim}}^{A}=\rho_{A,K}/r\), this gives the refined bound
  \[
    \mathbb P\bigl(
      Y_{\mathrm{new}}\in C_\alpha(X_{\mathrm{new}})
      \,\big|\,
      \mathcal G
    \bigr)
    \ge
    \biggl[1-\alpha
    -(1-\varepsilon)\frac{\rho_{A,K}}{r}
    -\varepsilon D_{Q,+}\biggr]_+.
  \]
  The displayed bound with \(D_Q\) follows from \(D_{Q,+}\le D_Q\).  Since \(D_Q\le1\), the worst-case dirty-side version and its consequences follow.
\end{proof}

\section{Detailed experimental summaries}
\label{app:detailed-experiments}
\subsection{Experimental setup}
The empirical study is organised around the clean-test retained-law identity and mixture upper-bound diagnostic in Theorem~\ref{thm:fixed-threshold-trimmed-coverage}, the contaminated-mixture comparison in Proposition~\ref{prop:mixture-coverage-fixed}, and the ideal- and no-separation benchmark regimes. All simulation summaries use 100 Monte Carlo repetitions. Code to reproduce our experiments is available via this \href{https://github.com/congyewang/When-Does-Trimming-Help-Conformal-Prediction-A-Retained-Law-Diagnostic-under-Calibration-Contaminat}{GitHub link}.

In the regression experiments, the conformal nonconformity score is the absolute residual from a linear predictive backbone. A clean fitting split trains the prediction rule and, when needed, an auxiliary anomaly-score model. An independent clean reference split is used only when the threshold policy requires clean reference threshold selection. The final contaminated calibration sample is then trimmed by the fixed anomaly score and conformalised using the retained calibration scores.

Several experiments use a plug-in Stein score-norm as the anomaly score, following the kernelised Stein discrepancy and related kernel-discrepancy literature \citep{gorham2015measuring, liu2016kernelized, gorham2017kernels, jitkrittum2020conditional, oates2022minimum, wang2023stein}. For a representation $u=T(x)$, a fitted clean score model $\hat s$, and a smooth scalar kernel $k$, define
\begin{equation*}
  S(z)=\sqrt{\max\{h_{\hat s}(T(x),T(x)),0\}},
\end{equation*}
where
\[
  h_{\hat s}(u,v)
  =\hat s(u)^\top k(u,v)\hat s(v)
  +\hat s(u)^\top\nabla_v k(u,v)
  +\hat s(v)^\top\nabla_u k(u,v)
  +\mathrm{tr}\{\nabla_u\nabla_v^\top k(u,v)\}.
\]
The one-dimensional Gaussian experiments use $T(x)=x$. A Gaussian clean score is fitted on the independent clean fitting split
\begin{equation*}
    \hat s(u)=-\frac{u-\hat\mu_X}{\hat\sigma_X^2},
\end{equation*}
where $\hat\mu_X$ and $\hat\sigma_X^2$ are the empirical means and variances of the clean fitting covariates. The kernel is the Gaussian RBF kernel
\begin{equation*}
    k(u,v)=\exp\{-(u-v)^2/(2h^2)\},
\end{equation*}
with bandwidth $h$ chosen by the median heuristic on the transformed clean fitting split.

Thresholds labelled ``Stein $q$'' use the corresponding clean-reference or population $q$-quantile of this anomaly score, as stated in the table caption. In multivariate synthetic geometry checks, the same formula is applied componentwise with $T(x)=x$ and the empirical covariance-scaled Gaussian score. Rows labelled Mahalanobis or GMM Mahalanobis use the stated scores instead of the Stein score-norm.

These rows should be read as one covariate-geometric score implementation. The evaluated coverage statements are the score-agnostic retained-law bounds, not a separate Stein-specific certificate.

The run summaries distinguish \emph{mixture diagnostics} from \emph{finite/audit-style lower bounds}.  The former use knowledge of the simulation laws to evaluate $p_c,p_d,\tilde\varepsilon_\star,\Delta_{\rm trim,+}^A,D_{Q,+}$, and $L_{\rm mix,+}$.  The latter use independent structure/audit samples and concentration corrections.  They are reported as conservative certification diagnostics, not as the exact/scalar theorem bound.  In the displayed experiments, these lower bounds are finite and directionally informative, but remain too conservative to certify nominal $0.9$ coverage.

\subsection{Separation phase diagram}
Table~\ref{tab:phase-diagram} shows a representative slice of the separation phase diagram at $\varepsilon=0.3$ and $m=800$.  When separation is small, the selected method is either ordinary split or a clean-reference trimming rule that retains almost all dirty points.  As separation increases, $p_d$ and $\tilde\varepsilon_\star$ fall, and empirical clean coverage moves into the near-nominal range.

\begin{table}[h]
  \centering
  \scriptsize
  \setlength{\tabcolsep}{3pt}
  \resizebox{\linewidth}{!}{%
  \begin{tabular}{lcccccc}
    \toprule
    Separation & Selected rule & Clean coverage & Width & $p_d$ & $\tilde\varepsilon_\star$ & $L_{\rm mix,+}$ \\
    \midrule
    0.0 & Ordinary split & $0.8593\,[0.8557,0.8629]$ & 1.7720 & 1.0000 & 0.3000 & 0.6555 \\
    0.5 & Stein clean-ref $q=0.950$ & $0.8595\,[0.8558,0.8631]$ & 1.7717 & 0.9601 & 0.3035 & 0.6498 \\
    1.0 & Stein clean-ref $q=0.950$ & $0.8635\,[0.8599,0.8670]$ & 1.7907 & 0.8753 & 0.2841 & 0.6662 \\
    2.0 & Stein clean-ref $q=0.950$ & $0.8806\,[0.8775,0.8836]$ & 1.8731 & 0.4706 & 0.1749 & 0.7550 \\
    4.0 & Stein clean-ref $q=0.950$ & $0.9013\,[0.8987,0.9039]$ & 1.9888 & 0.0064 & 0.0028 & 0.8934 \\
    8.0 & Stein clean-ref $q=0.950$ & $0.9015\,[0.8988,0.9041]$ & 1.9900 & 0.0000 & 0.0000 & 0.8958 \\
    \bottomrule
  \end{tabular}}
  \caption{Separation phase diagram slice at $\varepsilon=0.3$ and $m=800$. The transition is driven by dirty retention $p_d$ and retained mixture coefficient $\tilde\varepsilon_\star$, not by trimming alone.}
  \label{tab:phase-diagram}
\end{table}

\subsection{Clean-reference and threshold-policy diagnostics}
Clean-reference $q=0.950$ thresholding is efficient but can be coverage-aggressive. In the clean-reference concentration experiment, increasing the clean reference size from $64$ to $512$ changes coverage only from $0.8847$ to $0.8872$. The dominant effect is not reference-sample noise, but the population clean distortion induced by $q=0.950$. Table~\ref{tab:threshold-policy} summarises the threshold-policy comparison. The KSD-geometric policy is more conservative and closer to nominal coverage than fixed or clean-reference $q=0.950$, but gives wider intervals.

\begin{table}[h]
  \centering
  \scriptsize
  \setlength{\tabcolsep}{3pt}
  \resizebox{\linewidth}{!}{%
  \begin{tabular}{lcccccc}
    \toprule
    Policy & Clean coverage & Width & $p_c$ & $\Delta_{+}$ & $L_{\rm mix,+}$ & Audit LB$_+$ \\
    \midrule
    Clean-reference $q=0.950$ & $0.8923\,[0.8880,0.8967]$ & 2.9203 & 0.9453 & 0.0153 & 0.8847 & 0.6031 \\
    Fixed $q=0.950$ & $0.8927\,[0.8883,0.8970]$ & 2.9231 & 0.9486 & 0.0145 & 0.8854 & 0.5996 \\
    KSD-geometric & $0.8979\,[0.8937,0.9020]$ & 2.9741 & 0.9861 & 0.0048 & 0.8951 & 0.6138 \\
    Oracle selected & $0.8927\,[0.8883,0.8970]$ & 2.9231 & 0.9486 & 0.0145 & 0.8854 & 0.5996 \\
    \bottomrule
  \end{tabular}}
  \caption{Threshold-policy comparison. The KSD-geometric policy improves the coverage-efficiency trade-off relative to $q=0.950$ rules. However, the audit-style lower bound remains far below nominal coverage.}
  \label{tab:threshold-policy}
\end{table}

Localized threshold policies must also report fallback rates. In the localized certificate experiment, loc4/R30 gives coverage $0.8998$ with fallback rate $0.01$. By contrast, loc6/R30 gives similar coverage, $0.8977$, but with fallback rate $0.96$. The latter should be read as a fallback-driven policy outcome, not as stable success of the primary localized certificate.

\subsection{Large-calibration behavior}
Table~\ref{tab:large-m} studies the nominal $q=0.950$ policy as the calibration size grows, using a separate large-calibration run. This run refits the auxiliary score and threshold policy. It is therefore not the same population-threshold instance as Table~\ref{tab:main-tradeoff}. Dirty-retention numbers should not be compared as if $p_d$ were fixed across the two tables.

Within the large-calibration run, the one-sided mixture diagnostic is approximately $L_{\rm mix,+}=0.8853$. The conservative lower bound rises from $0.7132$ at $m=320$ to $0.8464$ at $m=10000$. This supports the interpretation that finite-sample slack shrinks with calibration size, while the limiting clean-coverage level is governed by retained-law-to-clean-law transfer.

\begin{table}[h]
  \centering
  \scriptsize
  \setlength{\tabcolsep}{4pt}
  \begin{tabular}{rrrrrr}
    \toprule
    $m$ & Clean coverage & Width & $L_{\rm mix,+}$ & Cons. LB$_+$ & Audit LB$_+$ \\
    \midrule
    320 & $0.8878\,[0.8838,0.8918]$ & 2.8659 & 0.8853 & 0.7132 & 0.6560 \\
    1000 & $0.8903\,[0.8880,0.8925]$ & 2.8775 & 0.8853 & 0.7788 & 0.7225 \\
    3000 & $0.8909\,[0.8894,0.8924]$ & 2.8812 & 0.8853 & 0.8191 & 0.7633 \\
    10000 & $0.8888\,[0.8878,0.8898]$ & 2.8601 & 0.8853 & 0.8464 & 0.7910 \\
    \bottomrule
  \end{tabular}
  \caption{Large-calibration behaviour for the nominal Stein $q=0.950$ policy in a separate large-calibration run. Increasing $m$ reduces the slack in the conservative lower bound, but the population mixture diagnostic remains below the nominal $0.9$ level.}
  \label{tab:large-m}
\end{table}

\subsection{Label-only and stress-test regimes}
Label-only contamination has $Q_X=P_X$, so a covariate-only anomaly score has no direct signal for corrupted labels. In the label-only experiment, ordinary split conformal has clean coverage $0.9723$, $0.9998$, and $1.0000$ at $\varepsilon=0.1,0.2,0.3$, with widths $2.7200$, $5.6192$, and $7.8897$, respectively. The issue is therefore not undercoverage, but inefficient overcoverage caused by response-side corruption. This lies outside the favourable score-visible trimming regime.

The stress tests separate the mechanisms. Heavy-tail stress gives Stein clean-reference coverage $0.8978$ with width $2.9616$, suggesting useful trimming. High-leverage stress causes severe overcoverage and very wide intervals for non-trimming baselines. Thus, selecting maximal coverage alone can choose inefficient intervals. Response stress has $p_d\approx p_c$, so covariate anomaly trimming has little leverage over response-side corruption.

\subsection{Non-Gaussian score geometry}
Table~\ref{tab:nongaussian} summarises representative non-Gaussian geometry results. The selected score differs across regimes. This is expected under the score-agnostic theorem. Validity depends on the retained law and score CDFs after the score is fixed. Efficiency and diagnostic quality depend on how well the score matches the clean geometry and the direction of contamination.

\begin{table}[h]
  \centering
  \scriptsize
  \setlength{\tabcolsep}{3pt}
  \resizebox{\linewidth}{!}{%
  \begin{tabular}{lcccccc}
    \toprule
    Setting & Selected score & Clean coverage & Width & $p_d$ & $\tilde\varepsilon_\star$ & $L_{\rm mix,+}$ \\
    \midrule
    Bimodal moderate & Mahalanobis & $0.8918\,[0.8876,0.8960]$ & 1.9351 & 0.4401 & 0.1360 & 0.7822 \\
    Bimodal overlap & GMM Mahalanobis & $0.8948\,[0.8904,0.8991]$ & 1.9516 & 0.3590 & 0.1144 & 0.7999 \\
    Bimodal separated & Stein score-norm & $0.8997\,[0.8962,0.9031]$ & 1.9765 & 0.1985 & 0.0497 & 0.8571 \\
    Banana bridge & GMM Mahalanobis & $0.8961\,[0.8934,0.8989]$ & 1.6758 & $3.4\!\times10^{-5}$ & $9.0\!\times10^{-6}$ & 0.8921 \\
    Two moons gap & Stein score-norm & $0.9708\,[0.9654,0.9761]$ & 2.3689 & 0.1519 & 0.0777 & 0.8787 \\
    \bottomrule
  \end{tabular}}
  \caption{Non-Gaussian score-geometry diagnostics. The Stein score-norm is useful in some settings. Mixture-aware Mahalanobis scores are competitive or better in others. Coverage-best or overcoverage rows should not be interpreted as the shortest valid methods.}
  \label{tab:nongaussian}
\end{table}

\subsection{Compute resources.}
The experiments do not require specialised compute. All reported simulations were run on a personal laptop using Python~3.12, an AMD Ryzen~9 8945HS CPU with Radeon~780M Graphics, and 32GB memory. Reproducing the full set of reported experiments required approximately 40 minutes of wall-clock time. No GPU acceleration was used.

\section{Additional experimental diagnostics}
\label{app:experimental-diagnostics}

\subsection{Reporting status}
\label{app:experimental-code-audit}
The simulation summaries report the one-sided quantities required by Theorem~\ref{thm:fixed-threshold-trimmed-coverage}: $\Delta_{\rm trim,+}^{A}$, $D_{Q,+}$ when the retained dirty component is estimable, the one-sided transfer loss, $L_{\rm mix,+}$, a conservative finite/audit lower bound, and the symmetric diagnostics used in supplementary sensitivity checks.

The manuscript keeps these objects separate. Population diagnostics use knowledge of the simulation laws. Conservative lower bounds use independent audit information and concentration corrections. Empirical coverage is estimated from finite Monte Carlo test sets.

Rows with $p_d=0$ display the retained dirty discrepancy as ``N/A'', rather than as a numerical failure. When $p_d$ is extremely small, $D_{Q,+}$ is a conditional rare-event quantity and may be unstable as a stand-alone diagnostic. In such rows, the dirty contribution $\tilde\varepsilon_\star D_{Q,+}$, or its worst-case upper bound $\tilde\varepsilon_\star$, is more informative for the coverage lower bound.

\subsection{Population diagnostics versus conservative lower bounds}
The one-sided mixture diagnostic can be close to the observed clean coverage. In the main score-visible row, the observed clean coverage is 0.89499, while $L_{\rm mix,+}=0.89638$. The Monte Carlo interval contains the mixture diagnostic, so we interpret this as agreement within simulation error.

By contrast, the conservative finite/audit lower bounds are much smaller. Thus, $L_{\rm mix,+}$ explains the retained-law regime, while the conservative lower bounds are safe but usually too loose to certify nominal 0.9 coverage in these examples.

\subsection{Coverage-best tables}
Coverage-best tables are kept only as appendix diagnostics. They include oracle or empirical coverage-maximising choices, which often select severely over-covering and wide intervals. For example, in non-Gaussian coverage-best summaries, methods with coverage near one can have much wider intervals than fixed or independently tuned policies. These tables should not be described as implementable best-method comparisons. They are maximum empirical-coverage benchmarks.

\subsection{Localized certificates}
The localized-certificate summaries are aggregated at the policy level, rather than split by realised fallback labels. Reporting the fallback rate is essential. In these summaries, loc4/R30 has fallback rate 0.01 and coverage 0.89979, while loc6/R30 has fallback rate 0.96 and coverage 0.89766. The latter mainly reflects the fallback policy. It is not evidence that the primary localized certificate reliably selects the displayed threshold.

\subsection{Recommended table convention}
For experimental tables supporting the retained-law claim, report the method, threshold source, clean coverage with Monte Carlo interval, average width with Monte Carlo interval, $p_c$, $p_d$, $\tilde\varepsilon_\star$, $\Delta_{\rm trim,+}^{A}$, $D_{Q,+}$ or the dirty contribution $\tilde\varepsilon_\star D_{Q,+}$, $L_{\rm mix,+}$, the conservative finite/audit lower bound, and the fallback rate when the threshold policy is adaptive. This convention keeps theoretical transfer quantities, certification diagnostics, and empirical performance separate.

\end{document}